\theoremstyle{plain}
\newtheorem{lemma}{Lemma}
\newtheorem{theorem}{Theorem}
\newtheorem{proposition}{Proposition}
\theoremstyle{definition}
\newcommand{\red}[1]{\textcolor{red}{#1}}
\newcommand{\blue}[1]{\textcolor{blue}{#1}}
\def\AA{\mathcal{A}}\def\CC{\mathcal{C}}
\def\LL{\mathcal{L}}
\def\MM{\mathcal{M}}\def\NN{\mathcal{N}}
\def\PP{\mathcal{P}}
\def\SS{\mathcal{S}}
\def\VV{\mathcal{V}}\def\XX{\mathcal{X}}
\def\YY{\mathcal{Y}}\def\ZZ{\mathcal{Z}}
\def\Eb{\mathbf{E}}
\def\Ib{\mathbf{I}}
\def\Lb{\mathbf{L}}
\def\Pb{\mathbf{P}}
\def\ab{\mathbf{a}}\def\bb{\mathbf{b}}
\def\db{\mathbf{d}}\def\fb{\mathbf{f}}
\def\gb{\mathbf{g}}
\def\pb{\mathbf{p}}\def\qb{\mathbf{q}}\def\rb{\mathbf{r}}
\def\ub{\mathbf{u}}
\def\vb{\mathbf{v}}\def\wb{\mathbf{w}}\def\xb{\mathbf{x}}
\def\Ebb{\mathbb{E}}
\def\Rbb{\mathbb{R}}
\def\Vbb{\mathbb{V}}
\def\vv{\boldsymbol{v}}
\def\R{\Rbb}
\def\oneb{{\mathbf1}}
\def\zerob{{\mathbf0}}
\def\t{\top}
\def\*{\star}
\newcommand{\norm}[1]{ \| #1 \|  }
\newcommand{\lr}[2]{ \left\langle #1, #2 \right\rangle}
\DeclareMathOperator*{\argmin}{arg\,min}
\DeclareMathOperator*{\argmax}{arg\,max}
\def\regret{\textrm{Regret}}
\newcommand{\E}{\Ebb}
\newtheorem*{theorem*}{Theorem}
\newtheorem*{lemma*}{Lemma}
\newtheorem*{proposition*}{Proposition}
\def\mub{{\bm{\mu}}}
\def\pib{{\bm{\pi}}}
\def\Phib{{\bm{\Phi}}}
\def\Psib{{\bm{\Psi}}}
\def\phib{{\bm{\phi}}}
\def\psib{{\bm{\psi}}}
\def\thetab{{\bm{\theta}}}
\def\deltab{{\bm{\delta}}}
\def\EP{\textrm{EP}}
\newcommand{\since}[1]{\text{($\because$ #1)}}
\renewcommand{\red}[1]{\leavevmode{\color{black}{#1}}}
\renewcommand{\blue}[1]{\leavevmode{\color{black}{#1}}}
\begin{document}

%

%

\twocolumn[

\runningtitle{A Reduction from Reinforcement Learning  to Online Learning}

\aistatstitle{
	A Reduction from Reinforcement Learning to \\No-Regret Online Learning
}



\aistatsauthor{ Ching-An Cheng \And Remi Tachet des Combes \And  Byron Boots \And Geoff Gordon}

\aistatsaddress{ Georgia Tech \And  Microsoft Research Montreal \And UW \And Microsoft Research Montreal } ]

\begin{abstract}
%
We present a reduction from reinforcement learning (RL) to no-regret online learning based on the saddle-point formulation of RL, by which \emph{any} online algorithm with sublinear regret can generate policies with provable performance guarantees.
This new perspective decouples the RL problem into two parts: regret minimization and function approximation. 
The first part admits a standard online-learning analysis, and the second part can be quantified independently of the learning algorithm. 
Therefore, the proposed reduction can be used as a tool to systematically design new RL algorithms. 
We demonstrate this idea by devising a simple RL algorithm based on mirror descent and the generative-model oracle.
For any $\gamma$-discounted tabular RL problem, with probability at least $1-\delta$, it learns an $\epsilon$-optimal policy using at most
$\tilde{O}\left(\frac{|\SS||\AA|\log(\frac{1}{\delta})}{(1-\gamma)^4\epsilon^2}\right)$ samples.
Furthermore, this algorithm admits a direct extension to linearly parameterized function approximators for large-scale applications, \red{with computation and sample complexities independent of $|\SS|$,$|\AA|$, though at the cost of potential approximation bias.}
%
 
\end{abstract}

\section{INTRODUCTION}


Reinforcement learning (RL) is a fundamental problem for sequential decision making in unknown environments. 
One of its core difficulties, however, is the need for algorithms to infer long-term consequences based on limited, noisy, short-term feedback.
As a result, designing RL algorithms that are both scalable and provably sample efficient has been challenging.

In this work, we revisit the classic linear-program (LP) formulation of RL~\citep{manne1959linear,denardo1968multichain} in an attempt to tackle this long-standing question. 
%
We focus on the associated saddle-point problem of the LP (given by Lagrange duality), which has recently gained traction due to its potential for computationally efficient algorithms with theoretical guarantees~\citep{wang2016online,chen2016stochastic,wang2017randomized,lee2018stochastic,wang2017primal,lin2017revisiting,dai2017boosting,chen2018scalable,lakshminarayanan2018linearly}. 
But in contrast to these previous works based on stochastic approximation,
here we consider a reformulation through the lens of online learning, i.e. regret minimization. Since the pioneering work of~\citet{gordon1999regret,zinkevich2003online}, online learning has evolved into a ubiquitous tool for systematic design and analysis of iterative algorithms.
Therefore, if we can identify a reduction from RL to online learning, we can potentially leverage it to build efficient RL algorithms.

We will show this idea is indeed feasible. We present a reduction by which \emph{any} no-regret online algorithm, after observing $N$ samples, can find a policy $\hat{\pi}_N$ in a policy class $\Pi$ satisfying
$V^{\hat{\pi}_N}(p)  \geq V^{\pi^*}(p) - o(1) - \epsilon_{\Pi}$, where $V^\pi(p)$ is the accumulated reward of policy $\pi$ with respect to some unknown initial state distribution $p$, $\pi^*$ is the optimal policy, and $\epsilon_{\Pi}\geq0$ is a measure of the expressivity of $\Pi$ (see \cref{sec:function approximation} for definition).


Our reduction is built on a refinement of online learning, called Continuous Online Learning (COL), which was proposed to model problems where loss gradients across rounds change continuously with the learner's decisions~\citep{cheng2019online}. COL has a strong connection to equilibrium problems (EPs)~\citep{blum1994optimization,bianchi1996generalized}, and any monotone EP (including our saddle-point problem of interest) can be framed as no-regret learning in a properly constructed COL problem~\cite{cheng2019online}.
Using this idea, our reduction follows naturally by first converting an RL problem to an EP and then the EP to a COL problem. 

Framing RL as COL reveals new insights into the relationship between approximate solutions to the saddle-point problem and approximately optimal policies.
Importantly, this new perspective shows that the RL problem can be separated into two parts: regret minimization and function approximation. 
The first part admits standard treatments from the online learning literature, and the second part can be quantified \emph{independently} of the learning process. 
For example, one can accelerate learning by adopting optimistic online algorithms~\citep{rakhlin2012online,cheng2018predictor} that account for the predictability in COL, without worrying about function approximators.
Because of these problem-agnostic features, the proposed reduction can be used to systematically design efficient RL algorithms with performance guarantees.

As a demonstration, we design an RL algorithm based on arguably the simplest online learning algorithm: mirror descent. 
Assuming a generative model\footnote{In practice, it can be approximated by running a behavior policy with sufficient exploration~\citep{kearns1999finite}.}, 
we prove that, for \emph{any} tabular Markov decision process (MDP), with probability at least $1-\delta$, this algorithm learns an $\epsilon$-optimal policy for the $\gamma$-discounted accumulated reward, using at most
$\tilde{O}\left(\frac{|\SS||\AA|\log(\frac{1}{\delta})}{(1-\gamma)^4\epsilon^2}\right)$ samples, where $|\SS|$,$|\AA|$ are the sizes of state and action spaces, and $\gamma$ is the discount rate.
Furthermore, thanks to the separation property above, our algorithm admits a natural extension with linearly parameterized function approximators, \red{whose sample and per-round computation complexities are \emph{linear} in the number of parameters and independent of $|\SS|$,$|\AA|$, though at the cost of policy performance bias due to approximation error.}


This sample complexity improves the current best provable rate of the saddle-point RL setup~\citep{wang2016online,chen2016stochastic,wang2017randomized,lee2018stochastic} by a large factor of $\frac{|\SS|^2}{(1-\gamma)^2}$, \emph{without} making any assumption on the MDP.\footnote{\citep{wang2017randomized} has the same sample complexity but requires the MDP to be ergodic under any policy.}
\red{This improvement is attributed to our new online-learning-style analysis that uses a cleverly selected comparator in the regret definition.}
While it is possible to devise a minor modification of the previous stochastic mirror descent algorithms, e.g.~\citep{wang2017randomized}, achieving the same rate with our new analysis, we remark that our algorithm is considerably simpler and removes a projection required in previous work~\citep{wang2016online,chen2016stochastic,wang2017randomized,lee2018stochastic}.

Finally, we do note that the same sample complexity can also be achieved, e.g., by 
model-based RL and (phased) Q-learning~\citep{kearns1999finite,kakade2003sample}. However, these methods either have super-linear runtime, with no obvious route for improvement, or could become unstable when using function approximators without further assumption.

\section{SETUP \& PRELIMINARIES}

%
Let $\SS$ and $\AA$ be state and action spaces, which can be discrete or continuous.
We consider $\gamma$-discounted infinite-horizon problems for $\gamma \in [0,1)$. Our goal is  to find a policy $\pi(a|s)$ that maximizes the discounted average return $V^\pi(p) \coloneqq \E_{s\sim p}[V^\pi(s)]$,
where $p$ is the initial state distribution,
\begin{align} \label{eq:value function}
\textstyle 
\hspace{-2mm}
V^{\pi}(s) \coloneqq (1-\gamma)\E_{\xi \sim \rho_\pi(s)}  \left[\sum_{t=0}^{\infty} \gamma^t r(s_t,a_t)  \right] 
\hspace{-1mm}
\end{align}
is the value function of $\pi$ at state $s$, $r:\SS\times\AA\to[0,1]$ is the reward function,
and $\rho_\pi(s)$ is the distribution of trajectory $\xi = s_0, a_0, s_1, \dots$ generated by running $\pi$ from $s_0=s$ in an MDP. 
We assume that the initial distribution $p(s_0)$, the transition $\PP(s'|s,a)$, and the reward function $r(s,a)$ in the MDP are unknown but can be queried through a \emph{generative model}, i.e. we can sample $s_0$ from $p$, $s'$ from $\PP$, and $r(s,a)$ for any $s\in\SS$ and $a\in\AA$. 
%
We remark that the definition of $V^{\pi}$ in \eqref{eq:value function} contains a $(1-\gamma)$ factor. 
We adopt this setup to make writing more compact. We denote the optimal policy as $\pi^*$ and its value function as $V^*$ for short.


\subsection{Duality in RL} \label{sec:duality and stationarity}

Our reduction is based on the linear-program (LP) formulation of RL. We provide a short recap here (please see \cref{app:different RL setups} and \citep{puterman2014markov} for details).

To show how $\max_\pi V^\pi(p)$ can be framed as a LP, let us define 
the
{average} 
 state distribution under $\pi$,
	$
		d^\pi(s) \coloneqq (1-\gamma) \sum_{t=0}^{\infty} \gamma^t d_t^\pi(s)
	$, where $d_t^\pi$ is the state distribution at time $t$ visited by running $\pi$ from $p$ (e.g. $d_0^\pi = p$). By construction, $d^\pi$ satisfies the stationarity property,
\begin{align} \label{eq:stationarity (distribution)}
	d^\pi(s') = (1-\gamma)p(s') + \gamma \E_{s \sim d^\pi} \E_{a\sim \pi |s } [\PP(s'|s,a)].
\end{align}
With $d^\pi$, we can write $V^\pi(p) =\E_{s \sim d^\pi}\E_{a \sim \pi |s}  \left[ r(s,a)  \right] 
$ and our objective  $\max_\pi V^\pi(p)$ equivalently as:
\begin{align} \label{eq:LP (flow)}
\begin{split}
	 &\textstyle \max_{\mub \in \R^{|\SS||\AA|} : \mub \geq \zerob}  \quad  \rb^\t \mub \\
	&\text{s.t.}\quad  
	 \textstyle(1-\gamma)\pb + \gamma \Pb^\t \mub  = \Eb^\t \mub 
\end{split}
\end{align}
where $\rb \in \R^{|\SS||\AA|}$, $\pb \in \R^{|\SS|}$, and $\Pb \in \R^{|\SS||\AA|\times|\SS|}$ are vector forms of $r$, $p$, and $\PP$, respectively, and $\Eb = \Ib \otimes \oneb \in \R^{|\SS||\AA|\times|\SS|}$ (we use $|\cdot|$ to denote the cardinality of a set, $\otimes$ the Kronecker product, $\Ib\in\R^{|\SS|\times|\SS|}$ is the identity, and $\oneb \in \R^{|\AA|}$ the vector of ones). 
In \eqref{eq:LP (flow)}, $\SS$ and $\AA$ 
\blue{may seem to have finite cardinalities}, but the same formulation extends to countable or even continuous spaces (under proper regularity assumptions; see~\citep{hernandez2012discrete}). 
We adopt this abuse of notation (emphasized by bold-faced symbols) for compactness.

The variable $\mub$ of the LP in \eqref{eq:LP (flow)} resembles a joint distribution $d^\pi(s) \pi(a|s)$. To see this, notice that the constraint in~\eqref{eq:LP (flow)} is reminiscent of~\eqref{eq:stationarity (distribution)}, and implies $\norm{\mub}_1 =1 $, i.e. $\mub$ is a probability distribution. Then one can show $\mu(s,a) = d^\pi(s) \pi(a|s)$ when the constraint is satisfied, which implies that \eqref{eq:LP (flow)} is the same as  $\max_\pi V^\pi(p)$ and its solution $\mub^*$ corresponds to $\mu^*(s,a) = d^{\pi^*}(s) \pi^*(a|s)$ of the optimal policy  $\pi^*$.

As \eqref{eq:LP (flow)} is a LP, it suggests looking at its dual, which turns out to be the classic LP formulation of RL\footnote{Our setup in~\eqref{eq:LP (value)} differs from the classic one in the $(1-\gamma)$ factor in the constraint due to the average setup.},
\begin{align} \label{eq:LP (value)}
\begin{split}
	&\textstyle \min_{\vb\in\Rbb^{|\SS|}}\quad  \pb^\t \vb \\
	&\text{s.t.} \quad  (1-\gamma) \rb + \gamma \Pb \vb \leq \Eb \vb.
\end{split}	
\end{align}
It can be verified that for all $\pb >0$, the solution to~\eqref{eq:LP (value)} satisfies the Bellman equation~\citep{bellman1954theory}
and therefore is the optimal value function $\vb^*$ (the vector form of $V^*$).  We note that, for any policy $\pi$, $V^\pi$ by definition satisfies a stationarity property 
\begin{align} \label{eq:statinoarity (value)}
\hspace{-2mm}	
V^\pi(s) =  \E_{a \sim \pi | s} \left[ (1-\gamma)r(s,a) + \gamma \E_{s'\sim \PP |s,a} \left[ V^\pi(s')  \right]  \right] 
\hspace{-1mm}
\end{align}
which can be viewed as a dual equivalent of~\eqref{eq:stationarity (distribution)} for $d^\pi$. Because, for any $s\in\SS$ and $a\in\AA$, $r(s,a)$ is in $[0,1]$, \eqref{eq:statinoarity (value)} implies $V^\pi(s)$ lies in $[0,1]$ too.

\subsection{Toward RL: the Saddle-Point Setup} \label{sec:saddle-point setup}

The LP formulations above require knowing the probabilities $p$ and $\PP$ and are computationally inefficient. 
When only generative models are available (as in our setup), one can alternatively exploit the duality relationship between the two LPs in \eqref{eq:LP (flow)} and~\eqref{eq:LP (value)}, and frame RL as a saddle-point problem~\citep{wang2016online}. 
Let us define
\begin{align} \label{eq:advantage function}
\textstyle
\ab_{\vb} \coloneqq \rb + \frac{1}{1-\gamma}(\gamma \Pb - \Eb )\vb
\end{align}
as the \emph{advantage function} with respect to $\vb$ (where $\vb$ is not necessarily a value function). Then the Lagrangian connecting the two LPs can be written as 
\begin{align} \label{eq:Lagrangian (normalized)}
\LL(\vb, \mub)
\coloneqq \pb^\t \vb +  \mub^\t \ab_\vb,
\end{align}
which leads to the saddle-point formulation,
\begin{align} \label{eq:saddle-point problem of RL}
\min_{\vb \in \VV} \max_{\mub \in \MM} \LL(\vb, \mub),
\end{align}
where the constraints are
\begin{align} 
\VV &= \{\vb \in \R^{|\SS|}:  \vb \geq 0, \norm{\vb}_\infty \leq 1\}  \label{eq:set of value} \\
\MM &= \{\mub\in \R^{|\SS||\AA|}: \mub \geq 0, \norm{\mub}_1 = 1 \}. \label{eq:set of flow}
\end{align}
The solution to \eqref{eq:saddle-point problem of RL} is exactly $(\vb^*, \mub^*)$, but notice that  extra constraints on the norm of $\mub$ and $\vb$ are introduced in $\VV, \MM$, compared with \eqref{eq:LP (flow)} and~\eqref{eq:LP (value)}.
This is a common practice, which uses known bound on the solutions of \eqref{eq:LP (flow)} and~\eqref{eq:LP (value)} (discussed above) to make the search spaces $\VV$ and $\MM$ in \eqref{eq:saddle-point problem of RL} compact and as small as possible so that optimization converges faster.

Having compact variable sets allows using first-order stochastic methods, such as stochastic mirror descent and mirror-prox~\citep{nemirovski2009robust,juditsky2011solving}, to efficiently solve the problem. These methods only require using the generative model to compute unbiased estimates of the gradients $\nabla_\vb \LL = \bb_\mub$ and $\nabla_\mub \LL = \ab_\vb$, where we define
\begin{align} \label{eq:consistency function}
\textstyle \bb_\mub \coloneqq \pb + \frac{1}{1-\gamma}(\gamma\Pb- \Eb)^\t\mub
\end{align}
as the \emph{balance function} with respect to $\mub$. $\bb_\mub$ measures whether $\mub$ violates the stationarity constraint in~\eqref{eq:LP (flow)} and can be viewed as the dual of $\ab_\vb$.
When the state or action space is too large, one can resort to function approximators to represent $\vb$ and $\mub$, which are often realized by linear basis functions for the sake of analysis~\citep{chen2018scalable}.

\subsection{COL and EPs}
\label{sec:COL and EPs}

Finally, we review the COL setup in~\citep{cheng2019online}, which we will use to design the reduction from the saddle-point problem in \eqref{eq:saddle-point problem of RL} to online learning in the next section.

Recall that an online learning problem describes the iterative interactions between a learner and an opponent. In round $n$, the learner chooses a decision $x_n$ from a decision set $\XX$, the opponent chooses a per-round loss function $l_n:\XX \to \R$ based on the learner's decisions, and then information about $l_n$ (e.g. its gradient $\nabla l_n(x_n)$) is revealed to the learner. 
The performance of the learner is usually measured in terms of regret with respect to some $x'\in\XX$,
\begin{align*}
\textstyle
\regret_N(x')\coloneqq \sum_{n=1}^{N} l_n(x_n) -  \sum_{n=1}^N l_n(x').
\end{align*}
When $l_n$ is convex and $\XX$ is compact and convex, many no-regret (i.e. $\regret_N(x') = o(N)$) algorithms are available, such as mirror descent and follow-the-regularized-leader~\citep{cesa2006prediction,shalev2012online,hazan2016introduction}.


COL is a subclass of online learning problems where the loss sequence changes continuously with respect to the played decisions of the learner~\citep{cheng2019online}. 
%
In COL, the opponent is equipped with a bifunction $f:(x,x') \mapsto f_{x}(x')$, where any fixed $x'\in\XX$, 
$\nabla f_x (x')$ is continuous in $x \in \XX$. 
The opponent selects per-round losses based on $f$, but the learner does not know $f$: in round $n$, if the learner chooses $x_n$, the opponent sets
\begin{align} \label{eq:regular per-round loss_main}
l_n(x) = f_{x_n} (x),
\end{align}
and returns, e.g., a stochastic estimate of $\nabla l_n(x_n)$ (the regret is still measured in terms of the noise-free $l_n$).

In \citep{cheng2019online}, a natural connection is shown between COL and equilibrium problems (EPs). As EPs include the saddle-point problem of interest, we can use this idea to turn \eqref{eq:saddle-point problem of RL} into a COL problem.
Recall an EP is defined as follows: Let $\XX$ be compact and $F: (x, x') \mapsto F(x,x')$ be a bifunction s.t. $\forall x,x'\in\XX$, $F(\cdot, x')$ is continuous, $F(x, \cdot)$ is convex, and $F(x,x)\geq0$.\footnote{We restrict ourselves to this convex and continuous case as it is sufficient for our problem setup.} The problem $\EP(\XX,F)$ aims to find $x^\* \in \XX$ s.t.
\begin{align} \label{eq:EP}
F(x^\*,x) \geq 0, \qquad  \forall x \in \XX.
\end{align}
By its definition, a natural residual function to quantify the quality of an approximation solution $x$ to EP is 
$
r_{ep}(x) \coloneqq - \min_{x'\in\XX}F(x,x')  
$
which describes the degree to which~\eqref{eq:EP} is violated at $x$.
We say a bifunction $F$ is \emph{monotone} if, $\forall x,x'\in\XX$, $F(x,x') + F(x',x)  \leq 0$, and \emph{skew-symmetric} if the equality holds.

EPs with monotone bifunctions represent general convex problems, including convex optimization problems, saddle-point problems, variational inequalities, etc. For instance, a convex-concave problem $\min_{y\in\YY} \max_{z\in\ZZ} \phi(y, z)$ can be cast as $\EP(\XX,F)$ with $\XX = \YY\times\ZZ$ and the skew-symmetric bifunction \citep{jofre2014variational}
\begin{align} \label{eq:EP bifunction_main}
F(x,x') \coloneqq  \phi(y', z) - \phi(y, z'),
\end{align}
where $x = (y,z)$ and $x' = (y',z')$. In this case, $r_{ep}(x) = \max_{z'\in\ZZ} \phi(y,z') - \min_{y'\in\YY}\phi(y',z)$ is the duality gap.

\citet{cheng2019online} show that a learner achieves sublinear dynamic regret in COL if and only if the same algorithm can solve $\EP(\XX,F)$ with $F(x,x') = f_x(x') - f_x(x)$. Concretely, they show that, given a monotone $\EP(\XX,F)$ with $F(x,x)=0$ (which is satisfied by \eqref{eq:EP bifunction_main}), one can construct a COL problem by setting $f_{x'}(x) \coloneqq F(x',x)$, i.e. $l_n(x) = F(x_n, x)$, such that any no-regret algorithm can generate an approximate solution to the EP. 
\begin{proposition}\label{pr:regret and residue} 
	{\normalfont\citep{cheng2019online}}
	If $F$ is \emph{skew-symmetric} and $l_n(x) = F(x_n, x)$, then $r_{ep}(\hat{x}_N) \leq \frac{1}{N}\regret_N$, where $\regret_N = \max_{x\in\XX}\regret_N(x)$, and $\hat{x}_N = \frac{1}{N} \sum_{n=1}^N x_n$; the same guarantee holds also for the best decision in $\{x_n\}_{n=1}^N$.
\end{proposition}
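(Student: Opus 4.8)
The plan is to express both the regret and the equilibrium residual $r_{ep}$ as worst-case-over-comparator quantities built only from the values $F(x_n,\cdot)$, and then to bridge them by Jensen's inequality. Two facts make this work. First, skew-symmetry forces $F(x,x)=0$ (set $x'=x$ in $F(x,x')+F(x',x)=0$), which annihilates the realized-loss term $\sum_n l_n(x_n)$ in the regret. Second, convexity of $F$ in its second argument, used together with skew-symmetry, lets the averaged iterate $\hat{x}_N$ be moved inside $F$.

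Concretely, I would proceed in three short steps. (1) Since $l_n(x_n)=F(x_n,x_n)=0$, for every $x'\in\XX$ we have $\regret_N(x') = \sum_{n=1}^N l_n(x_n) - \sum_{n=1}^N l_n(x') = -\sum_{n=1}^N F(x_n,x')$, and hence $\regret_N = \max_{x'\in\XX}\bigl(-\sum_{n=1}^N F(x_n,x')\bigr)$. (2) Fix $x'\in\XX$. Because $\XX$ is convex, $\hat{x}_N=\frac1N\sum_{n=1}^N x_n\in\XX$, and because $F(x',\cdot)$ is convex, Jensen gives $F(x',\hat{x}_N)\le\frac1N\sum_{n=1}^N F(x',x_n)$; applying skew-symmetry termwise ($F(x',x_n)=-F(x_n,x')$ and $F(x',\hat{x}_N)=-F(\hat{x}_N,x')$) turns this into
\begin{align*}
-F(\hat{x}_N,x') \;\leq\; -\tfrac{1}{N}\sum_{n=1}^N F(x_n,x') \;=\; \tfrac{1}{N}\regret_N(x') \;\leq\; \tfrac{1}{N}\regret_N .
\end{align*}
(3) The right-hand side does not depend on $x'$, so taking the supremum over $x'\in\XX$ on the left yields $r_{ep}(\hat{x}_N) = -\min_{x'\in\XX}F(\hat{x}_N,x') = \sup_{x'\in\XX}\bigl(-F(\hat{x}_N,x')\bigr) \leq \frac{1}{N}\regret_N$, which is the stated bound for the averaged iterate; the claim for the best decision in $\{x_n\}_{n=1}^N$ is obtained along the same lines (as in \citep{cheng2019online}).

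I do not expect a genuine obstacle: the argument is a two-line manipulation once one has spotted that skew-symmetry simultaneously kills the diagonal term $\sum_n l_n(x_n)$ and converts convexity of $F(x,\cdot)$ into the inequality needed for $F(\cdot,x')$. The one point requiring care is that the comparator set in $\regret_N=\max_{x\in\XX}\regret_N(x)$ must be all of $\XX$, since the supremum over $x'$ in step (3) ranges over all of $\XX$ and would not be controlled by a smaller comparator class. Note also that full monotonicity of $F$ and the COL structure are not used for this particular statement --- only $F(x,x)=0$, convexity of $F$ in its second argument, and skew-symmetry enter.
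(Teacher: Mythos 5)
Your proof is correct and follows essentially the same route the paper uses (the chain $-F(\hat{x}_N,x') = F(x',\hat{x}_N) \le \frac{1}{N}\sum_n F(x',x_n) = -\frac{1}{N}\sum_n F(x_n,x') = \frac{1}{N}\regret_N(x')$ appears verbatim in the proof of Theorem 1, which the paper says follows the proof idea of this proposition). The only addition you make is to spell out explicitly that skew-symmetry kills the diagonal term $F(x_n,x_n)=0$ and that convexity of $\XX$ is needed for $\hat{x}_N\in\XX$, both of which are implicit in the paper's argument.
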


%
%
%

\section{AN ONLINE LEARNING VIEW} \label{sec:revist}

We present an alternate online-learning perspective on the saddle-point formulation in \eqref{eq:saddle-point problem of RL}. 
This analysis paves a way for of our reduction in the next section. By reduction, we mean realizing the two steps below: %
\begin{enumerate}
\item
Define a sequence of online losses such that any algorithm with sublinear regret can produce an approximate solution to the saddle-point problem.
\item 
Convert the approximate solution in the first step to an approximately optimal policy in RL.
\end{enumerate}

Methods to achieve these two steps individually are not new.
The reduction from convex-concave problems to no-regret online learning is well known~\citep{abernethy2011blackwell}. Likewise, the relationship between the approximate solution of \eqref{eq:saddle-point problem of RL} and policy performance is also available; this is how the saddle-point formulation~\citep{wang2017randomized} works in the first place. So couldn't we just use these existing approaches? 
We argue that purely combining these two techniques fails to fully capture important structure that resides in RL. 
While this will be made precise in the later analyses, we highlight the main insights here.

Instead of treating \eqref{eq:saddle-point problem of RL} as an \emph{adversarial} two-player online learning problem~\citep{abernethy2011blackwell}, we adopt the recent reduction to COL~\cite{cheng2019online} reviewed in \cref{sec:COL and EPs}.
The main difference is that the COL approach takes a single-player setup and retains the Lipschitz continuity in the source saddle-point problem. 
This single-player perspective is in some sense cleaner and, as we will show in \cref{sec:function approximation}, provides a simple setup to analyze effects of function approximators. 
Additionally, due to continuity, the losses in COL are predictable and therefore make designing fast algorithms possible. 

With the help of the COL reformulation, we study the relationship between the approximate solution to~\eqref{eq:saddle-point problem of RL} 
and the performance of the associated policy in RL. 
We are able to establish a tight bound between the residual and the performance gap, resulting in a large improvement of $\frac{|\SS|^2}{(1-\gamma)^2}$ in sample complexity compared with the best bounds in the literature of the saddle-point setup, \emph{without} adding extra constraints on $\XX$ and  assumptions on the MDP. Overall, this means that \emph{stronger} sample complexity guarantees can be attained by \emph{simpler} algorithms, as we demonstrate in \cref{sec:algorithm demos}.


%
%

The missing proofs of this section are in \cref{app:missing proofs}.

\subsection{The COL Formulation of RL}

First, let us exercise the above COL idea with the saddle-point formulation of RL in \eqref{eq:saddle-point problem of RL}. 
To construct the EP, we can let $\XX = \{x=(\vb, \mub): \vb\in\VV, \mub\in\MM \}$, which is compact.
According to~\eqref{eq:EP bifunction_main}, the bifunction $F$ of the associated $\EP(\XX,F)$ is naturally given as 
\begin{align} \label{eq:bifunction (RL)}
F(x, x') 
&\coloneqq \LL(\vb', \mub) - \LL(\vb, \mub') \nonumber \\
&= \pb^\t \vb' +  \mub^\t \ab_{\vb'} - \pb^\t \vb -  \mub'^\t \ab_\vb
\end{align}
which is skew-symmetric, and $x^* \coloneqq (\vv^*, \mub^*)$ is a solution to $\EP(\XX,F)$.
This identification gives us a COL problem with the loss in the $n$th round defined as
\begin{align} \label{eq:per-round loss of RL}
l_n(x) \coloneqq \pb^\t \vb +  \mub_n^\t \ab_{\vb} - \pb^\t \vb_n -  \mub^\t \ab_{\vb_n}
\end{align}
where $x_n = (\vb_n, \mub_n)$.
We see $l_n$ is a linear loss.
Moreover, because of the continuity in $\LL$, it is {predictable}, i.e. $l_n$ can be (partially) inferred from past feedback as the MDP involved in each round is the same.

\subsection{Policy Performance and Residual}

By \cref{pr:regret and residue}, any no-regret algorithm, when applied to \eqref{eq:per-round loss of RL}, provides guarantees in terms of the residual function $r_{ep}(x)$ of the EP. 
But this is not the end of the story.
We also need to relate the learner decision $x \in \XX$ to a policy $\pi$ in RL and then convert bounds on $r_{ep}(x)$ back to the policy performance $V^\pi(p)$. 
Here we follow the common rule in the literature and associate each $x = (\vb, \mub) \in \XX$ with a policy $\pi_\mub$ defined as
\begin{align}\label{eq:policy from flow}
\pi_\mub(a|s) \propto \mu(s,a).
\end{align}
\blue{In the following, we relate the residual $r_{ep}(x)$ to the performance gap $V^*(p) - V^{\pi_{\mub}}(p)$ 
through a relative performance measure defined as
\begin{align} \label{eq:relative residual}
r_{ep}(x;x') \coloneqq F(x,x) - F(x,x') =  - F(x,x')
\end{align} 
for $x,x'\in\XX$, where the last equality follows from the skew-symmetry of $F$ in \eqref{eq:bifunction (RL)}.
Intuitively, we can view $r_{ep}(x;x')$ as comparing the performance of $x$ with respect to the comparator $x'$ under an optimization problem proposed by $x$, e.g. we have $l_n(x_n) - l_n(x') =  r_{ep}(x_n;x')$. And by the definition in \eqref{eq:relative residual}, it holds that $r_{ep}(x;x') \leq \max_{x'\in\XX}- F(x,x') = r_{ep}(x)$.
}
%

We are looking for inequalities in the form $V^*(p) - V^{\pi_\mub}(p) \leq \kappa(r_{ep}(x;x'))$ that hold for \emph{all} $x \in \XX$ with some strictly increasing function $\kappa$ and some $x'\in\XX$, so we can get \emph{non-asymptotic} performance guarantees once we combine the two steps described at the beginning of this section.
\blue{For example, by directly applying results of \citep{cheng2019online} to the COL in \eqref{eq:per-round loss of RL}, we get
$V^*(p) - V^{\hat{\pi}_N}(p) \leq \kappa(\frac{\regret_N}{N})$, where $\hat{\pi}_N$ is the policy associated with the average/best decision in $\{x_n\}^N_{1 = n}$.}

\subsubsection{The Classic Result} \label{sec:the classic result}

Existing approaches (e.g.~\citep{chen2016stochastic,wang2017randomized,lee2018stochastic}) to the saddle-point point formulation in \eqref{eq:saddle-point problem of RL} rely on the relative residual $r_{ep}(x;x^*)$ with respect to the optimal solution to the problem $x^*$,  which we restate in our notation.
\begin{restatable}{proposition}{RoughResidueBound}
\label{pr:rough residual bound}
For any $x=(\vb,\mub)\in\XX$, if $\Eb^\t \mub \geq (1-\gamma) \pb$,  $r_{ep}(x;x^*) \geq  (1-\gamma) \min_s p(s) \norm{\vb^* - \vb^{\pi_\mub}}_\infty$.
\end{restatable}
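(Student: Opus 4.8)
The plan is to compute $-F(x,x^*)$ explicitly using the definition of the bifunction in \eqref{eq:bifunction (RL)} and the structure of the advantage function, and then to lower-bound the result in terms of $\norm{\vb^* - \vb^{\pi_\mub}}_\infty$. Writing $x=(\vb,\mub)$ and $x^* = (\vb^*,\mub^*)$, by \eqref{eq:relative residual} and \eqref{eq:bifunction (RL)} we have $r_{ep}(x;x^*) = -F(x,x^*) = \pb^\t \vb + \mub^\t \ab_\vb - \pb^\t\vb^* - (\mub^*)^\t\ab_\vb$; one should regroup this into a term that measures how far $\vb$ is from being the optimal value function and a term that reflects the policy induced by $\mub$. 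First I would recall that $\mub^* = d^{\pi^*}\pi^*$ satisfies the stationarity constraint exactly, so $(\mub^*)^\t\ab_\vb = \rb^\t\mub^* + \tfrac{1}{1-\gamma}(\mub^*)^\t(\gamma\Pb - \Eb)\vb = V^*(p) - \pb^\t\vb$ after using $\Eb^\t\mub^* - \gamma\Pb^\t\mub^* = (1-\gamma)\pb$. Then the $\pb^\t\vb$ terms telescope and the residual collapses to something like $\mub^\t \ab_\vb + (\pb^\t\vb^* - V^*(p)) = \mub^\t\ab_\vb$ (since $V^*(p) = \pb^\t\vb^*$), so $r_{ep}(x;x^*) = \mub^\t \ab_\vb$.

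The next step is to interpret $\mub^\t\ab_\vb$ in terms of the policy $\pi_\mub$. Let $\nu(s) = \sum_a \mu(s,a)$ be the state-marginal of $\mub$, so $\mu(s,a) = \nu(s)\pi_\mub(a|s)$. Unfolding the definition $\ab_\vb = \rb + \tfrac{1}{1-\gamma}(\gamma\Pb - \Eb)\vb$, the quantity $\mub^\t\ab_\vb$ equals $\sum_s \nu(s)\sum_a\pi_\mub(a|s)\big(r(s,a) + \tfrac{\gamma}{1-\gamma}\E_{s'\sim\PP|s,a}[\vb(s')] - \tfrac{1}{1-\gamma}\vb(s)\big)$. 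Using the stationarity identity \eqref{eq:statinoarity (value)} for $\vb^{\pi_\mub}$ — namely $\vb^{\pi_\mub}(s) = \E_{a\sim\pi_\mub|s}[(1-\gamma)r(s,a) + \gamma\E_{s'\sim\PP|s,a}\vb^{\pi_\mub}(s')]$ — I would rewrite $\sum_a\pi_\mub(a|s)(r(s,a) + \tfrac{\gamma}{1-\gamma}\E\vb(s'))$ as $\tfrac{1}{1-\gamma}\vb^{\pi_\mub}(s)$ plus a correction involving the Bellman-operator-type difference applied to $\vb - \vb^{\pi_\mub}$. Concretely this should yield $r_{ep}(x;x^*) = \tfrac{1}{1-\gamma}\sum_s\nu(s)\big(\vb^{\pi_\mub}(s) - \vb(s)\big) + \tfrac{\gamma}{1-\gamma}\sum_s\nu(s)\E_{a,s'}[\vb(s') - \vb^{\pi_\mub}(s')]$, i.e. a difference of two weighted averages of $\vb - \vb^{\pi_\mub}$ under $\nu$ and under the $\nu$-push-forward $\Pb_{\pi_\mub}^\t\nu$.

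To extract a clean lower bound I would argue: since $\vb \le \vb^*$ componentwise is \emph{not} assumed, instead I'd lower-bound $\vb^{\pi_\mub} \le \vb^*$ (always true) and combine the two $\nu$-terms. The key algebraic observation is that the constraint $\Eb^\t\mub \ge (1-\gamma)\pb$ means $\nu(s) \ge (1-\gamma)p(s)$ for every $s$, and $\nu$ together with its push-forward differs exactly so that the combination telescopes back to $(1-\gamma)p$ plus a nonnegative residual — this mirrors the derivation of \eqref{eq:stationarity (distribution)} run in reverse. The plan is to show that $r_{ep}(x;x^*) \ge \sum_s p(s)(\vb^{\pi_\mub}(s) - \vb^*(s)) + (\text{nonneg.})$ is the \emph{wrong} sign, so one instead wants $r_{ep}(x;x^*)\ge (1-\gamma)\min_s p(s)\,\|\vb^* - \vb^{\pi_\mub}\|_\infty$ to come from picking out the single coordinate $s^\dagger$ achieving the max of $|\vb^* - \vb^{\pi_\mub}|$ and using $\vb^{\pi_\mub}\le\vb^*$ to fix the sign; the flow constraint gives the weight $\nu(s^\dagger)\ge(1-\gamma)p(s^\dagger)\ge(1-\gamma)\min_s p(s)$ on that coordinate, while the $\gamma$-discounted push-forward term must be shown to be $\ge 0$ (again via $\vb^{\pi_\mub}\le\vb^*$ together with... ). \textbf{I expect the main obstacle to be this sign-control step:} carefully showing the cross term $\tfrac{\gamma}{1-\gamma}\sum_s\nu(s)\E_{a,s'}[\vb(s')-\vb^{\pi_\mub}(s')]$ does not cancel the gain from the $\nu$-term, which likely requires either an extra use of $\vb\ge\vb^{\pi_\mub}$ on the relevant coordinates or a more clever telescoping that exploits the full stationarity structure rather than a crude per-coordinate bound. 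If the direct manipulation stalls, a fallback is to use $r_{ep}(x;x^*)=\mub^\t\ab_\vb$ together with a performance-difference lemma relating $\mub^\t\ab_\vb$ to $\E_{s\sim\nu}[\vb^{\pi_\mub}(s) - \vb(s)] $-type quantities and then bound using $\vb^{\pi_\mub}\le\vb^*$ and $\nu\ge(1-\gamma)p$ directly.
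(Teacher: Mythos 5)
Your very first algebraic step is wrong, and it derails the rest of the argument. With $F(x,x') = \pb^\t \vb' + \mub^\t \ab_{\vb'} - \pb^\t \vb - \mub'^\t \ab_\vb$, setting $x'=x^*$ gives $-F(x,x^*) = \pb^\t\vb + (\mub^*)^\t\ab_{\vb} - \pb^\t\vb^* - \mub^\t\ab_{\vb^*}$; you have swapped the subscripts and written $\mub^\t\ab_\vb - (\mub^*)^\t\ab_\vb$ instead. Carrying out the substitution $(\mub^*)^\t\ab_\vb = \pb^\t\vb^* - \pb^\t\vb$ (which you correctly derive from the stationarity of $\mub^*$) in the \emph{correct} expansion collapses the residual to $r_{ep}(x;x^*) = -\mub^\t\ab_{\vb^*}$ --- the $\mub$-weighted advantage with respect to the \emph{optimal} value function --- and not to $\mub^\t\ab_{\vb}$. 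This is exactly \cref{lm:residual}. A quick sanity check would have caught the error: the target bound involves only $\vb^*$ and $\vb^{\pi_\mub}$, never the arbitrary first component $\vb$ of $x$, so the residual cannot reduce to a quantity such as $\mub^\t\ab_\vb$ that genuinely depends on $\vb$.

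Because of this, your entire second half analyzes the wrong object ($\vb - \vb^{\pi_\mub}$ rather than $\vb^* - \vb^{\pi_\mub}$), and the ``sign-control obstacle'' you flag is a symptom of that: there is no way to control $\vb - \vb^{\pi_\mub}$ for an arbitrary $\vb\in\VV$. The paper's route, starting from $-\mub^\t\ab_{\vb^*}$, is: use the stationarity of $\vb^{\pi_\mub}$ to write $\mub^\t\ab_{\vb^*} = \frac{1}{1-\gamma}\db^\t(\Ib - \gamma\Pb_{\pi_\mub})(\vb^{\pi_\mub}-\vb^*)$ with $\db = \Eb^\t\mub$; observe that the Bellman inequality for $\vb^*$ together with the stationarity of $\vb^{\pi_\mub}$ makes the vector $(\Ib-\gamma\Pb_{\pi_\mub})(\vb^{\pi_\mub}-\vb^*)$ componentwise nonpositive, so the hypothesis $\db\geq(1-\gamma)\pb$ lets you replace $\frac{1}{1-\gamma}\db$ by $\pb$ in an upper bound; finally $\pb^\t u \leq -\min_s p(s)\norm{u}_\infty$ for $u\leq 0$, combined with $\norm{(\Ib-\gamma\Pb_{\pi_\mub})u}_\infty \geq (1-\gamma)\norm{u}_\infty$, yields the claim. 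The two ingredients entirely absent from your plan are this componentwise nonpositivity (which is what legitimizes both the weight substitution and the single-coordinate extraction) and the $(1-\gamma)$ lower bound on the operator $\Ib-\gamma\Pb_{\pi_\mub}$ in the $\infty$-norm.
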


Therefore, although the original saddle-point problem in \eqref{eq:saddle-point problem of RL} is framed using $\VV$ and $\MM$, in practice, an extra constraint, such as $\Eb^\t\mub \geq (1-\gamma)\pb$, is added into $\MM$, i.e. these algorithms consider instead  
\begin{align}\label{eq:restricuted set of flow}
\MM' &= \{\mub\in \R^{|\SS||\AA|}: \mub\in\MM, \Eb^\t\mub \geq (1-\gamma)\pb \},
\end{align}
so that the marginal of the estimate $\mub$ can have the sufficient coverage required in \cref{pr:rough residual bound}. This condition is needed to establish non-asymptotic guarantees on the performance of the policy generated by $\mub$~\citep{wang2016online,wang2017randomized,lee2018stochastic}, but it can sometimes be impractical to realize, e.g., when $\pb$ is unknown. Without it, extra assumptions (like ergodicity~\citep{wang2017randomized}) on the MDP are needed.

%
%

However, \cref{pr:rough residual bound} is undesirable for a number of reasons. First, the bound is quite conservative, as it concerns the uniform error $\norm{\vb^* - \vb^{\pi_\mub} }_\infty$ whereas the objective in RL is about the gap $V^*(p) - V^{\pi_{\mub}}(p) =  \pb^\t (\vb^*-\vb^{\pi_\mub})$ with respect to the initial distribution $p$ (i.e. a weighted error).
Second, the constant term  $(1-\gamma) \min_s p(s)$ can be quite small (e.g. when $p$ is uniform, it is $\frac{1-\gamma}{|\SS|}$) which can significantly amplify the error in the residual. 
Because a no-regret algorithm typically decreases the residual in $O(N^{-1/2})$ after seeing $N$ samples, the factor of $\frac{1-\gamma}{|\SS|}$ earlier would turn into a multiplier of $\frac{|\SS|^2}{(1-\gamma)^2}$ in sample complexity.
This makes existing saddle-point approaches sample inefficient in comparison with other RL methods like Q-learning~\citep{kakade2003sample}. 
Lastly, enforcing $\Eb^\t \mub \geq (1-\gamma) \pb$ requires knowing $\pb$ (which is unavailable in our setup) and adds extra projection steps during optimization. When $\pb$ is unknown, while it is possible to modify this constraint to use a uniform distribution, this might worsen the constant factor and could introduce bias.

One may conjecture that the bound in \cref{pr:rough residual bound} could perhaps be tightened by better analyses. However, we prove this is impossible in general. 
\begin{restatable}{proposition}{ResidueLowerBound} \label{pr:residual lower bound}
There is a class of MDPs such that, for some $x\in\XX$, \cref{pr:rough residual bound} is an equality.
\end{restatable}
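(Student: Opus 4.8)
The plan is to first recall which inequalities are used to prove \cref{pr:rough residual bound}, and then hand-build a small MDP on which all of them are simultaneously equalities. Writing $d_\mub(s)\coloneqq\sum_a\mu(s,a)=(\Eb^\t\mub)(s)$ and using that $\mub^*$ satisfies the stationarity constraint of \eqref{eq:LP (flow)} (so $\mub^{*\t}(\gamma\Pb-\Eb)\vb=-(1-\gamma)\pb^\t\vb$ and $\mub^{*\t}\rb=V^*(p)$), a short computation gives $r_{ep}(x;x^*)=\LL(\vb,\mub^*)-\LL(\vb^*,\mub)=-\mub^\t\ab_{\vb^*}=\sum_s d_\mub(s)\,\E_{a\sim\pi_\mub|s}[-\ab_{\vb^*}(s,a)]$, where every term is nonnegative since $\ab_{\vb^*}\le 0$. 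From here \cref{pr:rough residual bound} follows by three estimates: (i) keep only the summand at $s^\dagger\coloneqq\argmax_s\E_{a\sim\pi_\mub|s}[-\ab_{\vb^*}(s,a)]$; (ii) bound $d_\mub(s^\dagger)\ge(1-\gamma)p(s^\dagger)\ge(1-\gamma)\min_s p(s)$ using the hypothesis $\Eb^\t\mub\ge(1-\gamma)\pb$; (iii) use the performance-difference identity $V^*(\bar s)-V^{\pi_\mub}(\bar s)=-\E_{s\sim d^{\pi_\mub}_{\bar s},\,a\sim\pi_\mub}[\ab_{\vb^*}(s,a)]$ at the worst state $\bar s\coloneqq\argmax_s(V^*(s)-V^{\pi_\mub}(s))$, together with $\sum_s d^{\pi_\mub}_{\bar s}(s)=1$, to get $\E_{a\sim\pi_\mub|s^\dagger}[-\ab_{\vb^*}(s^\dagger,a)]\ge\norm{\vb^*-\vb^{\pi_\mub}}_\infty$. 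I want an instance forcing equality in (i), (ii) and (iii) at once.

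I would use a two-state, two-action MDP. Let $\SS=\{1,2\}$ and $\AA=\{a,b\}$; make state $2$ absorbing with zero reward under both actions, and at state $1$ let action $a$ give reward $1$ and self-loop while action $b$ gives reward $0$ and also self-loops. Take the initial distribution $p$ with $p(1)=\min_s p(s)$ (e.g. uniform). A Bellman computation yields $\vb^*=(1,0)$, so $\ab_{\vb^*}$ vanishes everywhere except $\ab_{\vb^*}(1,b)=-1$, and $\mub^*$ exists as a deterministic flow of the greedy policy. Now set $x=(\vb,\mub)$ with $\pi_\mub(\cdot\,|\,1)$ the point mass on $b$, $\mu(1,b)=(1-\gamma)p(1)$, $\mu(1,a)=0$, and $\mu(2,a)+\mu(2,b)=1-(1-\gamma)p(1)$ split arbitrarily; pick any $\vb\in\VV$ (say $\vb=\zerob$), since $r_{ep}(x;x^*)$ does not depend on $\vb$. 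One checks $\mub\ge0$, $\norm{\mub}_1=1$, and $\Eb^\t\mub=\big((1-\gamma)p(1),\,1-(1-\gamma)p(1)\big)\ge(1-\gamma)\pb$, so $x\in\XX$ and the hypothesis of \cref{pr:rough residual bound} holds.

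Then I would evaluate both sides. On the left, $r_{ep}(x;x^*)=-\mub^\t\ab_{\vb^*}=\mu(1,b)=(1-\gamma)p(1)$, since the only nonzero entry of $\ab_{\vb^*}$ is at $(1,b)$ and the state-$2$ block of $\mub$ multiplies zeros. On the right, $\pi_\mub$ plays $b$ forever from state $1$, so $V^{\pi_\mub}(1)=V^{\pi_\mub}(2)=0$, i.e. $\vb^{\pi_\mub}=\zerob$ and $\norm{\vb^*-\vb^{\pi_\mub}}_\infty=1$; hence $(1-\gamma)\min_s p(s)\norm{\vb^*-\vb^{\pi_\mub}}_\infty=(1-\gamma)p(1)$, matching the left side exactly.

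The one place that takes thought — the main obstacle, such as it is — is making step (iii) tight: this forces action $b$ at state $1$ to self-loop rather than fall through to the absorbing state, so that the discounted occupancy $d^{\pi_\mub}_{\bar s}$ from the worst state is the point mass $\delta_{\bar s}$ (a transition to state $2$ would spread mass and make (iii) strict). The absorbing zero-reward state $2$ then does double duty: it contributes $0$ to $r_{ep}$, killing the slack in (i), and it soaks up almost all of $\mub$'s mass so $d_\mub(1)$ can be squeezed down to exactly $(1-\gamma)p(1)$; choosing $p$ with $p(1)=\min_s p(s)$ makes (ii) tight. Since $\gamma\in[0,1)$ and $p$ are free, this is genuinely a class of MDPs, and the remaining verifications are one-line linear-algebra and Bellman checks.
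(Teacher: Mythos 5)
Your construction is correct and takes essentially the same approach as the paper: an explicit small MDP in which $\pi_\mub$ is suboptimal at exactly one state, $\mub$'s marginal saturates $\Eb^\t\mub \geq (1-\gamma)\pb$ at that state while the excess mass sits where the optimal advantage vanishes, and the equality in \cref{pr:rough residual bound} is verified directly. The paper uses a three-state chain with a ``stay'' action instead of your two-state absorbing construction, but the mechanism (and the resulting value $(1-\gamma)\min_s p(s)$ on both sides) is identical.
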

We note that \cref{pr:residual lower bound} does not hold for \emph{all} MDPs.
Indeed, if one makes stronger assumptions on the MDP, such as that the Markov chain induced by \emph{every} policy is ergodic~\citep{wang2017randomized}, then it is possible to show, for all $x\in\XX$, $r_{ep}(x;x^*) = c \norm{\vb^* - \vb^{\pi_\mub}}_\infty$ for some constant $c$ independent of $\gamma$ and $|\SS|$, when one constrains $\Eb^\t\mub \geq (1-\gamma + \gamma \sqrt{c}) \pb$. Nonetheless, this construct still requires adding an undesirable constraint to $\XX$.

\subsubsection{Curse of Covariate Shift}

Why does this happen? We can view this issue as a form of \emph{covariate shift}, i.e. a mismatch between distributions. To better understand it, we notice a simple equality, which has often been used implicitly, e.g. in the technical proofs of \cite{wang2017randomized}. 
\begin{restatable}{lemma}{ResidueEquality}\label{lm:residual}
	For any $x = (\vb, \mub)$, if $x'\in\XX$ satisfies~\eqref{eq:stationarity (distribution)} and~\eqref{eq:statinoarity (value)} (i.e. $\vb'$ and $\mub'$ are the value function and state-action distribution of policy $\pi_{\mub'}$), $r_{ep}(x;x') = - \mub^\t \ab_{\vb'} $.
\end{restatable}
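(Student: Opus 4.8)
The plan is to expand $F(x,x')$ using its explicit form in \eqref{eq:bifunction (RL)} and then exploit the stationarity of $\vb'$ and $\mub'$ to cancel the terms involving $x=(\vb,\mub)$ that appear ``in the wrong place.'' Concretely, starting from $r_{ep}(x;x') = -F(x,x') = \pb^\t\vb + \mub'^\t\ab_\vb - \pb^\t\vb' - \mub^\t\ab_{\vb'}$, I would group the terms into two parts: the ``$\vb$-part'' $\pb^\t\vb + \mub'^\t\ab_\vb$ and the ``$\vb'$-part'' $-\pb^\t\vb' - \mub^\t\ab_{\vb'}$. The goal is to show the $\vb$-part vanishes entirely, leaving exactly $-\mub^\t\ab_{\vb'}$ once one also checks that $\pb^\t\vb'=0$ is \emph{not} what happens — rather, $-\pb^\t\vb'$ should combine with something. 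Let me re-think: actually the cleaner route is to show $\pb^\t\vb + \mub'^\t\ab_\vb = \pb^\t\vb'$, i.e.\ $\LL(\vb,\mub') = \pb^\t\vb'$, which would leave $r_{ep}(x;x') = \pb^\t\vb' - \pb^\t\vb' - \mub^\t\ab_{\vb'} = -\mub^\t\ab_{\vb'}$.

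So the key computation is: if $\mub'$ is the state-action distribution of $\pi_{\mub'}$ and $\vb'$ is its value function, then $\pb^\t\vb + \mub'^\t\ab_\vb = \pb^\t\vb'$ for \emph{every} $\vb$. Unpacking $\ab_\vb = \rb + \frac{1}{1-\gamma}(\gamma\Pb - \Eb)\vb$, the left side is $\pb^\t\vb + \mub'^\t\rb + \frac{1}{1-\gamma}\mub'^\t(\gamma\Pb - \Eb)\vb$. First, since $\mub'$ satisfies the flow stationarity constraint \eqref{eq:stationarity (distribution)}, i.e.\ $\Eb^\t\mub' = (1-\gamma)\pb + \gamma\Pb^\t\mub'$, we get $\frac{1}{1-\gamma}(\gamma\Pb - \Eb)^\t\mub' = -\pb$, hence $\frac{1}{1-\gamma}\mub'^\t(\gamma\Pb-\Eb)\vb = -\pb^\t\vb$, which cancels the $\pb^\t\vb$ term. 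This leaves $\mub'^\t\rb$, and I must identify this with $\pb^\t\vb'$. That is exactly the identity $V^{\pi_{\mub'}}(p) = \E_{s\sim d^{\pi_{\mub'}}}\E_{a\sim\pi_{\mub'}|s}[r(s,a)] = \mub'^\t\rb$ combined with $V^{\pi_{\mub'}}(p) = \pb^\t\vb'$, both of which are recalled in \cref{sec:duality and stationarity} (the former just after \eqref{eq:stationarity (distribution)}, the latter being the defining relation $V^\pi(p) = \E_{s\sim p}[V^\pi(s)]$ since $\vb'$ is the value function of $\pi_{\mub'}$).

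Putting these together gives $\pb^\t\vb + \mub'^\t\ab_\vb = \mub'^\t\rb = \pb^\t\vb'$, and therefore $r_{ep}(x;x') = -\mub^\t\ab_{\vb'}$, as claimed. I would write this as a short chain of equalities, citing \eqref{eq:stationarity (distribution)} for the cancellation step and the value-function identities from \cref{sec:duality and stationarity} for the remaining step. The main (modest) obstacle is purely bookkeeping: being careful about transposes in the $\mub'^\t(\gamma\Pb-\Eb)\vb$ manipulation and making sure the $\frac{1}{1-\gamma}$ normalization is tracked consistently with the paper's convention that $V^\pi$ carries a $(1-\gamma)$ factor — this is exactly why $\mub'^\t\rb$ (without any $(1-\gamma)$) equals $\pb^\t\vb'$ rather than $(1-\gamma)\pb^\t\vb'$. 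There is no real conceptual difficulty; the lemma is essentially the statement that $\LL(\cdot,\mub')$ is the constant function $\pb^\t\vb'$ on the value variable whenever $\mub'$ is a genuine occupancy measure, which is the usual ``primal feasibility kills the dual gradient'' phenomenon.
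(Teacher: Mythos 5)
Your proof is correct and is essentially the paper's argument: your key identity $\pb^\t\vb + \mub'^\t\ab_\vb = \pb^\t\vb'$ is exactly the performance difference lemma (\cref{lm:performance difference lemma}, i.e.\ $\pb^\t(\vb'-\vb) = \mub'^\t\ab_{\vb}$ applied to the policy $\pi_{\mub'}$ and the arbitrary function $\vb$), which is precisely what the paper invokes to cancel the same two terms in $-F(x,x')$. The only cosmetic difference is that you re-derive that identity inline from the flow stationarity of $\mub'$ together with $\mub'^\t\rb = V^{\pi_{\mub'}}(p) = \pb^\t\vb'$, rather than citing the lemma as a black box.
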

\cref{lm:residual} implies $r_{ep}(x;x^*) =  - \mub^\t \ab_{\vb^*}$, which is non-negative. This term is similar to an equality called the performance difference lemma~\cite{ng1999policy,kakade2002approximately}.
\begin{restatable}{lemma}{PerformanceDifferenceLemma} \label{lm:performance difference lemma}
	Let $\vb^\pi$ and $\mub^\pi$ denote the value and state-action distribution of some policy $\pi$. Then for \emph{any} function $\vb'$, it holds that
	$
	\pb^\t (\vb^\pi - \vb')
	= (\mub^\pi)^\t \ab_{\vb'}
	$. In particular, it implies $V^\pi(p) - V^{\pi'}(p) =  (\mub^\pi)^\t \ab_{\vb^{\pi'}}$.
\end{restatable}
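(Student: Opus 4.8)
The plan is to expand $(\mub^\pi)^\t \ab_{\vb'}$ directly using the definition of the advantage function in \eqref{eq:advantage function} together with the stationarity property satisfied by $\mub^\pi$. Writing $\ab_{\vb'} = \rb + \frac{1}{1-\gamma}(\gamma\Pb - \Eb)\vb'$, we get
\begin{align*}
(\mub^\pi)^\t\ab_{\vb'} = (\mub^\pi)^\t\rb + \tfrac{1}{1-\gamma}\left((\gamma\Pb - \Eb)^\t\mub^\pi\right)^\t\vb' .
\end{align*}
Since $\mub^\pi$ is the state-action distribution of $\pi$ (corresponding to $d^\pi(s)\pi(a|s)$), it satisfies the stationarity property \eqref{eq:stationarity (distribution)}, i.e. in vector form $\Eb^\t\mub^\pi = (1-\gamma)\pb + \gamma\Pb^\t\mub^\pi$. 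Hence $(\gamma\Pb - \Eb)^\t\mub^\pi = \gamma\Pb^\t\mub^\pi - \Eb^\t\mub^\pi = -(1-\gamma)\pb$, so the second term collapses to $-\pb^\t\vb'$, giving the clean intermediate identity $(\mub^\pi)^\t\ab_{\vb'} = (\mub^\pi)^\t\rb - \pb^\t\vb'$, valid for \emph{any} $\vb'$.

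Next I would invoke the elementary fact $(\mub^\pi)^\t\rb = \pb^\t\vb^\pi = V^\pi(p)$, i.e. that the expected per-step reward under the normalized occupancy measure equals the value of $\pi$ at $p$; this is exactly the identity $V^\pi(p) = \E_{s\sim d^\pi}\E_{a\sim\pi|s}[r(s,a)]$ noted in \cref{sec:duality and stationarity}, and can also be re-derived in one line from the stationarity of $\vb^\pi$ in \eqref{eq:statinoarity (value)} by the symmetric computation. Substituting into the intermediate identity yields $(\mub^\pi)^\t\ab_{\vb'} = \pb^\t\vb^\pi - \pb^\t\vb' = \pb^\t(\vb^\pi - \vb')$, which is the first claim.

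For the ``in particular'' statement, I would simply specialize $\vb' = \vb^{\pi'}$, the value function of another policy $\pi'$. By the definition $V^\pi(p) = \E_{s\sim p}[V^\pi(s)] = \pb^\t\vb^\pi$, and likewise $V^{\pi'}(p) = \pb^\t\vb^{\pi'}$, so the first claim becomes $V^\pi(p) - V^{\pi'}(p) = (\mub^\pi)^\t\ab_{\vb^{\pi'}}$.

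I do not anticipate a serious obstacle: the argument is essentially one algebraic line. The only point requiring a little care is correctly tracking the transposes on $\gamma\Pb - \Eb$ and the $\frac{1}{1-\gamma}$ normalization when applying the stationarity constraint, and stating the auxiliary fact $(\mub^\pi)^\t\rb = V^\pi(p)$ cleanly (either cited from \cref{sec:duality and stationarity} or proven from \eqref{eq:statinoarity (value)}).
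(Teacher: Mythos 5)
Your proof is correct and is essentially the paper's own argument run in the opposite direction: the paper starts from $\pb^\t(\vb^\pi-\vb')$ and substitutes the stationarity of $\mub^\pi$ and then of $\vb^\pi$, while you expand $(\mub^\pi)^\t\ab_{\vb'}$ and use the same two facts (stationarity of $\mub^\pi$ and the identity $(\mub^\pi)^\t\rb=\pb^\t\vb^\pi$). The algebra and ingredients coincide, so there is nothing to add.
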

From \cref{lm:residual,lm:performance difference lemma}, we see that the difference between the residual $r_{ep}(x;x^*) =  - \mub^\t \ab_{\vb^*}$ and the performance gap  $V^{\pi_\mub}(p) - V^{\pi^*}(p) = (\mub^{\pi_\mub})^\t \ab_{\vb^*}$ is due to the mismatch between $\mub$ and $\mub^{\pi_\mub}$, or more specifically, the mismatch between the two marginals $\db = \Eb^\t \mub$ and $\db^{\pi_\mub} = \Eb^\t\mub^{\pi_\mub}$. 
Indeed, when $\db = \db^{\pi_\mub}$, the residual is equal to the performance gap. However, in general, we do not have control over that difference for the sequence of variables $\{x_n = (\vb_n,\mub_n) \in \XX \}$ an algorithm generates. 
The sufficient condition in \cref{pr:rough residual bound} attempts to mitigate the difference, using the fact $\db^{\pi_\mub} = (1-\gamma) \pb + \gamma \Pb_{\pi_\mub}^\t \db^{\pi_\mub}$ from \eqref{eq:stationarity (distribution)}, where $\Pb_{\pi_\mub}$ is the transition matrix under $\pi_\mub$. But the missing half $\gamma \Pb_{\pi_\mub}^\t \db^{\pi_\mub}$ (due to the long-term effects in the MDP) introduces the unavoidable, weak constant $ (1-\gamma) \min_s p(s)$, if we want to have an uniform bound on $\norm{\vb^* - \vb^{\pi_\mub}}_\infty$.
The counterexample in \cref{pr:residual lower bound} was designed to maximize the effect of covariate shift, so that $\mub$ fails to captures state-action pairs with high advantage. To break the curse, we must properly weight the gap between $\vb^*$ and $\vb^{\pi_\mub}$ instead of relying on the uniform bound on $\norm{\vb^* - \vb^{\pi_\mub}}_\infty$ as before.


\section{THE REDUCTION} \label{sec:the reduction}

The analyses above reveal both good and bad properties of the saddle-point setup in \eqref{eq:saddle-point problem of RL}. 
On the one hand, we showed that approximate solutions to the saddle-point problem in \eqref{eq:saddle-point problem of RL} can be obtained by running any no-regret algorithm in the single-player COL problem defined in \eqref{eq:per-round loss of RL}; many efficient algorithms are available from the online learning literature. On the other hand, we also discovered a root difficulty in converting an approximate solution of \eqref{eq:saddle-point problem of RL} to an approximately optimal policy in RL (\cref{pr:rough residual bound}), even after imposing strong conditions like \eqref{eq:restricuted set of flow}. 
At this point, one may wonder if the formulation based on \eqref{eq:saddle-point problem of RL} is fundamentally sample inefficient compared with other approaches to RL, but this is actually not true.

Our main contribution shows that learning a policy through running a no-regret algorithm in the COL problem in \eqref{eq:per-round loss of RL} is, in fact, as sample efficient in policy performance as other RL techniques, even without the common constraint in \eqref{eq:restricuted set of flow} or extra assumptions on the MDP like ergodicity imposed in the literature.
\begin{theorem}\label{th:reduction of RL}
	Let $X_N = \{x_n\in\XX\}_{n=1}^N$ be \emph{any} sequence. Let $\hat{\pi}_N$ be the policy given by $\hat{x}_N$ via \eqref{eq:policy from flow}, which is either the average or the best decision in $X_N$. Define $y_N^* \coloneqq (\vb^{\hat{\pi}_N}, \mub^*)$. Then
$
\textstyle
V^{\hat{\pi}_N}(p)  \geq V^*(p)  -  \frac{\regret_N(y_N^*)}{N}
$.
\end{theorem}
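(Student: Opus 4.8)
The plan is to evaluate $\regret_N(y_N^*)$ in closed form. Since the per-round losses of the COL in \eqref{eq:per-round loss of RL} are $l_n(x) = F(x_n,x)$ with $F$ the skew-symmetric bifunction of \eqref{eq:bifunction (RL)}, we have $l_n(x_n) = F(x_n,x_n) = 0$, hence
\[
\regret_N(y_N^*) \;=\; -\sum_{n=1}^N F(x_n, y_N^*) \;=\; \sum_{n=1}^N \big[\,\LL(\vb_n,\mub^*) - \LL(\vb^{\hat\pi_N},\mub_n)\,\big].
\]
Two elementary identities then finish the argument. (i) Because $\mub^*$ solves the flow LP \eqref{eq:LP (flow)}, it satisfies $(\gamma\Pb-\Eb)^\t\mub^* = -(1-\gamma)\pb$; substituting this into $\LL(\vb,\mub^*) = \pb^\t\vb + (\mub^*)^\t\ab_\vb$ cancels all $\vb$-dependence and leaves $\LL(\vb,\mub^*) = \rb^\t\mub^* = V^*(p)$ for \emph{every} $\vb$. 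This is exactly the role of the $\mub^*$ half of the comparator: it annihilates the arbitrary iterates $\vb_n$. (ii) For any policy $\pi$ and any $\mub$ whose state-conditional equals $\pi$ (i.e.\ $\mu(s,a)=d(s)\pi(a|s)$ with $d$ the marginal of $\mub$), we have $\mub^\t\ab_{\vb^\pi}=0$, since the stationarity identity \eqref{eq:statinoarity (value)} for $\vb^\pi$ says precisely $\E_{a\sim\pi|s}[\ab_{\vb^\pi}(s,a)] = 0$ at every state $s$.

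Applying (i) to the first term and writing $\LL(\vb^{\hat\pi_N},\mub_n) = \pb^\t\vb^{\hat\pi_N} + \mub_n^\t\ab_{\vb^{\hat\pi_N}} = V^{\hat\pi_N}(p) + \mub_n^\t\ab_{\vb^{\hat\pi_N}}$, and using linearity of $\mub\mapsto\mub^\t\ab_{\vb^{\hat\pi_N}}$,
\[
\tfrac{1}{N}\regret_N(y_N^*) \;=\; V^*(p) - V^{\hat\pi_N}(p) - \hat{\mub}_N^\t\ab_{\vb^{\hat\pi_N}}, \qquad \hat{\mub}_N := \tfrac{1}{N}\textstyle\sum_{n=1}^N\mub_n .
\]
When $\hat x_N$ is the \emph{average} decision, $\hat\pi_N = \pi_{\hat{\mub}_N}$ is exactly the state-conditional of $\hat{\mub}_N$, so (ii) gives $\hat{\mub}_N^\t\ab_{\vb^{\hat\pi_N}} = 0$ and we obtain the \emph{equality} $V^{\hat\pi_N}(p) = V^*(p) - \tfrac{1}{N}\regret_N(y_N^*)$, which in particular proves the claim.

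For the \emph{best} decision $\hat x_N = x_m$ the same display holds, but now the residual term $-\tfrac{1}{N}\hat{\mub}_N^\t\ab_{\vb^{\pi_{\mub_m}}}$ survives (here $\hat{\mub}_N$ is still the average of \emph{all} the flow components, not $\mub_m$), so the inequality reduces to $\hat{\mub}_N^\t\ab_{\vb^{\pi_{\mub_m}}}\le 0$; since $\hat{\mub}_N^\t\ab_{\vb^{\pi_{\hat{\mub}_N}}}=0$ by (ii), this is the same as $(\bb_{\hat{\mub}_N}-\pb)^\t\big(\vb^{\pi_{\mub_m}}-\vb^{\pi_{\hat{\mub}_N}}\big)\le 0$, to be guaranteed by the rule that picks the best decision. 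I expect this mismatch to be the crux: identity (ii) — the mechanism that defeats the covariate-shift term isolated in \cref{lm:residual} — only fires when $\pi$ is the conditional of the \emph{same} $\mub$, which is automatic for the averaged iterate but must be arranged with care for the best-decision variant; beyond that, the remaining care is to apply (i) and the averaging step to the noise-free losses $l_n$ exactly as they enter the definition of $\regret_N$.
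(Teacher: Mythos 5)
Your computation for the average decision is correct and, once unpacked, is the same algebra as the paper's. Your identity (i) is the statement $\bb_{\mub^*}=\zerob$ (writing $\LL(\vb,\mub)=\rb^\t\mub+\bb_\mub^\t\vb$ makes $\LL(\vb,\mub^*)=\rb^\t\mub^*=V^*(p)$ immediate), and your identity (ii) is exactly the covariate-shift-free case of \cref{lm:residual,lm:performance difference lemma}. The paper packages (i) and (ii) into \cref{pr:clever residual bound} ($r_{ep}(x;y_x^*)=V^*(p)-V^{\pi_\mub}(p)$) and then applies a Jensen step to $F(y_N^*,\cdot)$; since $F(y_N^*,\cdot)$ is affine, that step is an equality for the averaged iterate, so your sharper conclusion (equality rather than $\leq$) is consistent with, and implied by, the paper's route. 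What your presentation buys is transparency about \emph{why} the comparator $(\vb^{\hat\pi_N},\mub^*)$ works: the $\mub^*$ half annihilates the arbitrary $\vb_n$, and the $\vb^{\hat\pi_N}$ half annihilates the mismatch term for the averaged flow.

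For the best decision your argument does not close, and you correctly isolate the surviving term $\hat{\mub}_N^\t\ab_{\vb^{\pi_{\mub_m}}}$. Note, however, how the paper defines ``best'': $\hat{x}_N=\argmin_{x\in X_N}r_{ep}(x;y_x^*)$, which by \cref{pr:clever residual bound} is the iterate whose associated policy has the largest value $V^{\pi_{\mub_n}}(p)$ --- not an $\argmin$ of realized losses, so ``to be guaranteed by the rule that picks the best decision'' needs this specific rule to even be a candidate. Under that rule the paper asserts the same chain $F(y_N^*,\hat{x}_N)\leq\frac{1}{N}\sum_{n}F(y_N^*,x_n)$; be aware that for $\hat{x}_N=x_m$ this is no longer Jensen, because the comparator $y_N^*=y_{x_m}^*$ appearing in every summand is tied to the selected index $m$ while the selection minimized $r_{ep}(x_n;y_{x_n}^*)$ with $n$-dependent comparators. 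The sign condition you isolated is precisely what that step requires, and the paper's proof supplies no more detail there than you do. If you restrict the claim to the averaged decision, your proof is complete and correct as written.
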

\cref{th:reduction of RL} shows that if $X_N$ has sublinear regret, then both the average policy and the best policy in $X_N$ converge to the optimal policy in performance with a rate $O(\regret_N(y_N^*)/N)$. Compared with existing results obtained through \cref{pr:rough residual bound}, the above result removes the factor $(1-\gamma) \min_s p(s)$ and impose \emph{no} assumption on $X_N$ or the MDP.
Indeed \cref{th:reduction of RL} holds for \emph{any} sequence. For example, when $X_N$ is generated by stochastic feedback of $l_n$, \cref{th:reduction of RL} continues to hold, as the regret is defined in terms of $l_n$, not of the sampled loss.
Stochasticity only affects the regret rate. 

In other words, we have shown that when $\mub$ and $\vb$ can be directly parameterized, an approximately optimal policy for the RL problem can be obtained by running any no-regret online learning algorithm, and that the policy quality is simply dictated by the regret rate. 
To illustrate, in \cref{sec:algorithm demos} we will prove that simply running mirror descent in this COL produces an RL algorithm that is as  sample efficient as other common RL techniques.
One can further foresee that algorithms leveraging the continuity in COL---e.g. mirror-prox~\citep{juditsky2011solving} or PicCoLO~\citep{cheng2018predictor}---and variance reduction can lead to more sample efficient RL algorithms.

Below we will also demonstrate how to use the fact that COL is \emph{single-player} (see \cref{sec:COL and EPs}) to cleanly incorporate the effects of using function approximators to model $\mub$ and $\vb$. We will present a corollary of \cref{th:reduction of RL}, which separates the problem of \emph{learning} $\mub$ and $\vb$, and that of \emph{approximating} $\MM$ and $\VV$ with function approximators. 
The first part is controlled by the rate of regret in online learning, and the second part depends on only the chosen class of function approximators, independently of the learning process.
As these properties are agnostic to problem setups and algorithms, our reduction leads to a framework for systematic synthesis of new RL algorithms with performance guarantees.
The missing proofs of this section are in \cref{app:missing proofs of reduction}.

\subsection{Proof of \cref{th:reduction of RL}} \label{sec:proof of reduction}

The main insight of our reduction is to adopt, in defining $r_{ep}(x;x')$, a comparator $x'\in\XX$ based on the output of the algorithm (represented by $x$), instead of the fixed comparator $x^*$ (the optimal pair of value function and state-action distribution) that has been used conventionally, e.g. in \cref{pr:rough residual bound}. 
While this idea seems unnatural from the standard saddle-point or EP perspective, it is possible, because the regret in online learning is measured against the worst-case choice in $\XX$, which is allowed to be selected in \emph{hindsight}. 
Specifically, we propose to select the following comparator to directly bound $ V^*(p) - V^{\hat{\pi}_N}(p)$ instead of the conservative measure $\norm{V^* - V^{\hat{\pi}_N}}_\infty$ used before.
\begin{restatable}{proposition}{CleverResidueBound}
\label{pr:clever residual bound}
For $x = (\vb, \mub)\in\XX$, define $y_x^* \coloneqq (\vb^{\pi_\mub}, \mub^*) \in \XX$. It holds $r_{ep}(x;y_x^*) = V^*(p) - V^{\pi_\mub}(p)$.
\end{restatable}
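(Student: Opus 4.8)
The plan is to directly expand $r_{ep}(x; y_x^*)$ using its definition and the skew-symmetry of $F$, and then recognize the resulting expression as the performance gap via Lemma \ref{lm:residual} and Lemma \ref{lm:performance difference lemma}. Recall that for $x = (\vb,\mub)$ and $y_x^* = (\vb^{\pi_\mub}, \mub^*)$, we have by \eqref{eq:relative residual} that $r_{ep}(x; y_x^*) = -F(x, y_x^*)$.

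The key observation is that $y_x^*$ is built from two ``consistent'' halves: the $\vb$-component $\vb^{\pi_\mub}$ is the value function of the policy $\pi_\mub$ and hence satisfies the stationarity property \eqref{eq:statinoarity (value)}, while the $\mub$-component $\mub^*$ is the optimal state-action distribution and hence satisfies \eqref{eq:stationarity (distribution)}. This means $y_x^*$ satisfies exactly the hypothesis on $x'$ in Lemma \ref{lm:residual} (with the roles played by $\vb' = \vb^{\pi_\mub}$ and $\mub' = \mub^*$, noting $\mub^* = \mub^{\pi^*}$ and $\vb^{\pi_\mub}$ is the value function of $\pi_{\mub'} = \pi^*$... here I need to be slightly careful: Lemma \ref{lm:residual} requires $\vb'$ and $\mub'$ to both come from the \emph{same} policy $\pi_{\mub'}$). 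So the first step I would carry out is to re-examine whether $y_x^*$ truly meets the hypothesis of Lemma \ref{lm:residual}, or whether I should instead split the computation: write $F(x, y_x^*) = \pb^\t \vb^{\pi_\mub} + \mub^\t \ab_{\vb^{\pi_\mub}} - \pb^\t \vb - (\mub^*)^\t \ab_{\vb}$ and handle the terms directly. Given the skew-symmetry form \eqref{eq:bifunction (RL)}, $-F(x, y_x^*) = \pb^\t\vb + (\mub^*)^\t \ab_\vb - \pb^\t \vb^{\pi_\mub} - \mub^\t \ab_{\vb^{\pi_\mub}}$. Now group: the pair $\pb^\t \vb + (\mub^*)^\t\ab_\vb$ equals $\LL(\vb,\mub^*)$, and since $\mub^* = \mub^{\pi^*}$ satisfies stationarity, $\LL(\vb,\mub^*) = \pb^\t\vb^* $ independently of $\vb$ (this is essentially the content used in proving Lemma \ref{lm:residual}, since $(\mub^*)^\t\ab_\vb = (\mub^*)^\t\ab_{\vb^*} + (\mub^*)^\t\frac{1}{1-\gamma}(\gamma\Pb-\Eb)(\vb-\vb^*)$ and the stationarity of $\mub^*$ kills the second term up to a $\pb^\t(\vb^*-\vb)$ correction). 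So $\pb^\t\vb + (\mub^*)^\t\ab_\vb = \pb^\t\vb^* = V^*(p)$.

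For the remaining two terms, $-\pb^\t\vb^{\pi_\mub} - \mub^\t\ab_{\vb^{\pi_\mub}}$: since $\vb^{\pi_\mub}$ is the value function of $\pi_\mub$, stationarity \eqref{eq:statinoarity (value)} gives $\ab_{\vb^{\pi_\mub}}(s,a)$ a clean form, and in particular $(\mub^{\pi_\mub})^\t \ab_{\vb^{\pi_\mub}} = 0$; more directly, by Lemma \ref{lm:performance difference lemma} applied with $\pi = \pi_\mub$ and $\vb' = \vb^{\pi_\mub}$ itself we get $\pb^\t(\vb^{\pi_\mub} - \vb^{\pi_\mub}) = 0 = (\mub^{\pi_\mub})^\t \ab_{\vb^{\pi_\mub}}$, which is consistent but not quite what I want since I have $\mub^\t$, not $(\mub^{\pi_\mub})^\t$. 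Instead I should just note $\mub^\t \ab_{\vb^{\pi_\mub}} = \mub^\t\rb + \frac{1}{1-\gamma}\mub^\t(\gamma\Pb-\Eb)\vb^{\pi_\mub}$, and combine with $-\pb^\t\vb^{\pi_\mub}$; but actually the cleanest route is: $-\pb^\t\vb^{\pi_\mub} - \mub^\t\ab_{\vb^{\pi_\mub}} = -\LL(\vb^{\pi_\mub},\mub)$, and by the same stationarity-of-value argument as in Lemma \ref{lm:residual} (applied to the $\vb$-slot rather than $\mub$-slot) $\LL(\vb^{\pi_\mub},\mub) = \pb^\t\vb^{\pi_\mub} + \mub^\t\ab_{\vb^{\pi_\mub}}$ should reduce to $V^{\pi_\mub}(p)$ when $\vb^{\pi_\mub}$ satisfies \eqref{eq:statinoarity (value)} — indeed this is precisely Lemma \ref{lm:residual} read with $x' = y_x^*$ giving $r_{ep}(x;y_x^*) = -\mub^\t\ab_{\vb^{\pi_\mub}}$, and then Lemma \ref{lm:performance difference lemma} with $\pi' = \pi_\mub$ (so $\vb' = \vb^{\pi_\mub}$) and an arbitrary $\pi$... no: I want $-\mub^\t\ab_{\vb^{\pi_\mub}}$ to equal $V^*(p) - V^{\pi_\mub}(p)$. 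Hmm — so the correct assembly is: Lemma \ref{lm:residual} gives $r_{ep}(x;y_x^*) = -\mub^\t \ab_{\vb^{\pi_\mub}}$ directly (since the $\vb'=\vb^{\pi_\mub}$, $\mub'=\mub^*$ halves satisfy the two stationarity conditions respectively — and inspecting Lemma \ref{lm:residual}'s statement, it only needs the $\vb'$-half to satisfy \eqref{eq:statinoarity (value)} and the $\mub'$-half to satisfy \eqref{eq:stationarity (distribution)}, which does not require them to be from the same policy). Then I add and subtract: $-\mub^\t \ab_{\vb^{\pi_\mub}} = -\mub^\t\ab_{\vb^*} + \mub^\t(\ab_{\vb^*} - \ab_{\vb^{\pi_\mub}})$, and $\ab_{\vb^*} - \ab_{\vb^{\pi_\mub}} = \frac{1}{1-\gamma}(\gamma\Pb-\Eb)(\vb^* - \vb^{\pi_\mub})$, so $\mub^\t(\ab_{\vb^*}-\ab_{\vb^{\pi_\mub}}) = -\pb^\t(\vb^*-\vb^{\pi_\mub}) + (\text{stationarity slack of }\mub)$. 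That last slack term is nonzero for general $\mub$, so this route reintroduces exactly the covariate-shift term the paper is trying to avoid — meaning I have the grouping backwards.

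Therefore the step I expect to be the genuine crux, and which I would do carefully rather than by the shortcut above, is the algebraic identification $-F(x,y_x^*) = V^*(p) - V^{\pi_\mub}(p)$ via the \emph{right} pairing: group the $\mub^*$-terms with the $\vb$-terms (using stationarity of $\mub^*$ to collapse $\pb^\t\vb + (\mub^*)^\t\ab_\vb = \pb^\t\vb^*$, with the $\vb$-dependence cancelling exactly because $\mub^*$ is a stationary distribution — this is the ``$\LL(\vb,\mub^*)$ is constant in $\vb$'' fact, provable by expanding $\ab_\vb$ and using $(\Eb^\t - \gamma\Pb^\t)\mub^* = (1-\gamma)\pb$), and group the $\mub$-terms with $\vb^{\pi_\mub}$ using the performance difference lemma in the form $\pb^\t\vb^{\pi_\mub} + \mub^\t\ab_{\vb^{\pi_\mub}} = V^{\pi_\mub}(p) + \mub^\t\ab_{\vb^{\pi_\mub}}$ and the identity $V^{\pi_\mub}(p) = \pb^\t \vb^{\pi_\mub}$ — wait, then I'd still have a leftover $\mub^\t\ab_{\vb^{\pi_\mub}}$. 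The resolution must be that the \emph{full} cross-cancellation happens only when both the $\mub^*$-stationarity and the $\vb^{\pi_\mub}$-stationarity are used \emph{together}: I would expand $F(x,y_x^*) = \LL(\vb^{\pi_\mub},\mub) - \LL(\vb,\mub^*)$, apply $\LL(\vb,\mub^*) = \pb^\t\vb^*$ (stationarity of $\mub^*$, $\vb$-independent), apply $\LL(\vb^{\pi_\mub},\mub) = \pb^\t\vb^{\pi_\mub} + \mub^\t\ab_{\vb^{\pi_\mub}}$ and then invoke \eqref{eq:statinoarity (value)} for $\vb^{\pi_\mub}$, which states precisely $\ab_{\vb^{\pi_\mub}}(s,a) = (1-\gamma)(r(s,a) + \gamma\E_{s'}[V^{\pi_\mub}(s')]) - V^{\pi_\mub}(s) - \dots$; summing against $\mub$ and using that $\vb^{\pi_\mub}$ is a value function one checks $\pb^\t \vb^{\pi_\mub} + \mub^\t \ab_{\vb^{\pi_\mub}} = \mub^\t \rb - \frac{1}{1-\gamma}(\Eb^\t\mub - \gamma\Pb^\t\mub - (1-\gamma)\pb)^\t \vb^{\pi_\mub}$, and the Bellman/stationarity identity for $\vb^{\pi_\mub}$ makes the bracketed measure act as if it were $\mub^{\pi_\mub}$'s stationarity defect — this is where I expect to spend the most care. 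Assembling, $r_{ep}(x;y_x^*) = -F(x,y_x^*) = \pb^\t\vb^* - \LL(\vb^{\pi_\mub},\mub) = V^*(p) - V^{\pi_\mub}(p)$. I would cross-check the final identity against the already-stated fact $l_n(x_n)-l_n(x') = r_{ep}(x_n;x')$ and against Lemma \ref{lm:residual} plus the performance difference lemma in a special case (e.g. $\mub = \mub^{\pi_\mub}$, where the claim must collapse to the performance difference lemma exactly) to make sure no sign or $(1-\gamma)$ factor is mislaid.
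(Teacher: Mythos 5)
Your final decomposition is the right one and would in fact be cleaner than the paper's, but as written the argument has a genuine hole at its crux, and the one concrete route you sketch toward filling it is invalid. The invalid step first: \cref{lm:residual} does \emph{not} apply to $y_x^* = (\vb^{\pi_\mub},\mub^*)$. Its proof invokes \cref{lm:performance difference lemma} to replace $\pb^\t(\vb'-\vb)$ by $(\mub')^\t\ab_\vb$, which requires $\vb'$ to be the value function of the \emph{same} policy $\pi_{\mub'}$ whose state-action distribution is $\mub'$; in $y_x^*$ the two halves come from different policies ($\pi_\mub$ and $\pi^*$). Had the lemma applied, it would give $r_{ep}(x;y_x^*) = -\mub^\t\ab_{\vb^{\pi_\mub}}$, which is identically zero (see below), contradicting the proposition itself --- so that reading is not merely unjustified but false. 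You half-notice this (``I have the grouping backwards'') without diagnosing why.

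Second, the hole. Your correct route is $r_{ep}(x;y_x^*) = \LL(\vb,\mub^*) - \LL(\vb^{\pi_\mub},\mub)$, and you do prove the first half: stationarity of $\mub^*$ gives $\bb_{\mub^*}=\zerob$, hence $\LL(\vb,\mub^*) = (\mub^*)^\t\rb = V^*(p)$ for every $\vb$. But the second half, $\LL(\vb^{\pi_\mub},\mub) = V^{\pi_\mub}(p)$, i.e.\ $\mub^\t\ab_{\vb^{\pi_\mub}} = 0$ for an \emph{arbitrary} $\mub\in\MM$, is precisely the step you defer (``this is where I expect to spend the most care''), and nothing in the proposal establishes it; your earlier observation that you only know $(\mub^{\pi_\mub})^\t\ab_{\vb^{\pi_\mub}}=0$ ``but I have $\mub^\t$, not $(\mub^{\pi_\mub})^\t$'' is exactly the unresolved point. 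The missing idea is that $\pi_\mub$ is, by \eqref{eq:policy from flow}, the \emph{conditional} of $\mub$: writing $\mu(s,a) = d(s)\,\pi_\mub(a|s)$ with $\db = \Eb^\t\mub$, the stationarity \eqref{eq:statinoarity (value)} of $\vb^{\pi_\mub}$ says $\E_{a\sim\pi_\mub(\cdot|s)}[A_{V^{\pi_\mub}}(s,a)] = 0$ at \emph{every} state, so the sum against any marginal $d$ vanishes --- this is what lets a non-stationary $\mub$ pair with $\vb^{\pi_\mub}$ without producing a covariate-shift term. With that one identity your argument closes and is a genuinely shorter path than the paper's, which instead writes $y_x^* = (\vb^*+(\vb^{\pi_\mub}-\vb^*),\mub^*)$ and cancels the correction $\bb_\mub^\t(\vb^{\pi_\mub}-\vb^*)$ between \cref{lm:relaxed residual} and \cref{lm:advantage bound}; the same conditional-decomposition fact is buried in the proof of the latter.
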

To finish the proof, let $\hat{x}_N $ be either $\frac{1}{N} \sum_{n=1}^N x_n$ or \red{$\argmin_{x\in X_N} r_{ep}(x;y_x^*)$,} and let $\hat{\pi}_N$ denote the policy given by~\eqref{eq:policy from flow}.
First, $V^*(p) - V^{\hat{\pi}_N}(p) = r_{ep}(\hat{x}_N; y_N^*)$ by \cref{pr:clever residual bound}. \red{Next we follow the proof idea of \cref{pr:regret and residue} in~\citep{cheng2019online}: because $F$ is skew-symmetric and $F(y_N^*, \cdot)$ is convex,  we have by \eqref{eq:relative residual}}
\begin{align*}
&V^*(p) - V^{\hat{\pi}_N}(p) = r_{ep}(\hat{x}_N; y_N^*) = -F(\hat{x}_N,y_N^*)\\
&\textstyle  = F(y_N^*, \hat{x}_N)\leq \frac{1}{N} \sum_{n=1}^N F(y_N^*, x_n)  \\
&= \textstyle \frac{1}{N} \sum_{n=1}^N - F(x_n, y_N^*)  = \frac{1}{N} \regret_N(y_N^*).
\end{align*}

%

\subsection{Function Approximators}  \label{sec:function approximation}

When the state and action spaces are large or continuous, directly optimizing $\vb$ and $\mub$ can be impractical. Instead we can consider optimizing over a subset of feasible choices parameterized by function approximators
\begin{align} \label{eq:function approximators}
\XX_\Theta = \{ \xb_\theta = (\phib_{\theta}, \psib_\theta):  \psib_\theta \in \MM,  \theta \in \Theta \},
\end{align}
where $\phib_\theta$ and $\psib_\theta$ are functions parameterized by $\theta \in \Theta$, and $\Theta$ is a parameter set. 
Because COL is a single-player setup, we can extend the previous idea and \cref{th:reduction of RL} to provide performance bounds in this case by a simple rearrangement (see \cref{app:missing proofs of reduction}), which is a common trick used in the online imitation learning literature~\citep{ross2011reduction,cheng2018convergence,cheng2018accelerating}.
Notice that, in \eqref{eq:function approximators}, we require only $\psib_\theta \in \MM$, but not $\phib_{\theta} \in \VV$, because for the performance bound in our reduction to hold, we only need the constraint $\MM$ (see \cref{lm:advantage bound} in proof of \cref{pr:clever residual bound}).%
\begin{restatable}{corollary}{ReductionForFunctionApporximator}\label{cr:reduction for funcapp}
Let $X_N = \{x_n\in\XX_\theta \}_{n=1}^N$ be any sequence. Let $\hat{\pi}_N$ be the policy given either by the average or the best decision in  $X_N$. It holds that 
\begin{align*}
\textstyle
V^{\hat{\pi}_N}(p)  \geq V^*(p) - \frac{\normalfont{\regret}_N(\Theta)}{N} - \epsilon_{\Theta,N}
\end{align*}
where
$\epsilon_{\Theta,N} = \min_{x_\theta\in\XX_{\theta}}  r_{ep}(\hat{x}_N;y_N^*) - r_{ep}(\hat{x}_N; x_\theta)$ measures the expressiveness of $X_\theta$, and 
 $\normalfont{\regret}_N(\Theta) \coloneqq \sum_{n=1}^{N} l_n(x_n) - \min_{x\in\XX_\Theta} \sum_{n=1}^N l_n(x)$.
\end{restatable}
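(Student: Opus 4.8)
The plan is to reuse, almost verbatim, the argument in the proof of \cref{th:reduction of RL}, after an elementary algebraic rearrangement that reveals the statement to be a restricted-comparator version of \cref{pr:regret and residue}. Write $\hat{x}_N=(\hat\vb_N,\hat\mub_N)$ and $y_N^*=(\vb^{\hat\pi_N},\mub^*)$ as in \cref{th:reduction of RL}. By \cref{pr:clever residual bound}, $V^*(p)-V^{\hat\pi_N}(p)=r_{ep}(\hat x_N;y_N^*)$, so the claim is equivalent to $r_{ep}(\hat x_N;y_N^*)\leq \regret_N(\Theta)/N+\epsilon_{\Theta,N}$. Now, because $r_{ep}(\hat x_N;y_N^*)$ does not depend on $x_\theta$, its definition gives
\[
\epsilon_{\Theta,N}=\min_{x_\theta\in\XX_\Theta}\bigl(r_{ep}(\hat x_N;y_N^*)-r_{ep}(\hat x_N;x_\theta)\bigr)=r_{ep}(\hat x_N;y_N^*)-\max_{x_\theta\in\XX_\Theta}r_{ep}(\hat x_N;x_\theta),
\]
so after cancelling $r_{ep}(\hat x_N;y_N^*)$ the whole corollary reduces to the single clean statement
\[
\max_{x_\theta\in\XX_\Theta}r_{ep}(\hat x_N;x_\theta)\leq \frac{\regret_N(\Theta)}{N},
\]
which is exactly \cref{pr:regret and residue} with the comparator set shrunk from $\XX$ to $\XX_\Theta$ (note $\regret_N(\Theta)=\max_{x\in\XX_\Theta}\regret_N(x)$), together with the identification $l_n(\cdot)=F(x_n,\cdot)$.

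To establish this inequality for $\hat x_N$ the average iterate, I would chain the same identities as in the proof of \cref{th:reduction of RL}: for any $x_\theta\in\XX_\Theta$, skew-symmetry of $F$ gives $r_{ep}(\hat x_N;x_\theta)=-F(\hat x_N,x_\theta)=F(x_\theta,\hat x_N)$; convexity of $F(x_\theta,\cdot)$ together with $\hat x_N=\tfrac{1}{N}\sum_{n=1}^N x_n$ gives, by Jensen, $F(x_\theta,\hat x_N)\leq \tfrac{1}{N}\sum_{n=1}^N F(x_\theta,x_n)$; and since $l_n(x)=F(x_n,x)$ with $F(x_n,x_n)=0$, the right-hand side equals $\tfrac{1}{N}\sum_{n=1}^N\bigl(l_n(x_n)-l_n(x_\theta)\bigr)$. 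Taking the supremum over $x_\theta\in\XX_\Theta$ (attained, since $\XX_\Theta$ inherits compactness from $\MM$ and $\Theta$) yields $\tfrac{1}{N}\bigl(\sum_n l_n(x_n)-\min_{x\in\XX_\Theta}\sum_n l_n(x)\bigr)=\regret_N(\Theta)/N$, as desired. For $\hat x_N$ the best iterate in $X_N$, the Jensen step is replaced by the corresponding step used for the best-decision guarantee in \cref{pr:regret and residue}/\cref{th:reduction of RL}, the rest of the chain being identical.

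The step I expect to require the most care is exactly this best-iterate branch: one has to verify that the iterate selected in \cref{th:reduction of RL} (the one minimizing $r_{ep}(x;y_x^*)$ over $x\in X_N$) still satisfies $\max_{x_\theta\in\XX_\Theta}r_{ep}(\hat x_N;x_\theta)\leq \regret_N(\Theta)/N$, i.e. that the restricted residual of that particular selected iterate is controlled by the trajectory's regret against $\XX_\Theta$, since this iterate is chosen with respect to a different (moving) comparator. For the averaged iterate everything is a direct application of Jensen on the bi-affine $F$ plus the definition of $\regret_N(\Theta)$; the only remaining routine point is the compactness of $\XX_\Theta$, which makes the minimum defining $\epsilon_{\Theta,N}$ and the maximum in the restricted residual both attained, so that the cancellation in the first paragraph is legitimate.
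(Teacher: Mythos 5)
Your proposal is correct and follows essentially the same route as the paper's own proof: the paper likewise applies \cref{pr:clever residual bound}, rewrites $r_{ep}(\hat{x}_N;y_N^*)$ as $\epsilon_{\Theta,N} + \max_{x_\theta\in\XX_\Theta} r_{ep}(\hat{x}_N;x_\theta)$ by the definition of $\epsilon_{\Theta,N}$, and bounds the max by $\regret_N(\Theta)/N$ via the skew-symmetry/convexity chain from \cref{th:reduction of RL}. You merely spell out the details the paper compresses into ``similar to the proof of \cref{th:reduction of RL},'' and your flagged concern about the best-iterate branch is fair but applies equally to the paper's own argument.
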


We can quantify $\epsilon_{\Theta,N}$ with the basic H\"older's inequality.%
\begin{restatable}{proposition}{SizeOfEpsilon} \label{pr:size of epsilon}
Let $\hat{x}_N = (\hat{\vb}_N, \hat{\mub}_N)$.
Under the setup in \cref{cr:reduction for funcapp}, \emph{regardless} of the parameterization, it is true that $\epsilon_{\Theta,N}$ is no larger than
\begin{small}
\begin{align*}
&\hspace{-1mm}
\min_{(\vb_\theta,\mub_\theta) \in \XX_\Theta} 
\hspace{-1mm}
\frac{\norm{\mub_\theta - \mub^*}_1}{1-\gamma}  + 
\min_{\wb:\wb\geq 1} \norm{\bb_{\hat{\mub}_N}}_{1,\wb} \norm{\vb_\theta - \vb^{\hat{\pi}_N}}_{\infty,1/\wb}
\\
&\leq 
\min_{(\vb_\theta,\mub_\theta) \in \XX_\Theta } 
\frac{1}{1-\gamma} \left(  \norm{\mub_\theta - \mub^*}_1
+ 2 \norm{\vb_\theta - \vb^{\hat{\pi}_N}}_\infty \right).
\end{align*}
\end{small}%
where the norms are defined as $\norm{\xb}_{1,\wb} = \sum_i w_i |x_i|$ and $\norm{\xb}_{\infty,1/\wb} = \max_i w_i^{-1} |x_i|$.	
\end{restatable}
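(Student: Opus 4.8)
The plan is to start from the definition $\epsilon_{\Theta,N} = \min_{x_\theta \in \XX_\Theta}\, r_{ep}(\hat{x}_N; y_N^*) - r_{ep}(\hat{x}_N; x_\theta)$ and expand both relative residuals using the skew-symmetry identity $r_{ep}(x;x') = -F(x,x')$ together with the explicit bilinear form of $F$ in \eqref{eq:bifunction (RL)}. Writing $\hat{x}_N = (\hat{\vb}_N, \hat{\mub}_N)$ and $x_\theta = (\vb_\theta, \mub_\theta)$, and recalling $y_N^* = (\vb^{\hat{\pi}_N}, \mub^*)$, the difference $r_{ep}(\hat{x}_N; y_N^*) - r_{ep}(\hat{x}_N; x_\theta) = F(\hat{x}_N, x_\theta) - F(\hat{x}_N, y_N^*)$ should collapse, after cancellation of the terms depending only on $\hat{x}_N$, into something of the form $\pb^\t(\vb_\theta - \vb^{\hat{\pi}_N}) - (\vb_\theta - \vb^{\hat{\pi}_N})^\t(\text{something}) - (\mub_\theta - \mub^*)^\t \ab_{\hat{\vb}_N}$ — i.e. a sum of two inner products, one pairing $\mub_\theta - \mub^*$ against a vector built from $\hat{\vb}_N$, and one pairing $\vb_\theta - \vb^{\hat{\pi}_N}$ against a vector built from $\hat{\mub}_N$. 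The key algebraic fact I expect to use is that the $\vb$-dependent part, after grouping, is exactly $\langle \vb_\theta - \vb^{\hat{\pi}_N}, \bb_{\hat{\mub}_N}\rangle$ up to the boundary term coming from $\pb$, using the definition $\bb_\mub = \pb + \frac{1}{1-\gamma}(\gamma\Pb - \Eb)^\t\mub$ in \eqref{eq:consistency function}; and that the $\mub$-dependent part is $\langle \mub_\theta - \mub^*, \ab_{\hat{\vb}_N}\rangle$ with $\norm{\ab_{\hat{\vb}_N}}_\infty$ controlled by the boundedness of $\hat{\vb}_N \in \VV$ and $\rb \in [0,1]$. (This is presumably where \cref{lm:residual} and \cref{lm:performance difference lemma}, applied to $\pi = \hat{\pi}_N$ so that $\vb^{\hat{\pi}_N}, \mub^{\hat{\pi}_N}$ satisfy the two stationarity conditions, do the bookkeeping: $r_{ep}(\hat{x}_N; y_N^*)$ rewrites cleanly because $\vb^{\hat{\pi}_N}$ is a genuine value function.)

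Next I would bound each inner product by a (weighted) Hölder inequality. For the $\mub$ term: $\langle \mub_\theta - \mub^*, \ab_{\hat{\vb}_N}\rangle \le \norm{\mub_\theta - \mub^*}_1 \norm{\ab_{\hat{\vb}_N}}_\infty$, and since $\hat{\vb}_N \in \VV$ gives $\norm{\hat{\vb}_N}_\infty \le 1$ and $r \in [0,1]$, one checks $\norm{\ab_{\hat{\vb}_N}}_\infty = \norm{\rb + \frac{1}{1-\gamma}(\gamma\Pb - \Eb)\hat{\vb}_N}_\infty \le 1 + \frac{1+\gamma}{1-\gamma} = \frac{2}{1-\gamma}$; in fact a sharper look (each row of $\gamma\Pb - \Eb$ has the $-1$ from $\Eb$ and a sub-stochastic $\gamma\Pb$ row, and $\hat\vb_N \ge 0$) should give the cleaner $\frac{1}{1-\gamma}$ coefficient that appears in the statement. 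For the $\vb$ term I would introduce an arbitrary weight vector $\wb \ge \oneb$ and split $\langle \vb_\theta - \vb^{\hat{\pi}_N}, \bb_{\hat{\mub}_N}\rangle \le \sum_i |(\bb_{\hat{\mub}_N})_i| \, w_i \cdot w_i^{-1} |(\vb_\theta - \vb^{\hat{\pi}_N})_i| \le \norm{\bb_{\hat{\mub}_N}}_{1,\wb}\, \norm{\vb_\theta - \vb^{\hat{\pi}_N}}_{\infty, 1/\wb}$, which is exactly the first line of the claimed bound after taking the min over $\wb \ge \oneb$ and over $x_\theta \in \XX_\Theta$. The second, coarser line then follows by taking $\wb = \oneb$ and bounding $\norm{\bb_{\hat{\mub}_N}}_1 \le \norm{\pb}_1 + \frac{1}{1-\gamma}\norm{(\gamma\Pb - \Eb)^\t \hat{\mub}_N}_1 \le 1 + \frac{1+\gamma}{1-\gamma}\norm{\hat{\mub}_N}_1 = 1 + \frac{1+\gamma}{1-\gamma} \le \frac{2}{1-\gamma}$, using $\norm{\hat{\mub}_N}_1 = 1$ since $\hat{\mub}_N \in \MM$.

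The main obstacle I anticipate is the first algebraic step: correctly identifying which terms of $F(\hat{x}_N, x_\theta) - F(\hat{x}_N, y_N^*)$ cancel and verifying that the residue really does factor into precisely the two inner products $\langle \mub_\theta - \mub^*, \ab_{\hat{\vb}_N}\rangle$ and $\langle \vb_\theta - \vb^{\hat{\pi}_N}, \bb_{\hat{\mub}_N}\rangle$ — in particular the appearance of $\bb_{\hat{\mub}_N}$ (not $\bb$ of something else) and the fact that the $\pb^\t\vb$ boundary terms line up so that no leftover depending on $\vb^{\hat{\pi}_N}$ alone survives. This hinges on using $r_{ep}(\hat{x}_N; y_N^*) = V^*(p) - V^{\hat{\pi}_N}(p) = (\mub^{\hat{\pi}_N})^\t \ab_{\hat{\vb}_N} - (\text{adjustment})$ via \cref{pr:clever residual bound} and \cref{lm:performance difference lemma}, rather than expanding $F$ blindly; once the decomposition is in hand, the Hölder estimates and norm bounds are routine. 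A secondary subtlety is making sure the min over $\wb \ge \oneb$ is well-defined (it is, since the objective is nonnegative and continuous, and $\wb = \oneb$ is feasible) and that the weighted-norm notation matches the paper's definition $\norm{\xb}_{1,\wb} = \sum_i w_i|x_i|$, $\norm{\xb}_{\infty,1/\wb} = \max_i w_i^{-1}|x_i|$, so that the Hölder pairing is the correct dual pairing.
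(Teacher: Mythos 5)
Your proposal is correct and follows essentially the same route as the paper: expand $r_{ep}(\hat{x}_N;y_N^*)-r_{ep}(\hat{x}_N;x_\theta)$ directly from the bilinear form of $F$ in \eqref{eq:bifunction (RL)}, collapse it to $\bb_{\hat{\mub}_N}^\t(\vb_\theta-\vb^{\hat{\pi}_N})+(\mub^*-\mub_\theta)^\t\ab_{\hat{\vb}_N}$ via the definitions of $\ab_\vb$ and $\bb_\mub$, and finish with (weighted) H\"older using $\norm{\ab_{\hat{\vb}_N}}_\infty\leq\frac{1}{1-\gamma}$ and $\norm{\bb_{\hat{\mub}_N}}_1\leq\frac{2}{1-\gamma}$. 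The one hedge you can drop: no appeal to \cref{lm:residual}, \cref{lm:performance difference lemma}, or the stationarity of $\vb^{\hat{\pi}_N}$ is needed --- the cancellation is pure algebra, since $\ab_{\vb_\theta}-\ab_{\vb^{\hat{\pi}_N}}=\frac{1}{1-\gamma}(\gamma\Pb-\Eb)(\vb_\theta-\vb^{\hat{\pi}_N})$ combines with the $\pb^\t(\vb_\theta-\vb^{\hat{\pi}_N})$ term to give exactly $\bb_{\hat{\mub}_N}^\t(\vb_\theta-\vb^{\hat{\pi}_N})$.
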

\cref{pr:size of epsilon} says $\epsilon_{\Theta,N}$ depends on how well $\XX_\Theta$ captures the value function of the output policy $\vb^{\hat{\pi}_N}$  and the optimal state-action distribution $\mub^*$. 
We remark that this result is independent of how $\vb^{\hat{\pi}_N}$ is generated. 
Furthermore, \cref{pr:size of epsilon} makes \emph{no} assumption whatsoever on the structure of function approximators. It even allows sharing parameters $\theta$ between $\vb =\phib_{\theta}$ and $ \mub = \psib_\theta$, e.g., they can be a bi-headed neural network, which is common for learning shared feature representations. More precisely, the structure of the function approximator would only affect whether $l_n((\phib_\theta,\psib_\theta))$ remains a convex function in $\theta$, which determines the difficulty of designing algorithms with sublinear regret. 

In other words, the proposed COL formulation provides a reduction which dictates the policy performance with two separate factors: 1) the rate of regret $\regret_N(\Theta)$ which is controlled by the choice of online learning algorithm; 2) the approximation error $\epsilon_{\Theta,N}$ which is determined by the choice of function approximators. These two factors can almost be treated independently, except that the choice of function approximators would determine the properties of $l_n((\phib_\theta,\psib_\theta))$ as a function of $\theta$, and the choice of $\Theta$ needs to ensure \eqref{eq:function approximators} is admissible.

\section{SAMPLE COMPLEXITY OF MIRROR DESCENT}
\label{sec:algorithm demos}

\begin{algorithm}[t] 
	{\small
		\caption{Mirror descent for RL}\label{alg:md for RL} 
		\begin{algorithmic} [1]
			\renewcommand{\algorithmicensure}{\textbf{Input:}}		
			\renewcommand{\algorithmicrequire}{\textbf{Output:}}
			\ENSURE $\epsilon$ optimality of the $\gamma$-average return\\
			 \hspace{5mm} $\delta$ maximal failure probability\\
			\hspace{5mm} generative model of an MDP 
			\REQUIRE $\hat{\pi}_N = \pi^{\hat{\mub}_N}$ 
			\STATE  $x_1 = (\vb_1, \mub_1)$ where $\mub_1$ is uniform and  $\vb_1 \in\VV$
			\STATE Set $N = \tilde{\Omega}(\frac{|\SS||\AA|\log(\frac{1}{\delta})}{(1-\gamma)^2\epsilon^2})$ and $\eta = (1-\gamma)(|\SS||\AA|N)^{-1/2}$
			\STATE Set the Bregman divergence as \eqref{eq:Bregman divergence choice}
			\FOR {$n = 1\dots N-1$}
			\STATE Sample $g_n$ according to \eqref{eq:stochastic gradient estimate}
			\STATE Update to $x_{n+1}$ according to \eqref{eq:mirror descent}
			\\			
			\ENDFOR
			\STATE Set $ (\hat{\vb}_N, \hat{\mub}_N) = \hat{x}_N  = \frac{1}{N}\sum_{n=1}^{N} x_n$
		\end{algorithmic}
	}
\end{algorithm}

We demonstrate the power of our reduction by applying perhaps the simplest online learning algorithm, mirror descent, to the proposed COL problem in \eqref{eq:per-round loss of RL} with stochastic feedback (\cref{alg:md for RL}).
For transparency, we discuss the tabular setup. We will show a natural extension to basis functions at the end.

Recall that mirror descent is a first-order algorithm, whose update rule can be written as
\begin{align} \label{eq:mirror descent}
\textstyle
x_{n+1} = \argmin_{x\in\XX} \lr{g_n}{x} + \frac{1}{\eta} B_{R}(x||x_n)
\end{align}
where $\eta >0$ is the step size, $g_n$ is the feedback direction, and $B_R(x||x') = R(x) - R(x') - \lr{\nabla R(x')}{x - x'}$ is the Bregman divergence generated by a strictly convex function $R$.
Based on the geometry of $\XX = \VV \times \MM$, we consider a natural Bregman divergence of the form
\begin{align} \label{eq:Bregman divergence choice}
\textstyle
B_R(x'||x) =  \frac{1}{2|\SS|} \norm{\vb' - \vb}_2^2 + KL(\mub'||\mub) 
\end{align}
This choice mitigates the effects of dimension (e.g. if we set $x_1 = (\vb_1, \mub_1)$ with $\mub_1$ being the uniform distribution, it holds
$
B_R(x'||x_1) = \tilde{O}(1)
$ for any $x' \in\XX$).

To define the feedback direction $g_n$, we slightly modify the per-round loss $l_n$ in \eqref{eq:per-round loss of RL} and consider a new loss 
\begin{align} \label{eq:modified per-round loss}
\textstyle
h_n(x) \coloneqq \bb_{\mub_n}^\t \vb  + \mub^\t (\frac{1}{1-\gamma} \oneb - \ab_{\vb_n})
\end{align}
that shifts $l_n$ by a constant, where $\oneb$ is the vector of ones. 
One can verify that $l_n(x)-l_n(x') = h_n(x) - h_n(x')$,  for all $x,x' \in \XX$ \blue{when $\mub, \mub'$ in $x$ and $x'$ satisfy $\norm{\mub}_1 = \norm{\mub'}_1$ (which holds for \cref{alg:md for RL}).} Therefore, using $h_n$ does not change regret. The reason for using $h_n$ instead of $l_n$ is to make $\nabla_\mub h_n((\vb,\mub))$ (and its unbiased approximation) a positive vector, 
so the regret bound can have a better dimension dependency. This is a common trick used in online learning (e.g. EXP3 \cite{journals/ftopt/Hazan16}) for optimizing variables living in a simplex ($\mub$ here).

We set the first-order feedback $g_n$ as an unbiased \emph{sampled} estimate of $\nabla h_n(x_n)$. In round $n$, this is realized by two independent calls of the generative model:
\begin{align} \label{eq:stochastic gradient estimate}
\hspace{-2mm} g_n 
%
= \begin{bmatrix}
\tilde{\pb}_n + \frac{1}{1-\gamma}(\gamma\tilde{\Pb}_n -\Eb_n)^\t \tilde{\mub}_n \\
|\SS||\AA| (\frac{1}{1-\gamma
} \hat{\oneb}_n - \hat{\rb}_n - \frac{1}{1-\gamma}(\gamma\hat{\Pb}_n-\hat{\Eb}_n)\vb_n)
\end{bmatrix}
\end{align}
Let $g_n = [\gb_{n,v}; \gb_{n,\mu}]$.
For $\gb_{n,v}$, we sample $\pb$,  sample $\mub_n$ to get a state-action pair, and query the transition $\Pb$ at the state-action pair sampled from $\mub_n$. ($\tilde{\pb}_n$, $\tilde{\Pb}_n$, and $\tilde{\mub}_n$ denote the single-sample estimate of these probabilities.)
For $\gb_{n,\mu}$, we first sample \emph{uniformly} a state-action pair (which explains the factor $|\SS||\AA|$), and then query the reward $\rb$ and the transition $\Pb$. ($\hat{\oneb}_n$, $\hat{\rb}_n$, $\hat{\Pb}_n$, and $\hat{\Eb}_n$ denote the single-sample estimates.) To emphasize, we use $\tilde{\cdot}$ and $\hat{\cdot}$ to distinguish the empirical quantities obtained by these  two independent queries. By construction, we have $\gb_{n,\mu} \geq 0$.
It is clear that this direction $g_n$ is unbiased, i.e. $\E[g_n] = \nabla h_n(x_n)$. Moreover, it is extremely sparse and can be computed using $O(1)$ sample, computational, and memory complexities.

Below we show this algorithm, despite being extremely simple, has strong theoretical guarantees. In other words, we obtain simpler versions of the algorithms proposed in \citep{wang2016online,wang2017randomized,chen2018scalable} but with improved performance. 
\begin{restatable}{theorem}{SampleComplexityOfMirrorDescent} \label{th:sample complexity of mirror descent}
With probability  $1-\delta$, \cref{alg:md for RL} learns an $\epsilon$-optimal policy with  
$
\tilde{O}\left(\frac{|\SS||\AA|\log(\frac{1}{\delta})}{(1-\gamma)^2\epsilon^2}\right)
$ samples.
\end{restatable}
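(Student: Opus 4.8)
The plan is to combine the reduction of \cref{th:reduction of RL} with a high‑probability regret bound for the stochastic mirror descent in \cref{alg:md for RL}; the subtlety is that the relevant comparator is the \emph{data‑dependent} point $y_N^* = (\vb^{\hat\pi_N},\mub^*)$ furnished by \cref{pr:clever residual bound}. By \cref{th:reduction of RL}, the averaged output $\hat x_N=\frac1N\sum_{n=1}^N x_n$ of \cref{alg:md for RL} satisfies $V^*(p)-V^{\hat\pi_N}(p)\le \frac1N\regret_N(y_N^*)$, where the regret is measured with the true losses $l_n$ of \eqref{eq:per-round loss of RL}. Since $l_n$ and the shifted loss $h_n$ of \eqref{eq:modified per-round loss} coincide up to an additive constant whenever the $\mub$‑coordinates lie on the simplex (true for every iterate and for $\mub^*$), and $h_n$ is affine, we may write $\regret_N(y_N^*)=\sum_{n=1}^N\langle\nabla h_n(x_n),\,x_n-y_N^*\rangle$. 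Introducing the martingale‑difference noise $\xi_n\coloneqq \nabla h_n(x_n)-g_n$ (the estimator $g_n$ of \eqref{eq:stochastic gradient estimate} is conditionally unbiased for $\nabla h_n(x_n)$), I split
\begin{equation*}
\regret_N(y_N^*)=\underbrace{\textstyle\sum_{n}\langle g_n,\,x_n-y_N^*\rangle}_{\text{(A): regret on observed gradients}}+\underbrace{\textstyle\sum_{n}\langle \xi_n,\,x_n-y_N^*\rangle}_{\text{(B): stochastic noise}} .
\end{equation*}

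For (A), the separable Bregman divergence \eqref{eq:Bregman divergence choice} makes \eqref{eq:mirror descent} decouple into a projected‑gradient step on $\vb\in\VV$ and an exponential‑weights step on $\mub\in\MM$. The standard analyses give, uniformly over $x'=(\vb',\mub')\in\XX$, a ``bias'' contribution $\frac1\eta\big(\frac12\|\vb'-\vb_1\|_2^2/|\SS|+KL(\mub'\|\mub_1)\big)=\tilde{O}(1/\eta)$ (using $\|\vb'-\vb_1\|_2^2\le|\SS|$ and $KL(\mub'\|\mub_1)\le\log(|\SS||\AA|)$ because $\mub_1$ is uniform), plus gradient‑dependent terms $\frac{\eta|\SS|}{2}\sum_n\|g_{n,v}\|_2^2$ and $\frac{\eta}{2}\sum_n\langle\mub_n,g_{n,\mu}^{2}\rangle$ (with $g_{n,\mu}^{2}$ the entrywise square) — the latter is the \emph{local‑norm} term, available precisely because $g_{n,\mu}\ge 0$ by the design of $h_n$, and is the reason the simplex block costs $|\SS||\AA|$ rather than $(|\SS||\AA|)^2$. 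Now $g_{n,v}$ always has at most three nonzeros of size $O(1/(1-\gamma))$, so $\sum_n\|g_{n,v}\|_2^2=O(N/(1-\gamma)^2)$ pathwise; and $g_{n,\mu}$ is supported on a single, uniformly drawn pair with value in $[0,\,2|\SS||\AA|/(1-\gamma)]$, so $\langle\mub_n,g_{n,\mu}^{2}\rangle\le\frac{2|\SS||\AA|}{1-\gamma}\langle\mub_n,g_{n,\mu}\rangle$ while $\E[\langle\mub_n,g_{n,\mu}\rangle\mid x_n]=\langle\mub_n,\nabla_\mu h_n(x_n)\rangle\le \frac{2}{1-\gamma}$; Freedman's inequality then gives $\sum_n\langle\mub_n,g_{n,\mu}\rangle=O(N/(1-\gamma))$, hence $\sum_n\langle\mub_n,g_{n,\mu}^{2}\rangle=O(|\SS||\AA|N/(1-\gamma)^2)$, with high probability once $N\gtrsim|\SS||\AA|\log(1/\delta)$. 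So $\text{(A)}=\tilde{O}\big(1/\eta+\eta|\SS||\AA|N/(1-\gamma)^2\big)$ w.h.p.

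For (B), split $\xi_n=((\xi_n)_v,(\xi_n)_\mu)$. In the $\mub$‑block the comparator $\mub^*$ is \emph{fixed}, so $\sum_n\langle(\xi_n)_\mu,\mub_n-\mub^*\rangle$ is a genuine martingale whose increments are $O(|\SS||\AA|/(1-\gamma))$ but whose conditional second moments are only $O(|\SS||\AA|/(1-\gamma)^2)$ (the factor $|\SS||\AA|$ from the importance weight, $\frac1{(1-\gamma)^2}$ from the range of the coefficients); Freedman yields $\tilde{O}(\sqrt{|\SS||\AA|N}/(1-\gamma))$. In the $\vb$‑block the comparator $\vb^{\hat\pi_N}$ is trajectory‑dependent; writing $\sum_n\langle(\xi_n)_v,\vb_n-\vb^{\hat\pi_N}\rangle=\sum_n\langle(\xi_n)_v,\vb_n\rangle-\big\langle\sum_n(\xi_n)_v,\vb^{\hat\pi_N}\big\rangle$, the first sum is a bounded martingale contributing $\tilde{O}(\sqrt N/(1-\gamma))$, and the second is at most $\|\sum_n(\xi_n)_v\|_1$ since $\vb^{\hat\pi_N}\in\VV$. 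The crucial step is to bound $\|\sum_n(\xi_n)_v\|_1$ by applying Freedman \emph{coordinatewise} (union over the $|\SS|$ states) and then Cauchy–Schwarz: since $\sum_s\E[(g_{n,v})_s^2\mid x_n]=\E[\|g_{n,v}\|_2^2\mid x_n]=O(1/(1-\gamma)^2)$, the per‑coordinate conditional variances $V_s$ summed over coordinates are $O(N/(1-\gamma)^2)$, so $\sum_s\sqrt{V_s}\le\sqrt{|\SS|\sum_sV_s}=O(\sqrt{|\SS|N}/(1-\gamma))$ and $\|\sum_n(\xi_n)_v\|_1=\tilde{O}(\sqrt{|\SS|N}/(1-\gamma))$ w.h.p. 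Altogether $\text{(B)}=\tilde{O}(\sqrt{|\SS||\AA|N}/(1-\gamma))$ w.h.p.

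Finally, choosing $\eta=(1-\gamma)(|\SS||\AA|N)^{-1/2}$ as in \cref{alg:md for RL} balances $1/\eta$ against $\eta|\SS||\AA|N/(1-\gamma)^2$, so a union bound over the $O(1)$ concentration events gives $\regret_N(y_N^*)=\tilde{O}(\sqrt{|\SS||\AA|N}/(1-\gamma))$ with probability at least $1-\delta$; dividing by $N$, $V^*(p)-V^{\hat\pi_N}(p)=\tilde{O}\big(\frac1{1-\gamma}\sqrt{|\SS||\AA|/N}\big)$, and forcing this to be $\le\epsilon$ yields $N=\tilde{\Omega}\big(|\SS||\AA|\log(1/\delta)/((1-\gamma)^2\epsilon^2)\big)$, consistent with the condition $N\gtrsim|\SS||\AA|\log(1/\delta)$ used above. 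The hard part will be exactly the two points that keep the dependence linear in $|\SS||\AA|$: (i) the EXP3‑style local‑norm mirror‑descent bound for the simplex block (which is why \cref{alg:md for RL} runs on $h_n$ rather than $l_n$, to keep $g_{n,\mu}\ge0$), so the gradient term scales like $\sum_n\langle\mub_n,g_{n,\mu}^{2}\rangle$ instead of $\sum_n\|g_{n,\mu}\|_\infty^2$; and (ii) variance‑adaptive martingale concentration applied coordinatewise, together with Cauchy–Schwarz, for the value‑function noise $\|\sum_n(\xi_n)_v\|_1$ against the data‑dependent comparator — a plain Azuma bound or the worst‑case dual norm would each cost an extra $\sqrt{|\SS||\AA|}$.
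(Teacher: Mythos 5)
Your proposal is correct and its skeleton is the paper's: invoke \cref{th:reduction of RL} with the data-dependent comparator $y_N^*=(\vb^{\hat\pi_N},\mub^*)$, pass from $l_n$ to the shifted loss $h_n$, decompose the regret into the observed-gradient regret plus noise terms (your (A)+(B) is exactly a regrouping of the paper's three-term split in \eqref{eq:regret rearrangement}), bound (A) with the standard mirror-descent telescoping plus the EXP3-style local-norm bound for the simplex block (the paper's \cref{lm:exp3 bound}), and bound the noise with Bernstein/Freedman-type martingale concentration. The one place you genuinely depart from the paper is the treatment of the trajectory-dependent comparator $\vb^{\hat\pi_N}$ in the noise term: the paper covers $\VV$ with an $\epsilon$-net (covering number $(1/\epsilon)^{|\SS|}$, so only a $\log$-order cost) and takes a supremum via \cref{lm:union bound of Lipshitz functions}, whereas you use $\|\vb^{\hat\pi_N}\|_\infty\le 1$ to reduce to $\|\sum_n(\xi_n)_v\|_1$ and then apply Freedman coordinatewise plus Cauchy--Schwarz on $\sum_s\sqrt{V_s}\le\sqrt{|\SS|\sum_s V_s}$. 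Both give the same $\tilde{O}(\sqrt{|\SS|N}/(1-\gamma))$ rate; your route avoids covering numbers and is more elementary, but note that it quietly requires the \emph{variance-adaptive} (self-normalized/peeling) form of Freedman, since the per-coordinate quadratic variations $V_s$ are random and only their sum admits a deterministic bound --- a detail worth making explicit. A second, minor difference: for $\sum_n\|\gb_{n,\mu}\|_{\mub_n}^2$ you exploit $g_{n,\mu}^2\le\frac{2|\SS||\AA|}{1-\gamma}g_{n,\mu}$ entrywise and concentrate the linear functional, while the paper applies Azuma--Hoeffding directly to the squared quantities at the price of increments of order $|\SS||\AA|^2/(1-\gamma)^2$; your version gives a marginally cleaner lower-order term. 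No gap affects the stated rate.
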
%
Note that the above statement makes no assumption on the MDP (except the tabular setup for simplifying analysis). 
Also, because the definition of value function in \eqref{eq:value function} is scaled by a factor $(1-\gamma)$, the above result translates into a sample complexity  in  $\tilde{O}\left(\frac{|\SS||\AA|\log(\frac{1}{\delta})}{(1-\gamma)^4\epsilon^2}\right)$ for the conventional discounted accumulated rewards.

\subsection{Proof Sketch of \cref{th:sample complexity of mirror descent}} \label{sec:proof sketch of sample complexity of mirror descent}

The proof is based on the basic property of mirror descent and martingale concentration. We provide a sketch here; please refer to \cref{app:proof of sample complexity of mirror descent} for details. 
Let $y_N^* = (\vb^{\hat{\pi}_N}, \mub^*)$.
We bound the regret in \cref{th:reduction of RL}  by the following rearrangement, where the first equality below is because $h_n$ is a constant shift from $l_n$. 
\begin{small}
\begin{align*} 
&\regret_N (y_N^*)
=  \sum_{n=1}^{N} h_n(x_n) - \sum_{n=1}^N h_n(y_N^* ) \nonumber \\
&\leq  
\left(\sum_{n=1}^{N} (\nabla h_n(x_n)-g_n)^\t x_n \right)  
 + \left( \max_{x\in\XX} \sum_{n=1}^N g_n^\t (x_n -x) \right)
\\ &\quad + 
\left( \sum_{n=1}^N   (g_n - \nabla h_n(x_n))^\t y_N^* \right)
\end{align*}
\end{small}%
We recognize the first term is a martingale, because $x_n$ does not depend on $g_n$. Therefore, we can appeal to a Bernstein-type martingale concentration and prove it is in $\tilde{O}( \frac{\sqrt{N|\SS||\AA|\log(\frac{1}{\delta})}}{1-\gamma} )$.  For the second term, by treating $g_n^\t x$ as the per-round loss, we can use standard regret analysis of mirror descent and show a bound in $\tilde{O}(\frac{\sqrt{N|\SS||\AA|}}{1-\gamma})$. 
For the third term, because $\vb^{\hat{\pi}_N}$ in $y_N^* = (\vb^{\hat{\pi}_N}, \mub^*)$ depends on $\{g_n\}_{n=1}^N$, it is \emph{not} a martingale. Nonetheless, we are able to handle it through a union bound and show it is again no more than $\tilde{O}( \frac{\sqrt{N|\SS||\AA|\log(\frac{1}{\delta})}}{1-\gamma})$. Despite the union bound, it does not increase the rate because we only need to handle $\vb^{\hat{\pi}_N}$, not $\mub^*$ which induces a martingale.
To finish the proof, we substitute this high-probability regret bound into \cref{th:reduction of RL} to obtain the desired claim.

\subsection{Extension to Function Approximators}

The above algorithm assumes the tabular setup for illustration purposes.
\red{In \cref{app:sample complexity using function approximators}, we describe a direct extension of \cref{alg:md for RL} that uses linearly parameterized function approximators of the form $x_\theta = (\Phib\thetab_v, \Psib\thetab_\mu)$, where columns of bases $\Phib,\Psib$ belong to $\VV$ and $\MM$, respectively, and $(\thetab_v,\thetab_\mu) \in \Theta$. 

Overall the algorithm stays the same, except the gradient is computed by chain-rule, which can be done in $O(dim(\Theta))$ time and space. While this seems worse, the computational complexity per update actually improves to $O(dim(\Theta))$ from the slow $O(|\SS||\AA|)$ (required before for the projection in \eqref{eq:mirror descent}), as now we only optimize in $\Theta$. 
Moreover, we prove that its sample complexity is also better, though at the cost of bias $\epsilon_{\Theta,N}$ in \cref{cr:reduction for funcapp}. Therefore, the algorithm becomes applicable to large-scale or continuous problems.
\begin{restatable}{theorem}{SampleComplexityOfMirrorDescentWithBasis} \label{th:sample complexity of mirror descent with basis}
	Under a proper choice of $\Theta$ and $B_R$, with probability  $1-\delta$, \cref{alg:md for RL} learns an $(\epsilon+\epsilon_{\Theta,N})$-optimal policy with  
	$
	\tilde{O}\left(\frac{dim(\Theta)\log(\frac{1}{\delta})}{(1-\gamma)^2\epsilon^2}\right)
	$ samples.
\end{restatable}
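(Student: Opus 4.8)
The plan is to combine \cref{cr:reduction for funcapp} with a regret analysis of mirror descent run directly in the parameter space $\Theta$, closely following the argument behind \cref{th:sample complexity of mirror descent}. By \cref{cr:reduction for funcapp}, $V^{\hat{\pi}_N}(p) \ge V^*(p) - \regret_N(\Theta)/N - \epsilon_{\Theta,N}$, so it suffices to show that \cref{alg:md for RL}, with the gradient replaced by its chain-rule image in $\Theta$, attains $\regret_N(\Theta) = \tilde{O}\big(\sqrt{\dim(\Theta)\,N\,\log(1/\delta)}/(1-\gamma)\big)$ with probability $1-\delta$; then choosing $N = \tilde{\Omega}\big(\dim(\Theta)\log(1/\delta)/((1-\gamma)^2\epsilon^2)\big)$ forces $\regret_N(\Theta)/N \le \epsilon$, and $\epsilon_{\Theta,N}$ is carried through unchanged (it is bounded in \cref{pr:size of epsilon}). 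The structural fact that makes this work is that $l_n$ in \eqref{eq:per-round loss of RL} is affine in $(\vb,\mub)$, hence $l_n((\Phib\thetab_v,\Psib\thetab_\mu))$ is affine — in particular convex — in $\thetab=(\thetab_v,\thetab_\mu)$, so mirror descent over $\Theta$ is a bona fide no-regret method.

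Next I would pin down the ``proper choice'' of $\Theta$ and $B_R$. Taking $\Theta = \Theta_v \times \Delta$, with $\Delta$ the probability simplex of the appropriate dimension, guarantees $\Psib\thetab_\mu \in \MM$ for every $\thetab_\mu \in \Delta$ since $\MM$ is convex and the columns of $\Psib$ lie in $\MM$; only $\psib_\theta \in \MM$ is needed (not $\phib_\theta \in \VV$), per the remark after \eqref{eq:function approximators}, so $\Theta_v$ can be any bounded convex set. For $B_R$ I would use a suitably scaled squared Euclidean divergence on $\thetab_v$ plus a KL divergence on $\thetab_\mu$, exactly as in \eqref{eq:Bregman divergence choice}; initializing $\thetab_\mu$ at the uniform point keeps the Bregman radius of $\Theta$ at $\tilde{O}(1)$. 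The stochastic feedback in $\theta$-coordinates is the chain-rule image of $g_n$ in \eqref{eq:stochastic gradient estimate} (with the constant shift $h_n$ of \eqref{eq:modified per-round loss}), i.e. essentially $\Phib^\t\gb_{n,v}$ and $\Psib^\t\gb_{n,\mu}$; it remains unbiased, is computable in $O(\dim(\Theta))$ time and memory, and — crucially — because the entries of $\Phib,\Psib$ are bounded (their columns live in $\VV,\MM$) its relevant second-moment quantities scale with $\dim(\Theta)$ rather than $|\SS||\AA|$. This is precisely the source of the improved dimension dependence.

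With these ingredients I would reproduce the three-term decomposition from \cref{sec:proof sketch of sample complexity of mirror descent}. Using affineness of $h_n$ and writing $x^\dagger$ for the (sample-path-dependent) worst-case comparator in $\XX_\Theta$, split $\regret_N(\Theta)$ into (i) the martingale term $\sum_n (\nabla h_n(x_n)-g_n)^\t x_n$ in $\theta$-coordinates, (ii) the ``pure'' mirror-descent regret $\max_{x\in\XX_\Theta}\sum_n g_n^\t(x_n-x)$ against the sampled gradients, and (iii) the comparator term $\sum_n (g_n-\nabla h_n(x_n))^\t x^\dagger$. Term (i) is a martingale difference sequence ($x_n$ is independent of $g_n$) with per-step variance proxy $\tilde{O}(\dim(\Theta)/(1-\gamma)^2)$, so a Freedman/Bernstein-type inequality gives $\tilde{O}(\sqrt{\dim(\Theta)N\log(1/\delta)}/(1-\gamma))$; term (ii) is the textbook mirror-descent bound for the Euclidean$\times$entropic geometry on $\Theta_v\times\Delta$, which after optimizing $\eta$ is $\tilde{O}(\sqrt{\dim(\Theta)N}/(1-\gamma))$; term (iii) is bounded uniformly over $\XX_\Theta$ — since $\XX_\Theta$ is the linear image of $\Theta_v\times\Delta$, the maximum of a linear functional is attained at a vertex of $\Delta$, so a union bound over the $\dim(\Theta)$ columns of $\Psib$ (plus a scalar concentration for the $\Theta_v$ part) contributes only $\log$ factors. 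Summing these and substituting into \cref{cr:reduction for funcapp} yields the stated bound.

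I expect term (iii) to be the main obstacle. Unlike the tabular case — where the $\mub^*$ half of the comparator $y_N^*=(\vb^{\hat{\pi}_N},\mub^*)$ is fixed and only $\vb^{\hat{\pi}_N}$ is sample-dependent — here the entire minimizing comparator $x^\dagger\in\XX_\Theta$ depends on $\{g_n\}_{n=1}^N$, so it is not a martingale and cannot be handled by a single concentration bound; one must establish a deviation bound that is uniform over $\XX_\Theta$. The delicate points are (a) verifying that the covering of $\XX_\Theta$ (equivalently, enumerating the simplex vertices of the $\mu$-component) enters only logarithmically and does not reintroduce a $|\SS||\AA|$ factor, which hinges on the boundedness of $\Phib,\Psib$; and (b) checking that the mirror-descent regret constant for the product geometry on $\Theta_v\times\Delta$ scales as $\sqrt{\dim(\Theta)}$ up to logs, which requires bounding both the Bregman radius of $\Theta$ and the dual norm of the chain-rule gradients under the chosen regularizer.
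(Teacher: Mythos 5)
Your overall architecture matches the paper's: reduce via \cref{cr:reduction for funcapp}, then bound the regret over $\Theta$ with the same three-term decomposition (martingale, mirror-descent regret against the sampled gradients, comparator term) used for \cref{th:sample complexity of mirror descent}, with a Euclidean-times-entropic Bregman divergence on $\Theta$. But there is a genuine gap, and it sits exactly at the point the theorem is about. You take the stochastic feedback for the $\mu$-block to be the chain-rule image $\Psib^\t \gb_{n,\mu}$ of the \emph{tabular} estimator in \eqref{eq:stochastic gradient estimate}, and assert that boundedness of the columns of $\Psib$ makes its second moment scale with $\dim(\Theta)$. That is false: the tabular $\gb_{n,\mu}$ samples a state-action pair uniformly over $\SS\times\AA$ and carries the importance weight $|\SS||\AA|$, so $(\Psib^\t\gb_{n,\mu})^\t\thetab_\mu = |\SS||\AA|\, c_{i_n}\mu_{i_n}$ with $\mub=\Psib\thetab_\mu$ and $|c_{i_n}|\leq 2/(1-\gamma)$, whose conditional second moment is $|\SS||\AA|\sum_i c_i^2\mu_i^2 = O(|\SS||\AA|/(1-\gamma)^2)$ and is tight at that order (take $\Psib=\Ib$ and the two estimators coincide). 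With this feedback the regret is $\tilde{O}(\sqrt{N|\SS||\AA|}/(1-\gamma))$ and the claimed $|\SS|,|\AA|$-free sample complexity does not follow. The paper's fix, flagged as ``the key'' at the end of \cref{sec:algorithm demos} and carried out in \cref{app:sample complexity using function approximators}, is to change the \emph{sampling distribution}, not merely push the gradient through the chain rule: sample a column of $\Psib$ uniformly (importance weight $\dim(\thetab_\mu)$), then sample a state-action pair from that column, which is a distribution by construction. This gives an unbiased estimate of $\nabla_{\thetab_\mu} h_n$ whose conditional second moment and local-norm energy are $O(\dim(\thetab_\mu)/(1-\gamma)^2)$, which is what the rest of your argument needs. (Your treatment of the $v$-block, $\Phib^\t\gb_{n,v}$, is fine.)

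The remaining differences are routes, not gaps. For the comparator term you union-bound over the vertices of the simplex; the paper instead argues the $\mu$-component of the worst comparator is fixed before learning and only covers the Euclidean ball for the $v$-component. Either works, but note that the log-covering number of that ball is $O(\dim(\thetab_v))$, so the uniform deviation over the $v$-part costs a $\sqrt{C_v^2\dim(\thetab_v)}$ factor rather than ``only log factors''; this is still absorbed into the final $\tilde{O}(\sqrt{N\dim(\Theta)}/(1-\gamma))$ regret because $C_v$ is a universal constant, so the theorem survives that step.
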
%
The proof is in \cref{app:sample complexity using function approximators}, and mainly follows \cref{sec:proof sketch of sample complexity of mirror descent}. First, we choose some $\Theta$ to satisfy \eqref{eq:function approximators} so we can use \cref{cr:reduction for funcapp} to reduce the problem into regret minimization.
To make the sample complexity independent of $|\SS|$,$|\AA|$, the key is to uniformly sample over the columns of $\Psib$ (instead of over all states and actions like \eqref{eq:stochastic gradient estimate}) when computing unbiased estimates of $\nabla_{\thetab_\mu} h_n((\thetab_v,\thetab_\mu))$. The intuition is that we should only focus on the places our basis functions care about (of size $dim(\Theta)$), instead of wasting efforts to visit all possible combinations (of size $|\SS||\AA|$). 
}

\section{CONCLUSION}

We propose a reduction from RL to no-regret online learning 
that 
provides a systematic way to design new RL algorithms with performance guarantees. 
Compared with existing approaches, our framework makes no assumption on the MDP and naturally works with function approximators.
To illustrate, we design a simple RL algorithm based on mirror descent; it achieves similar sample complexity as other RL techniques, but uses minimal assumptions on the MDP and is scalable to large or continuous problems.
This encouraging result evidences the strength of the online learning perspective. 
As a future work, we believe even faster learning in RL is possible by leveraging control variate for variance reduction and by applying more advanced online techniques~\citep{rakhlin2012online,cheng2018predictor} that exploit the continuity in COL to predict the future gradients.
%

\subsubsection*{Acknowledgements}

This research is partially supported by NVIDIA Graduate Fellowship.

\bibliography{ref}



\clearpage
\onecolumn
\appendix
{\centering{\large\textbf{Appendix}}\par}

\section{Review of RL Setups} \label{app:different RL setups}

We provide an extended review of different formulations of RL for interested readers.
First, let us recall the problem setup. 
Let $\SS$ and $\AA$ be state and action spaces, and let $\pi(a|s)$ denote a policy. 
For $\gamma \in [0,1)$, we are interested in solving a $\gamma$-discounted infinite-horizon RL problem:
\begin{align} \label{eq:canonical RL}
\textstyle
\max_\pi V^\pi(p) , \qquad\text{s.t.}\quad V^\pi(p) \coloneqq (1-\gamma) \E_{s_0 \sim p}\E_{\xi \sim \rho_\pi(s_0)}  \left[\sum_{t=0}^{\infty} \gamma^t r(s_t,a_t)  \right]
\end{align}
where $V^\pi(p)$ is the discounted average return, $r:\SS\times\AA\to[0,1]$ is the reward function,
$\rho_\pi(s_0)$ denotes the distribution of trajectory $\xi = s_0, a_0, s_1, \dots$ generated by running $\pi$ from state $s_0$ in a Markov decision process (MDP), 
and $p$ is a fixed but unknown initial state distribution.

\subsection{Coordinate-wise Formulations} \label{sec:coordinate-wise formulation}

\paragraph{RL in terms of stationary state distribution}
Let $d_t^\pi(s)$ denote the state distribution at time $t$ given by running $\pi$ starting from $p$. 
We define its $\gamma$-weighted mixture as
\begin{align} \label{generalized stationary state distribution}
\textstyle
d^\pi(s) \coloneqq (1-\gamma) \sum_{t=0}^{\infty} \gamma^t d_t^\pi(s)
\end{align}
We can view $d^\pi$ in~\eqref{generalized stationary state distribution} as a {form of stationary state distribution of $\pi$}, because it is a valid probability distribution of state and satisfies the stationarity property below,
\begin{align} \tag{\ref{eq:stationarity (distribution)}}
d^\pi(s') = (1-\gamma)p(s') + \gamma \E_{s \sim d^\pi} \E_{a\sim \pi |s } [\PP(s'|s,a)]
\end{align}
where $\PP(s'|s,a)$ is the transition probability of the MDP. 
The definition in~\eqref{generalized stationary state distribution} generalizes the concept of stationary distribution of MDP; 
as $\gamma\to1$, $d^\pi$ is known as the limiting average state distribution, which is the same as the stationary distribution of the MDP under $\pi$, if one exists.
Moreover, with the property in~\eqref{eq:stationarity (distribution)}, $d^\pi$ summarizes the Markov structure of RL, and allows us to 
write \eqref{eq:canonical RL} simply as
\begin{align} \label{eq:RL in stationary distribution}
\max_{\pi} V^\pi(p), \qquad \text{s.t.}\quad V^\pi(p) =\E_{s \sim d^\pi}\E_{a \sim \pi |s}  \left[ r(s,a)  \right] 
\end{align}
after commuting the order of expectation and summation. That is, an RL problem aims to maximize the expected reward under the stationary state-action distribution generated by the policy $\pi$.

\paragraph{RL in terms of value function}
We can also write~\eqref{eq:canonical RL} in terms of value function. 
Recall 
\begin{align} \tag{\ref{eq:value function}}
\textstyle
V^\pi(s) \coloneqq  (1-\gamma) \E_{\xi \sim \rho_\pi(s_0)|s_0=s }\left[\sum_{t=0}^{\infty} \gamma^t r(s_t,a_t)  \right] 
\end{align}
is the value function of $\pi$. 
By definition, $V^\pi$ (like $d^\pi$) satisfies a stationarity property 
\begin{align} \tag{\ref{eq:statinoarity (value)}}
V^\pi(s) =  \E_{a \sim \pi | s} \left[ (1-\gamma)r(s,a) + \gamma \E_{s'\sim\PP|s,a} \left[ V^\pi(s')  \right]  \right]
\end{align}
which can be viewed as a dual equivalent of~\eqref{eq:stationarity (distribution)}. Because $r$ is in $[0,1]$, \eqref{eq:statinoarity (value)} implies $V^\pi$ lies in $[0,1]$. 

The value function $V^*$ (a shorthand of $V_{\pi^*}$) of the optimal policy $\pi^*$ of the RL problem satisfies the so-called Bellman equation~\citep{bellman1954theory}:
$
V^*(s) =  \max_{a\in\AA}  (1-\gamma)r(s,a) + \gamma \E_{s'\sim\PP|s,a} \left[ V^*(s')  \right]  
$, 
where the optimal policy $\pi^*$ can be recovered as the $\argmax$. 
Equivalently, by the definition of $\max$, the Bellman equation amounts to finding the smallest $V$ such that 
$
V(s) \geq  (1-\gamma)r(s,a) + \gamma \E_{s'\sim\PP|s,a} \left[ V(s')  \right]
$, $\forall s\in \SS, a \in \AA$.
In other words, the RL problem in~\eqref{eq:canonical RL} can be written as 
\begin{align} \label{eq:value-based RL}
\min_{V} \E_{s \sim p} [V(s)] \qquad \text{s.t.}\quad V(s) \geq  (1-\gamma)r(s,a) + \gamma \E_{s'\sim\PP|s,a} \left[ V(s')  \right],   \qquad \forall s\in \SS, a \in \AA
\end{align}

\subsection{Linear Programming Formulations} \label{sec:LP for RL}

We now connect the above two alternate expressions through the classical LP setup of RL~\cite{manne1959linear,denardo1968multichain}. 

\paragraph{LP in terms of value function}
The classic LP formulation\footnote{Our setup in~\eqref{eq:LP (value)} differs from the classic one in the $(1-\gamma)$ factor in the constraint to normalize the problem.} is simply a restatement of~\eqref{eq:value-based RL}: 
\begin{align} \tag{\ref{eq:LP (value)}}
\min_{\vb}\quad  \pb^\t \vb  \qquad 
\text{s.t.} \quad  (1-\gamma) \rb + \gamma \Pb \vb \leq \Eb \vb
\end{align}
where $\pb \in \R^{|\SS|}$, $\vb \in \R^{|\SS|}$, and $\rb \in \R^{|\SS||\AA|}$ are the vector forms of $p$, $V$, $r$, respectively,  $\Pb \in \R^{|\SS||\AA|\times|\SS|}$ is the transition probability\footnote{We arrange the coordinates in a way such that along the $|\SS||\AA|$ indices are contiguous in actions.}, and $\Eb = \Ib \otimes \oneb \in \R^{|\SS||\AA|\times|\SS|}$ (we use $|\cdot|$ to denote the cardinality of a set, $\otimes$ the Kronecker product, $\Ib\in\R^{|\SS|\times|\SS|}$ is the identity, and $\oneb \in \R^{|\AA|}$ a vector of ones). 
It is easy to verify that for all $\pb >0$, the solution to~\eqref{eq:LP (value)} is the same and equal to $\vb^*$ (the vector form of $V^*$). 


\paragraph{LP in terms of stationary state-action distribution}
Define the Lagrangian function
\begin{align} \label{eq:Lagrangian}
\LL(\vb, \fb) \coloneqq  \pb^\t \vb + \fb^\t ((1-\gamma)\rb + \gamma \Pb \vb - \Eb \vb)
\end{align}
where  $\fb\geq\zerob \in \R^{|\SS||\AA|}$ is the Lagrangian multiplier. 
By Lagrangian duality, the dual problem of~\eqref{eq:LP (value)} is given as 
$
\max_{\fb\geq \zerob}  \min_{\vb}   \LL(\vb, \fb) $.
Or after substituting the optimality condition of $\vb$
and define $\mub \coloneqq (1-\gamma) \fb$,
we can write the dual problem as another LP problem
\begin{align} \tag{\ref{eq:LP (flow)}}
\max_{\mub \geq \zerob}  \quad   \rb^\t \mub  \qquad
\text{s.t.}\quad  
(1-\gamma)\pb + \gamma \Pb^\t \mub  = \Eb^\t \mub
\end{align}
Note that this problem like~\eqref{eq:LP (value)} is normalized: we have $\norm{\mub}_1 = 1$ because  $\norm{\pb}_1 = 1$, and 
\begin{align*}
\norm{\mub}_1 = \oneb^\t  \Eb^\t \mub = (1-\gamma) \oneb^\t \pb +  \gamma \oneb^\t  \Pb^\t \mub = (1-\gamma) \norm{\pb}_1 + \gamma \norm{\mub}_1
\end{align*}
where we use the facts that $\mub \geq \zerob $ and $\Pb$ is a stochastic transition matrix.
This means that $\mub$ is a valid state-action distribution, from which we see that the equality constraint in~\eqref{eq:LP (flow)} is simply a vector form~\eqref{eq:stationarity (distribution)}. Therefore, \eqref{eq:LP (flow)} is the same as \eqref{eq:RL in stationary distribution} if we define the policy $\pi$ as the conditional distribution based on $\mub$.

\section{Missing Proofs of \cref{sec:revist}} \label{app:missing proofs}

\subsection{Proof of \cref{lm:residual}}
\ResidueEquality*
\begin{proof}
	First note that $F(x,x) = 0$. Then as $x'$ satisfies stationarity, we can use \cref{lm:performance difference lemma} below and write
	\begin{align*}
	r_{ep}(x;x')  &= F(x,x) - F(x, x') \\
	&= 	- F(x, x')\\
	&= - (\pb^\t \vb' - \pb^\t \vb)  -  \mub^\t \ab_{\vb'} +  \mub'^\t \ab_\vb &\since{Definition of $F$ in \eqref{eq:bifunction (RL)}}\\
	&= - \mub' \ab_\vb  -  \mub^\t \ab_{\vb'} +  \mub'^\t \ab_\vb &\since{\cref{lm:performance difference lemma}} \\
	&=  -  \mub^\t \ab_{\vb'} 
	\end{align*}
\end{proof}

\subsection{Proof of~\cref{lm:performance difference lemma}}
\PerformanceDifferenceLemma*
\begin{proof}
	This is the well-known performance difference lemma.
	The proof is based on the stationary properties in \eqref{eq:stationarity (distribution)} and \eqref{eq:statinoarity (value)}, which can be stated in vector form as 
	\begin{align*}
	(\mub^\pi)^\t \Eb \vb^\pi = (\mub^\pi)^\t ((1-\gamma)\rb + \gamma \Pb \vb^\pi) \qquad \text{and} \qquad
	(1-\gamma)\pb + \gamma \Pb^\t \mub^\pi  = \Eb^\t \mub^\pi
	\end{align*}
	The proof is a simple application of these two properties.	
	\begin{align*}
	\pb^\t (\vb^\pi-  \vb') &= \frac{1}{1-\gamma}(\Eb^\t \mub^\pi- \gamma \Pb^\t \mub^\pi)^\t(\vb^\pi-  \vb') \\
	&= \frac{1}{1-\gamma } (\mub^\pi)^\t(  (\Eb - \gamma \Pb)  \vb^\pi-  (\Eb - \gamma \Pb) \vb')\\
	&= \frac{1}{1-\gamma } (\mub^\pi)^\t ( (1-\gamma) \rb -  (\Eb - \gamma \Pb) \vb') = (\mub^\pi)^\t \ab_{\vb'}
	\end{align*}
	where we use the stationarity property of $\mub^\pi$ in the first equality and that  $\vb^\pi$ in the third equality.
\end{proof}

\subsection{Proof of \cref{pr:rough residual bound}}
\RoughResidueBound*
\begin{proof}
This proof mainly follows the steps in \cite{wang2017randomized} but written in our notation.
First \cref{lm:residual} shows $r_{ep}(x;x^*) = -\mub^\t \ab_{\vb^*}$. We then lower bound $-\mub^\t \ab_{\vb^*}$ by reversing the proof of the performance difference lemma (\cref{lm:performance difference lemma}).
\begin{align*}
\mub^\t \ab_{\vb^*} &= \frac{1}{1-\gamma } \mub^\t( (1-\gamma) \rb -  (\Eb - \gamma \Pb) \vb^*) &\since{Definition of $\ab_{\vb^*}$}\\
&= \frac{1}{1-\gamma} \mub^\t(  (\Eb - \gamma \Pb)  \vb^{\pi_\mub}-  (\Eb - \gamma \Pb) \vb^*) &\since{Stationarity of $\vb^{\pi_\mub}$}
\\
&=\frac{1}{1-\gamma} \mub^\t (\Eb- \gamma \Pb)(\vb^{\pi_\mub}-  \vb^*) \\
&=\frac{1}{1-\gamma}\db^\t(\Ib- \gamma \Pb_{\pi_\mub})(\vb^{\pi_\mub}-  \vb^*) 
\end{align*}
where we define $\db \coloneqq \Eb^\t \mub$ and  $\Pb_{\pi_\mub}$ as the state-transition of running policy $\pi_\mub$.

We wish to further upper bound this quantity. 
To proceed, we appeal to the Bellman equation of the optimal value function $\vb^*$ and the stationarity of $\vb^{\pi_\mub}$:
\begin{align*}
\vb^* \geq (1-\gamma)\rb_{\pi_\mub} + \gamma \Pb_{\pi_\mub} \vb^*
\qquad \text{and}\qquad
\vb^{\pi_\mub} = (1-\gamma)\rb_{\pi_\mub} + \gamma \Pb_{\pi_\mub} \vb^{\pi_\mub},
\end{align*}
which together imply that $(\Ib- \gamma \Pb_{\pi_\mub})(\vb^{\pi_\mub}-  \vb^*) \leq 0$.
We will also use the stationarity of $\db^{\pi_\mub}$ (the average state distribution of $\pi_\mub$): 
$
\db^{\pi_\mub} = (1-\gamma) \pb + \gamma \Pb_{\pib_\mub}^\t \db^{\pi_\mub}
$.

Since $\db\geq (1-\gamma) \pb$ in the assumption, we can then write
\begin{align*}
\mub^\t \ab_{\vb^*} 
&= \frac{1}{1-\gamma}\db^\t(\Ib- \gamma \Pb_{\pi_\mub})(\vb^{\pi_\mub}-  \vb^*) \\
&\leq \pb^\t(\Ib- \gamma \Pb_{\pi_\mub})(\vb^{\pi_\mub}-  \vb^*) \\
&\leq - \min_s p(s)  \norm{(\Ib- \gamma \Pb_{\pi_\mub})(\vb^{\pi_\mub}-  \vb^*)}_\infty  \\
&\leq - \min_s p(s) (1-\gamma) \norm{\vb^{\pi_\mub}-  \vb^*}_\infty.
\end{align*}
Finally, flipping the sign of the inequality concludes the proof.
\end{proof}

\subsection{Proof of \cref{pr:residual lower bound}}
\ResidueLowerBound*
\begin{proof}
	
	We show this equality holds for a class of MDPs. 
	For simplicity, let us first consider an MDP with three states $1$, $2$, $3$ and for each state there are three actions ($left$, $right$, $stay$). They correspond to intuitive, deterministic transition dynamics
	\begin{align*}
	\PP(\max\{s-1,1\}|s,left) =1, \quad \PP(\min\{s+1,3\}|s,right) =1, \quad \PP(s|s,stay)=1.
	\end{align*}
	We set the reward as
	$r(s,right)=1$ for $s=1,2,3$ and zero otherwise.
	It is easy to see that the optimal policy is $\pi^*(right|s)=1$, which has value function $\vb^* = [1, 1, 1]^\t$.
	
	Now consider $x= (\vb,\mub)\in\XX$. To define $\mub$, let $\mu(s,a) = d(s)\pi_\mub(a|s)$. We set
	\begin{align*}
	\pi_\mub(right|1) = 1, \quad \pi_\mub(stay|2) = 1, \quad \quad \pi_\mub(right|3) = 1
	\end{align*}
	That is, $\pi_\mub$ is equal to $\pi^*$ except when $s=2$.
	One can verify the value function of this policy is $\vb^{\pi_\mub} = [(1-\gamma), 0, 1]^\t$. 
	
	As far as $d$ is concerned ($\db = \Eb^\t\mub$), suppose the initial distribution is uniform, i.e. $\pb = [1/3,1/3,1/3]^\t$, we choose $d$ as $\db = (1-\gamma)\pb + \gamma [1, 0 , 0]^\t$, which satisfies the assumption in \cref{pr:rough residual bound}. Therefore, we have $\mub \in \MM'$ and we will let $\vb$ be some arbitrary point in $\VV$.
	
	Now we show for this choice $x= (\vb,\mub)\in \VV \times \MM'$, the equality in \cref{pr:rough residual bound} holds. By \cref{lm:residual}, we know $r_{ep}(x;x') = - \mub^\t \ab_{\vb^*}$. Recall the advantage is defined as
	$
	\ab_{\vb^*} = \rb + \frac{1}{1-\gamma}(\gamma \Pb - \Eb) \vb^*
	$.
	Let $A_{V^*}(s,a)$ denote the functional form of $\ab_{\vb^*}$ and 
	define the expected advantage:
	\begin{align*}
	A_{V^*}(s,\pi_\mub) \coloneqq \E_{a\sim\pi_\mub}[A_{V^*}(s,a)].
	\end{align*}
	We can verify it has the following values:
	\begin{align*}
	A_{V^*}(1,\pi_\mub) = 0, \quad A_{V^*}(2,\pi_\mub) = -1, \quad 
	A_{V^*}(3,\pi_\mub) = 0.
	\end{align*}
	
	Thus, the above construction yields
	\begin{align*}
	r_{ep}(x;x^*) = - \mub^\t \ab_{\vb^*} = \frac{(1-\gamma)}{3} = (1-\gamma) \min_s p(s) \norm{\vb^* - \vb^{\pi_\mub}}_\infty 
	\end{align*}
	One can easily generalize this $3$-state MDP to an $|\SS|$-state MDP where states are partitioned into three groups. 
\end{proof}

\section{Missing Proofs of \cref{sec:the reduction}} \label{app:missing proofs of reduction}

\subsection{Proof of \cref{pr:clever residual bound}}
\CleverResidueBound*
\begin{proof}
	First we generalize \cref{lm:residual}.
	\begin{restatable}{lemma}{RelaxedResidueEquality}\label{lm:relaxed residual}
		Let $x = (\vb, \mub)$ be arbitrary. Consider $\tilde{x}' = (\vb'+\ub', \mub')$, where $\vb'$ and $\mub'$ are the value function and state-action distribution of policy $\pi_{\mub'}$, and $\ub'$ is arbitrary. It holds that
		$
		r_{ep}(x;\tilde{x}') = - \mub^\t \ab_{\vb'} - \bb_{\mub}^\t \ub' 
		$.
	\end{restatable}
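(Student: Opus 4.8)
The plan is to unfold the definition of $r_{ep}(x;\tilde{x}')$, peel off the extra term contributed by the perturbation $\ub'$, and reduce the remainder to \cref{lm:residual}. Since $F$ is skew-symmetric, $F(x,x)=0$, so $r_{ep}(x;\tilde{x}') = F(x,x)-F(x,\tilde{x}') = -F(x,\tilde{x}')$; it therefore suffices to compute $F(x,\tilde{x}')$ for $\tilde{x}' = (\vb'+\ub',\mub')$ using the explicit form of $F$ in \eqref{eq:bifunction (RL)}.

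First I would use that the advantage function is affine in its argument: from \eqref{eq:advantage function}, $\ab_{\vb'+\ub'} = \ab_{\vb'} + \frac{1}{1-\gamma}(\gamma\Pb - \Eb)\ub'$. Substituting into $F(x,\tilde{x}') = \pb^\t(\vb'+\ub') + \mub^\t\ab_{\vb'+\ub'} - \pb^\t\vb - \mub'^\t\ab_\vb$ and collecting all terms involving $\ub'$, the perturbation contributes exactly $\pb^\t\ub' + \frac{1}{1-\gamma}\mub^\t(\gamma\Pb - \Eb)\ub'$. The one manipulation that needs care is the adjoint identity $\mub^\t(\gamma\Pb-\Eb)\ub' = ((\gamma\Pb-\Eb)^\t\mub)^\t\ub'$, which, combined with the definition of the balance function in \eqref{eq:consistency function}, shows this contribution equals $\bb_\mub^\t\ub'$ — this is precisely the duality between $\ab_\cdot$ and $\bb_\cdot$ remarked on after \eqref{eq:consistency function}. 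Hence $F(x,\tilde{x}') = F(x,x') + \bb_\mub^\t\ub'$, where $x' \coloneqq (\vb',\mub')$.

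Finally I would invoke \cref{lm:residual}: by hypothesis $\vb'$ and $\mub'$ are the value function and state-action distribution of $\pi_{\mub'}$, so $x'=(\vb',\mub')$ meets the assumption of that lemma, giving $-F(x,x') = r_{ep}(x;x') = -\mub^\t\ab_{\vb'}$, i.e. $F(x,x') = \mub^\t\ab_{\vb'}$. Combining, $r_{ep}(x;\tilde{x}') = -F(x,\tilde{x}') = -\mub^\t\ab_{\vb'} - \bb_\mub^\t\ub'$, which is the claim; setting $\ub'=0$ recovers \cref{lm:residual}. There is no genuine obstacle here beyond careful bookkeeping of the linear-algebraic identities — the only nontrivial step is recognizing the balance function hidden in the $\mub$-weighted action of $\gamma\Pb-\Eb$ on $\ub'$.
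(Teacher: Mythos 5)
Your proof is correct and follows essentially the same route as the paper's: both expand $F(x,\tilde{x}')$ using the affinity of $\vb\mapsto\ab_{\vb}$, isolate the $\ub'$-contribution as $\pb^\t\ub' + \frac{1}{1-\gamma}\mub^\t(\gamma\Pb-\Eb)\ub' = \bb_\mub^\t\ub'$, and then invoke \cref{lm:residual} on the unperturbed pair $x'=(\vb',\mub')$. No gaps.
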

	To proceed, we write  $y_x^* = (\vb^* + (\vb^{\pi_\mub} - \vb^*), \mub^*)$ and use \cref{lm:relaxed residual}, which gives
	$
	r_{ep}(x;y_x^*) = - \mub^\t \ab_{\vb^*} - \bb_{\mub}^\t (\vb^{\pi_\mub} - \vb^*)
	$. 
	To relate this equality to the policy performance gap, we also need the following equality.
	\begin{restatable}{lemma}{AdvantageBound}\label{lm:advantage bound}
		For $\mub\in\MM$, it holds that $- \mub^\t \ab_{\vb^*}= V^{*}(p) - V^{\pi_\mub}(p)  + \bb_\mub^\t (\vb^{\pi_\mub} - \vb^*)$.
	\end{restatable}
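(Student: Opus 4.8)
The plan is to obtain \cref{lm:advantage bound} from two elementary observations and then combine them by routine bookkeeping. The first observation is a purely algebraic \emph{balance--advantage duality}: for every $\vb\in\R^{|\SS|}$ and every $\mub$,
\begin{align*}
\bb_\mub^\t \vb \;=\; \pb^\t \vb + \mub^\t \ab_\vb - \mub^\t \rb .
\end{align*}
This is immediate from the definitions \eqref{eq:advantage function} and \eqref{eq:consistency function}, since $\ab_\vb$ and $\bb_\mub$ share the same bilinear term: $\mub^\t \ab_\vb = \mub^\t \rb + \tfrac{1}{1-\gamma}\,\mub^\t(\gamma\Pb - \Eb)\vb$ while $\bb_\mub^\t \vb = \pb^\t \vb + \tfrac{1}{1-\gamma}\,\mub^\t(\gamma\Pb - \Eb)\vb$, so subtracting gives the displayed identity.

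The second observation is that $\mub^\t \ab_{\vb^{\pi_\mub}} = 0$ for every $\mub\in\MM$. To see this I would set $d(s)\coloneqq \sum_a \mu(s,a)$, so that $\mu(s,a) = d(s)\,\pi_\mub(a|s)$ by \eqref{eq:policy from flow} and hence $\mub^\t \ab_{\vb^{\pi_\mub}} = \sum_s d(s)\,\E_{a\sim\pi_\mub|s}[\ab_{\vb^{\pi_\mub}}(s,a)]$. Writing out the advantage via \eqref{eq:advantage function}, the inner expectation equals $\tfrac{1}{1-\gamma}\big(\E_{a\sim\pi_\mub|s}[(1-\gamma)r(s,a) + \gamma\,\E_{s'\sim\PP|s,a}[V^{\pi_\mub}(s')]] - V^{\pi_\mub}(s)\big)$, which is $0$ by the stationarity \eqref{eq:statinoarity (value)} of $V^{\pi_\mub}$. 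Note this uses only that $\pi_\mub$ is the conditional of $\mub$, not that $\mub$ is itself stationary.

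Combining the two: applying the duality identity at $\vb = \vb^*$ and using $\pb^\t \vb^* = V^*(p)$ yields $-\mub^\t \ab_{\vb^*} + \bb_\mub^\t \vb^* = V^*(p) - \mub^\t \rb$; applying it at $\vb = \vb^{\pi_\mub}$, then invoking $\mub^\t \ab_{\vb^{\pi_\mub}} = 0$ together with $\pb^\t \vb^{\pi_\mub} = V^{\pi_\mub}(p)$, yields $\bb_\mub^\t \vb^{\pi_\mub} = V^{\pi_\mub}(p) - \mub^\t \rb$. Subtracting the second relation from the first cancels the $\mub^\t\rb$ terms, and moving $\bb_\mub^\t \vb^* - \bb_\mub^\t \vb^{\pi_\mub}$ to the right-hand side (where it becomes $\bb_\mub^\t(\vb^{\pi_\mub}-\vb^*)$) gives exactly $-\mub^\t \ab_{\vb^*} = V^*(p) - V^{\pi_\mub}(p) + \bb_\mub^\t(\vb^{\pi_\mub} - \vb^*)$.

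I expect the only genuine subtlety to be the second observation, specifically the care needed to note that $\mub^\t\ab_{\vb^{\pi_\mub}} = 0$ holds for an \emph{arbitrary} $\mub\in\MM$ rather than only for the stationary state-action distribution $\mub^{\pi_\mub}$ of $\pi_\mub$; this is precisely the feature that lets \cref{lm:advantage bound} be applied to algorithm iterates $(\vb,\mub)$ that need not satisfy flow conservation. The rest is mechanical manipulation of the definitions of $\ab_\vb$, $\bb_\mub$, and $V^\pi(p)$, and if one prefers, the $\mub^\t\ab_{\vb^{\pi_\mub}}=0$ step can also be extracted from \cref{lm:performance difference lemma}.
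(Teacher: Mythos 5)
Your proof is correct, and it reaches the identity by a genuinely different route than the paper. The paper's proof inserts the stationary state-action distribution $\mub^{\pi_\mub}$: it adds and subtracts $(\mub^{\pi_\mub})^\t\ab_{\vb^*}$, identifies that term with the performance gap via the performance difference lemma (\cref{lm:performance difference lemma}), and then converts the residual $(\mub^{\pi_\mub}-\mub)^\t\ab_{\vb^*}$ into $\bb_\mub^\t(\vb^{\pi_\mub}-\vb^*)$ by splitting off the reward term and unrolling $\vb^{\pi_\mub}$ as the geometric series $(1-\gamma)\sum_{k\geq0}\gamma^k\Pb_{\pi_\mub}^k\rb_{\pi_\mub}$. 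You never introduce $\mub^{\pi_\mub}$ at all: the transpose identity $\bb_\mub^\t\vb-\pb^\t\vb=\mub^\t\ab_\vb-\mub^\t\rb$ (valid for arbitrary $\vb$ and $\mub$ since $\ab_\vb$ and $\bb_\mub$ share the bilinear term $\frac{1}{1-\gamma}\mub^\t(\gamma\Pb-\Eb)\vb$), evaluated at $\vb^*$ and at $\vb^{\pi_\mub}$ and combined with the cancellation $\mub^\t\ab_{\vb^{\pi_\mub}}=0$, does all the work. Your emphasis on the crux is right: $\mub^\t\ab_{\vb^{\pi_\mub}}=0$ holds for \emph{arbitrary} $\mub\in\MM$, not only the stationary one, because the $\pi_\mub$-average of the advantage vanishes at \emph{every} state by \eqref{eq:statinoarity (value)}, so any nonnegative state weighting integrates to zero (states with zero marginal contribute nothing, so the ambiguity of $\pi_\mub$ there is harmless). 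Your parenthetical that this step ``can also be extracted from \cref{lm:performance difference lemma}'' is the one imprecise remark: that lemma as stated only yields the special case $\mub=\mub^{\pi_\mub}$, and the statewise argument you actually give is what is needed. Overall your route is shorter, avoids both the performance difference lemma and the series expansion, and in fact establishes the more general identity $-\mub^\t\ab_{\vb'}=\pb^\t\vb'-V^{\pi_\mub}(p)+\bb_\mub^\t(\vb^{\pi_\mub}-\vb')$ for an arbitrary vector $\vb'$, which is essentially the form \cref{lm:relaxed residual} requires; what the paper's longer derivation buys instead is the interpretation of the correction term $\bb_\mub^\t\vb^{\pi_\mub}$ as the discounted reward accumulated from the flow-conservation violation of $\mub$.
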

	Together they imply the desired equality
	$r_{ep}(x;y_x^*) = V^{*}(p) - V^{\pi_\mub}(p)$.
\end{proof}

\subsubsection{Proof of \cref{lm:relaxed residual}}
\RelaxedResidueEquality*
\begin{proof}
	Let $x' = (\vb',\mub')$.
	As shorthand, define $\fb'\coloneqq \vb'+\ub'$, and $\Lb \coloneqq \frac{1}{1-\gamma}(\gamma \Pb - \Eb)$ (i.e. we can write $\ab_\fb = \rb + \Lb \fb$). Because $r_{ep}(x;x') = - F(x, x') = - (\pb^\t \vb' +  \mub^\t \ab_{\vb'} - \pb^\t \vb -  \mub'^\t \ab_\vb)$, we can write
	\begin{align*}
	r_{ep}(x; \tilde{x}')
	&= - \pb^\t \fb' - \mub^\t \ab_{\fb'} + \pb^\t \vb +  \mub'^\t \ab_\vb\\
	&= \left(- \pb^\t \vb' - \mub^\t \ab_{\vb'} + \pb^\t \vb +  \mub'^\t \ab_\vb \right) - \pb^\t \ub'  - \mub^\t \Lb \ub' \\
	&= r_{ep}(x;x') - \pb^\t \ub'  - \mub^\t \Lb \ub' \\
	&= r_{ep}(x;x') - \bb_\mub^\t \ub'
	\end{align*}
	Finally, by \cref{lm:residual}, we have also $r_{ep}(x;x')= -  \mub^\t \ab_{\vb'}$ and therefore the final equality.
\end{proof}

\subsubsection{Proof of \cref{lm:advantage bound}}

\AdvantageBound*
\begin{proof}
	Following the setup in \cref{lm:relaxed residual}, we prove the statement by the rearrangement below: 
	\begin{align*}
	- \mub^\t \ab_{\vb'} 
	&= - (\mub^{\pi_\mub})^\t \ab_{\vb'} + (\mub^{\pi_\mub})^\t \ab_{\vb'} - \mub^\t \ab_{\vb'}\\
	&= V^{\pi'}(p) - V^{\pi_\mub}(p) + (\mub^{\pi_\mub} - \mub)^\t \ab_{\vb'}\\ 
	&= \left(V^{\pi'}(p) - V^{\pi_\mub}(p) \right) + (\mub^{\pi_\mub} - \mub)^\t \rb +  (\mub^{\pi_\mub} - \mub)^\t \Lb {\vb'}
	\end{align*}
	where the second equality is due to the performance difference lemma, i.e. \cref{lm:performance difference lemma}, and the last equality uses the definition  $\ab_{\vb'} = \rb + \Lb \vb'$.
	For the second term above, let $\rb_{\pi_{\mub}}$ and $\Pb_{\pi_\mub}$ denote the expected reward and transition under $\pi_{\mub}$. Because $\mub \in \MM$, we can rewrite it as 
	\begin{align*}
	(\mub^{\pi_\mub} - \mub)^\t \rb 
	&= (\Eb^\t\mub^{\pi_\mub} - \Eb^\t\mub) \rb_{\pi_{\mub}}\\
	&= ((1-\gamma) \pb + \gamma \Pb^\t\mub^{\pi_\mub}  - \Eb^\t\mub) \rb_{\pi_{\mub}} \\
	&= (1-\gamma) \bb_\mub^\t \rb_{\pi_{\mub}} + 
	\gamma (\mub^{\pi_\mub}  -\mub)^\t \Pb \rb_{\pi_{\mub}} \\
	&= (1-\gamma) \bb_\mub^\t \left( \rb_{\pi_{\mub}} + \gamma \Pb_{\pi_\mub} \rb_{\pi_{\mub}} + \gamma^2 \Pb_{\pi_\mub}^2 \rb_{\pi_{\mub}} + \dots \right)\\
	&= 
	\bb_\mub^\t \vb^{\pi_\mub}
	\end{align*}
	where the second equality uses the stationarity of $\mub^{\pib_\mub}$ given by \eqref{eq:stationarity (distribution)}.
	For the third term, it can be written 
	\begin{align*}
	(\mub^{\pi_\mub} - \mub)^\t \Lb {\vb'} = (- \pb -  \Lb^\t \mub)^\t {\vb'} 
	= - \bb_\mub^\t\vb'
	\end{align*}
	where the first equality uses stationarity, i.e. $\bb_{\mub^{\pi_\mub}}= \pb + \Lb^\t \mub^{\pi_\mub}=  0$.
	Finally combining the three steps, we have
	\begin{align*}
	- \mub^\t \ab_{\vb'} 
	&= V^{\pi'}(p) - V^{\pi_\mub}(p)  + \bb_\mub(\vb^{\pi_\mub} - \vb') 
	\end{align*}
\end{proof}

\subsection{Proof of \cref{cr:reduction for funcapp}}
\ReductionForFunctionApporximator*
\begin{proof}
	This can be proved by a simple rearrangement
	\begin{align*}
	&V^*(p) - V^{\hat{\pi}_N}(p) 
	=  r_{ep}(\hat{x}_N;y_N^*)= \epsilon_{\Theta,N} + \max_{x_\theta\in\XX_\theta} r_{ep}(\hat{x}_N; x_\theta)\leq \epsilon_{\Theta,N} + \frac{\regret_N(\Theta)}{N}
	\end{align*}
	where the first equality is \cref{pr:clever residual bound} and the last inequality is due to the skew-symmetry of $F$, similar to the proof of \cref{th:reduction of RL}.
\end{proof}

\subsection{Proof of \cref{pr:size of epsilon}}
\SizeOfEpsilon*
\begin{proof}
	For shorthand, let us set $x = (\vb, \mub) = \hat{x}_N$ and write also $\pi_\mub = \hat{\pi}_N$ as the associated policy. 
	Let $y_{x}^* = (\vb^{\pi_\mub}, \mub^*)$ 
	and similarly let $x_\theta = (\vb_\theta, \mub_\theta) \in \XX_\Theta$.
	With $r_{ep}(x;x') = -F(x,x')$ and \eqref{eq:bifunction (RL)}, we can write 
	\begin{align*}
	r_{ep}(x;y_{x}^*) - r_{ep}(x; x_\theta) 
	&=  \left(
	- \pb^\t  \vb^{\pi_\mub} - \mub^\t \ab_{\vb^{\pi_\mub}} + \pb^\t \vb +  {\mub^*}^\t \ab_\vb \right) 
	- \left(
	- \pb^\t \vb_\theta - \mub^\t \ab_{\vb_\theta} + \pb^\t \vb +  \mub_\theta^\t \ab_\vb \right)\\
	&= \pb^\t (\vb_\theta - \vb^{\pi_\mub})  + (\mub^* - \mub_\theta)^\t  \ab_{\vb} + \mub^\t (\ab_{\vb_\theta} - \ab_{\vb^{\pi_\mub}} )\\
	&= \bb_{\mub}^\t (\vb_\theta - \vb^{\pi_\mub})  + (\mub^* - \mub_\theta)^\t  \ab_{\vb} 
	\end{align*}

	Next we quantize the size of $\ab_\vb$ and $\bb_\mub$. 
	\begin{lemma} \label{lm:size of a and b}
	For $(\vb, \mub) \in \XX$, 
	$\norm{\ab_\vb}_\infty \leq \frac{1}{1-\gamma}$ and $\norm{\bb_\mub}_1 \leq \frac{2}{1-\gamma}$.
	\end{lemma}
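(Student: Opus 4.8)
The plan is to establish the two bounds separately, the first by an elementary coordinatewise estimate and the second by the triangle inequality in the $\ell_1$ norm. In both cases I would use only the defining memberships $\vb\in\VV$ and $\mub\in\MM$, the fact that every entry of $\rb$ lies in $[0,1]$, and the stochasticity of the involved matrices: $\Pb$ and $\Eb$ are entrywise nonnegative with $\Pb\oneb=\oneb$ (rows are transition distributions) and $\Eb\oneb=\oneb$ (from $\Eb=\Ib\otimes\oneb$).

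For the bound on $\ab_\vb = \rb + \frac{1}{1-\gamma}(\gamma\Pb-\Eb)\vb$, I would argue coordinatewise. Since $\vb\in\VV$ forces $0\le\vb\le\oneb$ entrywise, every entry of $\Pb\vb$ is a convex combination of entries of $\vb$ (because $\Pb$ is row-stochastic) and hence lies in $[0,1]$, while every entry of $\Eb\vb$ equals some entry of $\vb$ (the structure $\Eb=\Ib\otimes\oneb$ copies $\vb(s)$ across the actions at $s$) and so also lies in $[0,1]$. Therefore each entry of $\gamma\Pb\vb-\Eb\vb$ lies in $[-1,\gamma]$; adding the corresponding entry of $\rb\in[0,1]$ shows each entry of $\ab_\vb$ lies in $\big[\tfrac{-1}{1-\gamma},\,1+\tfrac{\gamma}{1-\gamma}\big]=\big[\tfrac{-1}{1-\gamma},\tfrac{1}{1-\gamma}\big]$, which gives $\norm{\ab_\vb}_\infty\le\tfrac{1}{1-\gamma}$.

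For the bound on $\bb_\mub = \pb + \frac{1}{1-\gamma}(\gamma\Pb-\Eb)^\t\mub$, I would apply the triangle inequality to obtain $\norm{\bb_\mub}_1\le\norm{\pb}_1+\tfrac{\gamma}{1-\gamma}\norm{\Pb^\t\mub}_1+\tfrac{1}{1-\gamma}\norm{\Eb^\t\mub}_1$, with $\norm{\pb}_1=1$. The key observation is that $\Pb^\t\mub$ and $\Eb^\t\mub$ are themselves probability vectors: they are nonnegative because $\mub\ge0$ and $\Pb,\Eb\ge0$, and $\oneb^\t\Pb^\t\mub=(\Pb\oneb)^\t\mub=\oneb^\t\mub=1$, and likewise $\oneb^\t\Eb^\t\mub=(\Eb\oneb)^\t\mub=\oneb^\t\mub=1$; hence both $\ell_1$ norms equal $1$. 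Substituting gives $\norm{\bb_\mub}_1\le 1+\tfrac{\gamma}{1-\gamma}+\tfrac{1}{1-\gamma}=\tfrac{2}{1-\gamma}$.

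I expect no real obstacle here; the argument is elementary. The only point requiring a little care is correctly handling the action-broadcasting matrix $\Eb=\Ib\otimes\oneb$ (so that $\Eb\vb$ repeats $\vb(s)$ over actions, $\Eb^\t\mub$ marginalizes $\mub$ over actions, and $\Eb\oneb=\oneb$), and keeping straight that $\Pb$ is row-stochastic while it is $\Pb^\t$, not $\Pb$, that acts on the distribution $\mub$.
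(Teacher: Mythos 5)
Your proposal is correct and follows essentially the same elementary argument as the paper: both bounds come from the stochasticity of $\Pb$ and $\Eb$ together with the constraints defining $\VV$ and $\MM$. The only cosmetic difference is in the $\bb_\mub$ bound, where the paper groups $(1-\gamma)\pb+\gamma\Pb^\t\mub$ as a single probability vector and bounds the $\ell_1$ distance between two distributions by $2$, while you apply the triangle inequality to all three terms separately; both give $\tfrac{2}{1-\gamma}$.
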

	\begin{proof}[Proof of \cref{lm:size of a and b}]
  	Let $\Delta$ denote the set of distributions
	\begin{align*}
	\norm{\ab_\vb}_\infty &= \frac{1}{1-\gamma} \norm{(1-\gamma)\rb+\gamma\Pb\vb - \Eb\vb}_\infty \leq  \frac{1}{1-\gamma
	} \max_{a,b\in[0,1]} |a-b|\leq \frac{1}{1-\gamma}
	\\
	\norm{\bb_\mub}_1 &= \frac{1}{1-\gamma} \norm{(1-\gamma)\pb + \gamma \Pb^\t \mub - \Eb^\t\mub}_1 \leq \frac{1}{1-\gamma} \max_{\qb,\qb'\in\Delta}\norm{\qb - \qb'}_1 \leq \frac{2}{1-\gamma}
	\end{align*}
	\end{proof}

	Therefore, we have preliminary upper bounds:
	\begin{align*}
	(\mub^* - \mub_\theta)^\t  \ab_{\vb} &\leq \norm{\ab_{\vb}}_\infty \norm{\mub^* - \mub_\theta}_1 \leq \frac{1}{1-\gamma} \norm{\mub^* - \mub_\theta}_1
	\\
	\bb_{\mub}^\t (\vb_\theta - \vb^{\pi_\mub}) &\leq  \norm{\bb_\mub}_1 \norm{\vb_\theta - \vb^{\pi_\mub}}_\infty \leq \frac{2}{1-\gamma} \norm{\vb_\theta - \vb^{\pi_\mub}}_\infty
	\end{align*}
	However, the second inequality above can be very conservative, especially when $\bb_\mub \approx 0$ which can be likely when it is close to the end of policy optimization. To this end, we introduce a free vector $\wb \geq 1$. Define norms $\norm{\vb}_{\infty, 1/\wb} = \max_i \frac{|v_i|}{w_i} $ and $\norm{\deltab}_{1,\wb} = \sum_{i} w_i |\delta_i|$. Then we can instead have an upper bound 
	\begin{align*}
	\bb_{\mub}^\t (\vb_\theta - \vb^{\pi_\mub}) \leq \min_{\wb:\wb\geq 1} \norm{\bb_\mub}_{1,\wb} \norm{\vb_\theta - \vb^{\pi_\mub}}_{\infty,1/\wb}
	\end{align*}
	Notice that when $\wb = \oneb$ the above inequality reduces to $\bb_{\mub}^\t (\vb_\theta - \vb^{\pi_\mub}) \leq  \norm{\bb_\mub}_1 \norm{\vb_\theta - \vb^{\pi_\mub}}_\infty$, which as we showed has an upper bound $\frac{2}{1-\gamma} \norm{\vb_\theta - \vb^{\pi_\mub}}_\infty$.
	
	Combining the above upper bounds, we have an upper bound on $\epsilon_{\Theta,N}$: 
	\begin{align*}
	\epsilon_{\Theta,N} =  r_{ep}(x;y_{x}^*) - r_{ep}(x; x_\theta)  
	&\leq \frac{1}{1-\gamma} \norm{\mub_\theta - \mub^*}_1 + 
	\min_{\wb:\wb\geq 1} \norm{\bb_\mub}_{1,\wb} \norm{\vb_\theta - \vb^{\pi_\mub}}_{\infty,1/\wb}\\
	&\leq 
	\frac{1}{1-\gamma} \left(  \norm{\mub_\theta - \mub^*}_1
	+ 2 \norm{\vb_\theta - \vb^{\pi_\mub}}_\infty \right).
	\end{align*}
	Since it holds for any $\theta \in \Theta$, we can minimize the right-hand side over all possible choices.
\end{proof}

\section{Proof of Sample Complexity of Mirror Descent} \label{app:proof of sample complexity of mirror descent}
\SampleComplexityOfMirrorDescent*

The proof is a combination of the basic property of mirror descent (\cref{lm:mirror descent}) and the martingale concentration.
Define $K = |\SS||\AA|$ and $\kappa = \frac{1}{1-\gamma}$ as shorthands. We first slightly modify the per-round loss used to compute the gradient. 
Recall $
l_n(x) \coloneqq \pb^\t \vb +  \mub_n^\t \ab_{\vb} - \pb^\t \vb_n -  \mub^\t \ab_{\vb_n}
$ and let us consider instead a loss function
\begin{align*}
h_n(x) \coloneqq \bb_{\mub_n}^\t \vb  + \mub^\t (\kappa \oneb - \ab_{\vb_n})
\end{align*}
which shifts $l_n$ by a constant in each round. (Note for all the decisions $(\vb_n, \mub_n)$ produced by \cref{alg:md for RL} $\mub_n$ always satisfies $\norm{\mub_n}_1 = 1$).
One can verify that $l_n(x)-l_n(x') = h_n(x) - h_n(x')$, for all $x,x'\in \XX$, when $\mub, \mub'$ in $x$ and $x'$ satisfy $\norm{\mub}_1 = \norm{\mub'}_1$ (which holds for \cref{alg:md for RL}). As the definition of regret is relative, we may work with $h_n$ in online learning and use it to define the feedback. 

The reason for using $h_n$ instead of $l_n$ is to make $\nabla_\mub h_n((\vb,\mub))$ (and its unbiased approximation) a positive vector (because $\kappa \geq \norm{\ab_{\vb}}_\infty$ for any $\vb \in \VV$), so that the regret bound can have a better dependency on the dimension for learning $\mub$ that lives in the simplex $\MM$. This is a common trick used in the online learning, e.g. in EXP3.
 
To run mirror descent, we set the first-order feedback $g_n$ received by the learner as an unbiased estimate of $\nabla h_n(x_n)$. For round $n$, we construct $g_n$ based on \emph{two} calls of the generative model:
\begin{align*}
g_n 
= \begin{bmatrix}
\gb_{n,v}\\
\gb_{n,\mu}
\end{bmatrix}
= \begin{bmatrix}
\tilde{\pb}_n + \frac{1}{1-\gamma}(\gamma\tilde{\Pb}_n -\Eb_n)^\t \tilde{\mub}_n \\
K (\kappa \hat{\oneb}_n - \hat{\rb}_n - \frac{1}{1-\gamma}(\gamma\hat{\Pb}_n-\hat{\Eb}_n)\vb_n)
\end{bmatrix}
\end{align*}
For $\gb_{n,v}$, we sample $\pb$,  then sample $\mub_n$ to get a state-action pair, and finally query the transition dynamics $\Pb$ at the state-action pair sampled from $\mub_n$. ($\tilde{\pb}_n$, $\tilde{\Pb}_n$, and $\tilde{\mub}_n$ denote the single-sample empirical approximation of these probabilities.)
For $\gb_{n,\mu}$, we first sample \emph{uniformly} a state-action pair (which explains the factor $K$), and then query the reward $\rb$ and the transition dynamics $\Pb$. ($\hat{\oneb}_n$, $\hat{\rb}_n$, $\hat{\Pb}_n$, and $\hat{\Eb}_n$ denote the single-sample empirical estimates.) To emphasize, we use $\tilde{}$ and $\hat{}$ to distinguish the empirical quantities obtained by these  two independent queries. By construction, we have $\gb_{n,\mu} \geq 0$.
It is clear that this direction $g_n$ is unbiased, i.e. $\E[g_n] = \nabla h_n(x_n)$. Moreover, it is extremely sparse and can be computed using $O(1)$ sample, computational, and memory complexities. 

Let $y_N^* = (\vb^{\hat{\pi}_N}, \mub^*)$.
We bound the regret by the following rearrangement. 
\begin{align} \label{eq:regret rearrangement}
\regret_N (y_N^*)
&= \sum_{n=1}^{N} l_n(x_n) - \sum_{n=1}^N l_n(y_N^*) \nonumber \\
&= \sum_{n=1}^{N} h_n(x_n) - \sum_{n=1}^N h_n(y_N^* ) \nonumber \\
&= \sum_{n=1}^{N} \nabla h_n(x_n)^\t (x_n-y_N^*) \nonumber \\
&= \left(\sum_{n=1}^{N} (\nabla h_n(x_n)-g_n)^\t x_n\right)  + \left(\sum_{n=1}^N g_n^\t (x_n - y_N^*) \right) + \left(\sum_{n=1}^N   (g_n - \nabla h_n(x_n))^\t y_N^* \right) \nonumber\\
&\leq 
\left(\sum_{n=1}^{N} (\nabla h_n(x_n)-g_n)^\t x_n\right)  
+ \left( \max_{x\in\XX} \sum_{n=1}^N g_n^\t (x_n -x) \right)+ \left(\sum_{n=1}^N   (g_n - \nabla h_n(x_n))^\t y_N^* \right),
\end{align}

where the third equality comes from $h_n$ being linear. We recognize the first term is a martingale $M_N = \sum_{n=1}^{N} (\nabla h_n(x_n)-g_n)^\t x_n$, because $x_n$ does not depend on $g_n$. Therefore, we can appeal to standard martingale concentration property. For the second term, it can be upper bounded by standard regret analysis of mirror descent, by treating $g_n^\t x$ as the per-round loss. For the third term, because $y_N^* = (\vb^{\hat{\pi}_N}, \mub^*)$ depends on $\{g_n\}_{n=1}^N$, it is not a martingale. Nonetheless, we will be able to handle it through a union bound.
%
Below, we give details for bounding these three terms.

\subsection{The First Term: Martingale Concentration} \label{app:martingale concentration}

For the first term, $\sum_{n=1}^{N} (\nabla h_n(x_n)-g_n)^\t x_n$, we use a martingale concentration property.
Specifically, we adopt a Bernstein-type inequality~\citep[Theorem 3.15]{mcdiarmid1998concentration}:
\begin{lemma} \label{lm:martingale (Bernstein)}
{\normalfont\citep[Theorem 3.15]{mcdiarmid1998concentration} } 
Let $M_0, \dots, M_N$ be a martingale and let $F_0\subseteq F_1 \subseteq \dots \subseteq F_n$ be the filtration such that $M_n = \E_{|F_{n}}[M_{n+1}]$. Suppose there are $b, \sigma < \infty$ such that for all $n$, given $F_{n-1}$, $M_n - M_{n-1} \leq b$, and $\Vbb_{|F_{n-1}}[M_n - M_{n-1}] \leq \sigma^2$  almost surely. Then for any $\epsilon\geq0$,
\begin{align*}
P(M_N - M_0 \geq \epsilon) \leq \exp\left(\frac{- \epsilon^2}{2 N \sigma^2 (1+\frac{b\epsilon}{3N\sigma^2})}   \right).
\end{align*}
\end{lemma}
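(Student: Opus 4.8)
The plan is to establish this Bernstein-type bound through the classical exponential-moment (Chernoff) method for martingales, organized in four stages: a Chernoff reduction, a per-step conditional moment-generating-function (MGF) bound, a telescoping of the MGF across the filtration, and a final optimization over a free parameter. Throughout I would write the martingale increments as $D_n := M_n - M_{n-1}$, so that the hypotheses read $\E_{|F_{n-1}}[D_n] = 0$, $D_n \le b$, and $\Vbb_{|F_{n-1}}[D_n] = \E_{|F_{n-1}}[D_n^2] \le \sigma^2$ almost surely (the equality using the vanishing conditional mean).

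First I would fix $\lambda > 0$ and apply the exponential Markov inequality,
\[
P(M_N - M_0 \ge \epsilon) \le e^{-\lambda \epsilon}\, \E\!\left[e^{\lambda(M_N - M_0)}\right],
\]
reducing the task to controlling the MGF of $\sum_{n=1}^N D_n$. The crux is a uniform bound on the conditional MGF of each increment. I would introduce $\phi(u) := (e^u - 1 - u)/u^2$ (with $\phi(0)=1/2$); using the integral representation $\phi(u) = \int_0^1 (1-t)e^{ut}\,\der t$ one checks $\phi'(u) = \int_0^1 t(1-t)e^{ut}\,\der t \ge 0$, so $\phi$ is nondecreasing on $\R$. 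Since $\lambda D_n \le \lambda b$, this gives the pointwise bound $e^{\lambda D_n} \le 1 + \lambda D_n + \lambda^2 D_n^2\, \phi(\lambda b)$; taking $\E_{|F_{n-1}}$ and using the zero conditional mean together with the variance bound yields
\[
\E_{|F_{n-1}}\!\left[e^{\lambda D_n}\right] \le 1 + \lambda^2 \sigma^2 \phi(\lambda b) \le \exp\!\big(\sigma^2 g(\lambda)\big), \qquad g(\lambda) := \frac{e^{\lambda b} - 1 - \lambda b}{b^2}.
\]
Peeling the conditional expectations one index at a time (tower property, using that $e^{\lambda(M_{n-1}-M_0)}$ is $F_{n-1}$-measurable) then telescopes to $\E[e^{\lambda(M_N - M_0)}] \le \exp(N\sigma^2 g(\lambda))$, so that $P(M_N - M_0 \ge \epsilon) \le \exp(-\lambda\epsilon + N\sigma^2 g(\lambda))$.

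Finally I would convert this Bennett-type exponent into the stated Bernstein form. The key elementary inequality is $g(\lambda) \le \tfrac{\lambda^2/2}{1 - \lambda b/3}$ for $0 \le \lambda b < 3$, which I would prove by comparing power series: writing $e^{\lambda b}-1-\lambda b = \tfrac{(\lambda b)^2}{2}\sum_{j\ge 0}\tfrac{2(\lambda b)^j}{(j+2)!}$ and noting $\tfrac{2}{(j+2)!}\le 3^{-j}$ for all $j\ge 0$, the tail is dominated by the geometric series $\sum_{j\ge 0} (\lambda b/3)^j = (1-\lambda b/3)^{-1}$. Choosing $\lambda = \epsilon/(N\sigma^2 + b\epsilon/3)$ — which satisfies $\lambda b < 3$ whenever $N\sigma^2 > 0$ — makes $1 - \lambda b/3 = N\sigma^2/(N\sigma^2 + b\epsilon/3)$, so the variance term collapses to exactly $\lambda\epsilon/2$ and the exponent becomes $-\lambda\epsilon/2 = -\epsilon^2/\big(2(N\sigma^2 + b\epsilon/3)\big)$; factoring $N\sigma^2$ out of the denominator reproduces the claimed bound. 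The $\epsilon=0$ case is trivial since the right-hand side is $1$.

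The step I expect to be the main obstacle is the per-step conditional MGF bound: establishing the monotonicity of $\phi$ cleanly (the integral representation is the tidiest route, avoiding sign issues in the power series for $u<0$) and ensuring the variance enters correctly through $\E_{|F_{n-1}}[D_n^2] = \Vbb_{|F_{n-1}}[D_n]$, which is valid only because the conditional mean vanishes. The subsequent Bennett-to-Bernstein conversion and the final substitution of $\lambda$ are then routine algebra once the power-series comparison is in hand.
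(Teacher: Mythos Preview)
Your proof is correct and complete: the Chernoff reduction, the monotonicity of $\phi(u)=(e^{u}-1-u)/u^{2}$ via the integral representation, the telescoping through the tower property, the power-series comparison $\tfrac{2}{(j+2)!}\le 3^{-j}$, and the final choice of $\lambda$ all check out and combine to give exactly the stated bound.

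The paper, however, does not prove this lemma at all---it is simply quoted as \cite[Theorem~3.15]{mcdiarmid1998concentration} and used as a black box in the subsequent analysis. So there is no ``paper's own proof'' to compare against; you have supplied a standard self-contained derivation where the authors opted for a citation. Your argument is essentially the classical one that appears in McDiarmid's survey and in most treatments of Bernstein-type martingale bounds.
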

\cref{lm:martingale (Bernstein)} implies, with probability at least $1-\delta$, 
\begin{align*}
M_N - M_0 \leq \sqrt{2 N \sigma^2 (1+ o(1)) \log\left(\frac{1}{\delta}\right)},
\end{align*}
where $o(1)$ means convergence to $0$ as $N\to\infty$.

To apply \cref{lm:martingale (Bernstein)}, we need to provide bounds on the properties of the martingale difference:
\begin{align*}
M_n - M_{n-1} 
&= (\nabla h_n(x_n)-g_n)^\t x_n \\
&= (\kappa \oneb - \ab_{\vb_n} - \gb_{n,\mu})^\t \mub_n + (\bb_{\mub_n}-\gb_{n,v})^\t \vb_n.
\end{align*}

For the first term $(\kappa \oneb - \ab_{\vb_n} - \gb_{n,\mu})^\t \mub_n$, we use the lemma below:
\begin{lemma} \label{lm:martingale difference for mu}
Let $\mub \in \MM$ be arbitrary, chosen independently from the randomness of $\gb_{n,\mu}$ when $F_{n-1}$ is given. Then it holds
$ |(\kappa \oneb - \ab_{\vb_n} - \gb_{n,\mu})^\t \mub|  \leq \frac{2(1+K)}{1-\gamma}$ 
and $\Vbb_{|F_{n-1}}[(\kappa \oneb - \ab_{\vb_n} - \gb_{n,\mu})^\t \mub] \leq \frac{4K}{(1-\gamma)^2}$.
\end{lemma}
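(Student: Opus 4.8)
The plan is to expand the single-sample estimator $\gb_{n,\mu}$ explicitly and exploit its extreme sparsity. Conditioned on $F_{n-1}$, the round-$n$ construction of $\gb_{n,\mu}$ draws a pair $(s,a)$ uniformly over the $K=|\SS||\AA|$ entries and a next state $s'\sim\PP(\cdot\,|\,s,a)$, so that $\gb_{n,\mu} = K\,c_n\,\eb_{(s,a)}$, where $\eb_{(s,a)}$ is the coordinate vector at the sampled pair and $c_n = \kappa - r_n(s,a) - \tfrac{1}{1-\gamma}\big(\gamma v_{n,s'} - v_{n,s}\big)$, with $r_n(s,a)\in[0,1]$ the observed reward. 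First I would record the scalar bound $0\le c_n\le \tfrac{2}{1-\gamma}=2\kappa$, an elementary interval computation from $r_n(s,a)\in[0,1]$ and $v_{n,s},v_{n,s'}\in[0,1]$ (the latter because $\vb_n\in\VV$); this re-derives $\gb_{n,\mu}\ge 0$ and gives $\|\gb_{n,\mu}\|_\infty\le 2K\kappa$. Analogously, or directly from \cref{lm:size of a and b}, $\|\kappa\oneb-\ab_{\vb_n}\|_\infty\le 2\kappa$.

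For the almost-sure bound I would apply H\"older's inequality, using $\mub\in\MM$ so that $\|\mub\|_1=1$:
\[
\big|(\kappa\oneb-\ab_{\vb_n}-\gb_{n,\mu})^\t\mub\big| \le \|\kappa\oneb-\ab_{\vb_n}-\gb_{n,\mu}\|_\infty \le \|\kappa\oneb-\ab_{\vb_n}\|_\infty + \|\gb_{n,\mu}\|_\infty \le 2\kappa + 2K\kappa = \tfrac{2(1+K)}{1-\gamma},
\]
which is the first claim.

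For the variance I would condition on $F_{n-1}$. Since $\vb_n$ is $F_{n-1}$-measurable and $\mub$ is independent of the round-$n$ randomness by hypothesis, $(\kappa\oneb-\ab_{\vb_n})^\t\mub$ contributes nothing to the conditional variance — and, because $\gb_{n,\mu}$ is unbiased for $\kappa\oneb-\ab_{\vb_n}$, the conditional mean of $(\kappa\oneb-\ab_{\vb_n}-\gb_{n,\mu})^\t\mub$ vanishes — so it suffices to bound $\E_{|F_{n-1}}[(\gb_{n,\mu}^\t\mub)^2]$. Using $\gb_{n,\mu}^\t\mub = K c_n\,\mu(s,a)$ and conditioning first on the sampled pair,
\[
\E_{|F_{n-1}}\big[(\gb_{n,\mu}^\t\mub)^2\big] = \sum_{(s,a)}\tfrac{1}{K}\,K^2\mu(s,a)^2\,\E\big[c_n^2\,\big|\,(s,a)\big] \le 4K\kappa^2\sum_{(s,a)}\mu(s,a)^2 \le 4K\kappa^2\|\mub\|_1^2 = \tfrac{4K}{(1-\gamma)^2},
\]
where I used $c_n^2\le 4\kappa^2$ and $\sum_{(s,a)}\mu(s,a)^2\le\big(\sum_{(s,a)}\mu(s,a)\big)^2$ for $\mub\ge 0$; this is the second claim.

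All the steps are elementary; the only delicate point is the bookkeeping in the variance estimate — the factor $K^2$ from the importance weight $K\eb_{(s,a)}$ must be exactly absorbed by the $1/K$ uniform-sampling probability and by $\|\mub\|_1^2=1$, yielding a second moment that is \emph{linear} rather than quadratic in $K$. This cancellation is the whole reason the uniform-sampling estimator is affordable, and it is worth writing the conditioning carefully so that $(\kappa\oneb-\ab_{\vb_n})^\t\mub$ is genuinely constant under the relevant $\sigma$-field.
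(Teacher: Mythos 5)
Your proposal is correct and follows essentially the same route as the paper's proof: split off the deterministic term $(\kappa\oneb-\ab_{\vb_n})^\top\mub$ (bounded by $2\kappa$ via \cref{lm:size of a and b} and H\"older), bound the sparse importance-weighted term $\gb_{n,\mu}^\top\mub = K c_n \mu(s,a)$ by $2K\kappa$ using $|c_n|\le 2\kappa$, and for the variance reduce to the second moment of $\gb_{n,\mu}^\top\mub$ and absorb one factor of $K$ into the $1/K$ sampling probability together with $\sum_{(s,a)}\mu(s,a)^2\le\|\mub\|_1^2=1$. The only cosmetic differences are that you apply H\"older before the triangle inequality rather than after, and you add an (unneeded but harmless) remark that the conditional mean vanishes before passing from the variance to the second moment.
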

\begin{proof}
By triangular inequality, 
\begin{align*}
 |(\kappa \oneb - \ab_{\vb_n} - \gb_{n,\mu})^\t \mub|  \leq  |(\kappa \oneb - \ab_{\vb_n} )^\t \mub |+ | \gb_{n,\mu}^\t \mub|.
\end{align*}
For the deterministic part, using \cref{lm:size of a and b} and H\"older's inequality,
\begin{align*}
|(\kappa \oneb - \ab_{\vb_n})^\t \mub| \leq \kappa + \norm{\ab_{\vb_n}}_\infty\norm{\mub}_1 \leq \frac{2}{1-\gamma}.
\end{align*}
For the stochastic part, let $i_n$ be index of the sampled state-action pair and $j_n$ be the index of the transited state sampled at the pair given by $i_n$. With abuse of notation, we will use $i_n$ to index both $\SS\times\AA$ and $\SS$. 
With this notation, we may derive
\begin{align*}
|\gb_{n,\mu}^\t \mub |
&= |K \mub^\t (\kappa \hat{\oneb}_n - \hat{\rb}_n - \frac{1}{1-\gamma}(\gamma\hat{\Pb}_n-\hat{\Eb}_n)\vb_n)|\\
&=K \mu_{i_n}|\kappa  - r_{i_n} - \frac{\gamma v_{n,j_n} - v_{n,i_n}}{1-\gamma} |\\
&\leq \frac{2 K \mu_{i_n}}{1-\gamma} \leq  \frac{2 K}{1-\gamma}
\end{align*}
where we use the facts that $r_{i_n},  v_{n,j_n}, v_{n,i_n} \in [0,1]$ and $\mu_{i_n} \leq 1$.

For $\Vbb_{|F_{n-1}}[(\kappa \oneb - \ab_{\vb_n} - \gb_{n,\mu})^\t \mub_n]$, we can write it as 
\begin{align*}
\Vbb_{|F_{n-1}}[(\kappa \oneb - \ab_{\vb_n} - \gb_{n,\mu})^\t \mub] 
&= \Vbb_{|F_{n-1}}[ \gb_{n,\mu}^\t \mub] \\
&\leq \E_{|F_{n-1}}[ |\gb_{n,\mu}^\t \mub_n|^2] \\
&= \sum_{i_n} \frac{1}{K}\E_{j_n|i_n} \left[ K^2 \mu_{i_n}^2 \left(\kappa  - r_{i_n} - \frac{\gamma v_{n,j_n} - v_{n,i_n}}{1-\gamma}\right)^2 \right]\\
&\leq  \frac{4 K }{(1-\gamma)^2} \sum_{i_n} \mu_{i_n}^2  \\
&\leq  \frac{4 K }{(1-\gamma)^2} \left( \sum_{i_n} \mu_{i_n}\right)^2   \leq \frac{4 K }{(1-\gamma)^2}
\end{align*}
where in the second inequality we use the fact that $|\kappa  - r_{i_n} - \frac{\gamma v_{n,j_n} - v_{n,i_n}}{1-\gamma} | \leq \frac{2}{1-\gamma}$.
\end{proof}

For the second term $(\bb_{\mub_n}-\gb_{n,v})^\t \vb_n$, we use the following lemma.
\begin{lemma} \label{lm:martingale difference for v}
Let $\vb \in \VV$ be arbitrary, chosen independently from the randomness of $\gb_{n,v}$ when $F_{n-1}$ is given.. Then it holds that
$ |(\bb_{\mub_n}-\gb_{n,v})^\t \vb|  \leq \frac{4}{1-\gamma}$ 
and $\Vbb_{|F_{n-1}}[(\bb_{\mub_n}-\gb_{n,v})^\t \vb] \leq \frac{4}{(1-\gamma)^2}$.
\end{lemma}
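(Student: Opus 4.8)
The plan is to follow the template of the proof of \cref{lm:martingale difference for mu} almost verbatim; only the arithmetic is lighter, because $\gb_{n,v}$ (unlike $\gb_{n,\mu}$) carries no importance-weight blow-up factor. Two preliminary observations drive everything. First, $\mub_n$ is $F_{n-1}$-measurable and, by hypothesis, $\vb$ is chosen independently of the round-$n$ sampling randomness given $F_{n-1}$; hence $\bb_{\mub_n}^\t\vb$ is constant conditionally on $F_{n-1}$, so $\Vbb_{|F_{n-1}}[(\bb_{\mub_n}-\gb_{n,v})^\t\vb] = \Vbb_{|F_{n-1}}[\gb_{n,v}^\t\vb]$. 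Second, $\gb_{n,v}$ is an unbiased estimate of $\bb_{\mub_n}$ conditionally on $F_{n-1}$ (sampling $s_0\sim p$, $(s_n,a_n)\sim\mub_n$, $s_n'\sim\PP(\cdot\mid s_n,a_n)$ makes each of $\tilde\pb_n$, $\tilde\Pb_n^\t\tilde\mub_n$ and $\Eb_n^\t\tilde\mub_n$ correct in expectation), though this is only needed for the surrounding argument, not for the two estimates below.

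For the almost-sure bound I would split $|(\bb_{\mub_n}-\gb_{n,v})^\t\vb| \le |\bb_{\mub_n}^\t\vb| + |\gb_{n,v}^\t\vb|$ and bound each term. The deterministic term is handled by H\"older's inequality and \cref{lm:size of a and b}: since $\vb\in\VV$ we have $\norm{\vb}_\infty\le1$, so $|\bb_{\mub_n}^\t\vb| \le \norm{\bb_{\mub_n}}_1\,\norm{\vb}_\infty \le \tfrac{2}{1-\gamma}$. For the stochastic term I would expand $\gb_{n,v}$ at the sampled coordinates: $\tilde\pb_n^\t\vb = v_{s_0}$ and $(\gamma\tilde\Pb_n-\Eb_n)^\t\tilde\mub_n = \gamma\,\eb_{s_n'} - \eb_{s_n}$, whence $\gb_{n,v}^\t\vb = v_{s_0} + \tfrac{1}{1-\gamma}\big(\gamma v_{s_n'} - v_{s_n}\big)$. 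Because every coordinate of $\vb$ lies in $[0,1]$ ($\vb\in\VV$), $|v_{s_0}|\le1$ and $|\gamma v_{s_n'} - v_{s_n}|\le1$, so $|\gb_{n,v}^\t\vb| \le 1 + \tfrac{1}{1-\gamma} \le \tfrac{2}{1-\gamma}$. Adding the two bounds gives $|(\bb_{\mub_n}-\gb_{n,v})^\t\vb| \le \tfrac{4}{1-\gamma}$.

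For the conditional variance I would use the crude-but-sufficient estimate $\Vbb_{|F_{n-1}}[\gb_{n,v}^\t\vb] \le \E_{|F_{n-1}}\big[(\gb_{n,v}^\t\vb)^2\big] \le \big(\tfrac{2}{1-\gamma}\big)^2 = \tfrac{4}{(1-\gamma)^2}$, invoking the almost-sure bound $|\gb_{n,v}^\t\vb|\le\tfrac{2}{1-\gamma}$ from the previous step together with the first preliminary observation. This is exactly the claimed bound.

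I do not expect a real obstacle here — the lemma is a routine boundedness and second-moment computation. The only points needing slight care are (a) writing out the single-sample estimator $\gb_{n,v}$ coordinate-wise and checking it collapses to $v_{s_0} + \tfrac{1}{1-\gamma}(\gamma v_{s_n'} - v_{s_n})$, and (b) making explicit why the independence hypothesis on $\vb$ matters: it is what lets the $\norm{\cdot}_\infty$ bound on $\vb$ be applied inside the conditional expectation and lets $\bb_{\mub_n}^\t\vb$ be treated as a conditional constant; in the application within the regret decomposition \eqref{eq:regret rearrangement} this holds automatically since there $\vb = \vb_n$ is $F_{n-1}$-measurable.
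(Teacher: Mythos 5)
Your proposal is correct and follows essentially the same route as the paper's proof: triangle inequality plus H\"older's inequality with the $\norm{\cdot}_1$ bound from \cref{lm:size of a and b} for the deterministic term, and $\Vbb_{|F_{n-1}}[\gb_{n,v}^\t\vb]\le\E_{|F_{n-1}}[(\gb_{n,v}^\t\vb)^2]\le\frac{4}{(1-\gamma)^2}$ for the variance. The only cosmetic difference is that you bound $|\gb_{n,v}^\t\vb|\le\frac{2}{1-\gamma}$ by expanding the single-sample estimator coordinate-wise, whereas the paper gets the same number by citing $\norm{\gb_{n,v}}_1\le\frac{2}{1-\gamma}$ from \cref{lm:size of a and b} directly.
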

\begin{proof}
We appeal to \cref{lm:size of a and b}, which shows $\norm{\bb_{\mub_n}}_1,\norm{\gb_{n,v}}_1 \leq \frac{2}{1-\gamma}$, and derive
\begin{align*}
|(\bb_{\mub_n}-\gb_{n,v})^\t \vb| \leq (\norm{\bb_{\mub_n}}_1 + \norm{\gb_{n,v}}_1 )\norm{\vb}_\infty \leq \frac{4}{1-\gamma}.
\end{align*}
Similarly, for the variance, we can write
\begin{align*}
\Vbb_{|F_{n-1}}[(\bb_{\mub_n}-\gb_{n,v})^\t \vb]
&= \Vbb_{|F_{n-1}}[\gb_{n,v}^\t \vb]\leq \Ebb_{|F_{n-1}}[(\gb_{n,v}^\t \vb)^2] \leq \frac{4}{(1-\gamma)^2}. \qedhere
\end{align*}
\end{proof}

Thus, with helps from the two lemmas above, we are able to show 
\begin{align*}
M_n - M_{n-1} \leq |(\kappa \oneb - \ab_{\vb_n} - \gb_{n,\mu})^\t \mub_n| + |(\bb_{\mub_n}-\gb_{n,v})^\t \vb_n |\leq  \frac{4+2(1+K)}{1-\gamma}
\end{align*}
as well as (because $\gb_{n,\mu}$ and $\gb_{n,b}$ are computed using independent samples)
\begin{align*}
\Vbb_{|F_{n-1}}[M_n - M_{n-1}]
&\leq \E_{|F_{n-1}}[|(\kappa \oneb - \ab_{\vb_n} - \gb_{n,\mu})^\t \mub_n|^2] + \E_{|F_{n-1}}[|(\bb_{\mub_n}-\gb_{n,v})^\t \vb_n|^2]
\leq \frac{4(1+K)}{(1-\gamma)^2}
\end{align*}
Now, since $M_0 = 0$,  by martingale concentration in \cref{lm:martingale (Bernstein)}, we have
\begin{align*}
P\left( \sum_{n=1}^{N} (\nabla h_n(x_n)-g_n)^\t x_n > \epsilon \right) \leq \exp\left(\frac{- \epsilon^2}{2 N \sigma^2 (1+\frac{b\epsilon}{3N\sigma^2})}   \right)
\end{align*}
with $b = \frac{6+2K}{1-\gamma}$ and $\sigma^2 = \frac{4(1+K)}{(1-\gamma)^2}$. This implies that, with probability at least $1-\delta$, it holds
\begin{align*}
\sum_{n=1}^{N} (\nabla h_n(x_n)-g_n)^\t x_n \leq \sqrt{ N\frac{8(1+K)}{(1-\gamma)^2} (1+o(1))\log\left(\frac{1}{\delta}\right)}
= \tilde{O}\left( \frac{\sqrt{NK\log(\frac{1}{\delta})}}{1-\gamma} \right)
\end{align*}

\subsection{Static Regret of Mirror Descent} \label{app:static regret of mirror descent}

Next we move onto deriving the regret bound of mirror descent  with respect to the online loss sequence:
\begin{align*}
 \max_{x\in\XX} \sum_{n=1}^N g_n^\t (x_n -x) 
\end{align*}
This part is quite standard; nonetheless, we provide complete derivations below.

We first recall a basic property of mirror descent
\begin{lemma} \label{lm:mirror descent}
	Let $\XX$ be a convex set. Suppose $R$ is 1-strongly convex with respect to some norm $\norm{\cdot}$. Let $g$ be an arbitrary vector and define, for $x\in\XX$,
	\begin{align*}
	y =  \argmin_{x' \in \XX} \lr{g}{x'} + B_{R}(x'||x)
	\end{align*}
	Then for all $z \in \XX$,
	\begin{align} \label{eq:mirror descent new decision}
 \lr{g}{y - z} &\leq  B_R(z||x) - B_R(z||y) - B_R(y||x)  
	\end{align}
\end{lemma}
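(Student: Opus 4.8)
The plan is to derive \eqref{eq:mirror descent new decision} from the first-order optimality condition of the proximal update, combined with the standard three-point identity for Bregman divergences.

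First I would write the update objective explicitly as $\phi(x') \coloneqq \lr{g}{x'} + B_R(x'||x)$ and note that, since $R$ is differentiable, $\nabla\phi(x') = g + \nabla R(x') - \nabla R(x)$. Because $\phi$ is convex (it is a linear term plus $B_R(\cdot\,||\,x)$, which is convex as $R$ is) and $\XX$ is convex, the minimizer $y$ satisfies the variational inequality $\lr{\nabla\phi(y)}{z - y} \ge 0$ for all $z\in\XX$; the $1$-strong convexity of $R$ is only needed to guarantee that this minimizer is unique. Rearranging the variational inequality gives $\lr{g}{y - z} \le \lr{\nabla R(y) - \nabla R(x)}{z - y}$ for every $z\in\XX$.

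Next I would establish the three-point identity
\[
\lr{\nabla R(y) - \nabla R(x)}{z - y} = B_R(z||x) - B_R(z||y) - B_R(y||x),
\]
which follows by substituting the definition $B_R(u||w) = R(u) - R(w) - \lr{\nabla R(w)}{u - w}$ into the right-hand side and cancelling the $R(z)$, $R(y)$, and $R(x)$ terms; this is a routine algebraic check. Chaining this identity with the inequality from the previous paragraph yields \eqref{eq:mirror descent new decision} directly.

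I do not anticipate a substantive obstacle here, since this is a textbook property of mirror descent. The only delicate point is the usual regularity requirement on the mirror map $R$ — differentiability, with $\nabla R$ well defined at the generated iterates — which is implicit in the mirror-descent setup; I would flag this briefly before invoking the optimality condition, and otherwise the proof is two short steps.
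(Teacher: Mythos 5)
Your proposal is correct and follows essentially the same route as the paper: both start from the first-order optimality condition $\lr{g + \nabla R(y) - \nabla R(x)}{y - z} \leq 0$ for all $z\in\XX$ and then convert $\lr{\nabla R(x) - \nabla R(y)}{y - z}$ into the three Bregman terms (the paper carries out the three-point identity by direct expansion rather than citing it by name). Your remark that strong convexity is only needed for uniqueness of the minimizer, and the flag about differentiability of $R$, are both accurate and consistent with the paper's implicit assumptions.
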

\begin{proof}
	Recall the definition $B_R(x'||x) =  R(x') - R(x) - \lr{\nabla R(x)}{x'-x}$. 
	The optimality of the proximal map can be written as 
	\begin{align*}
	\lr{  g + \nabla R(y) -  \nabla R(x)  }{y - z} \leq 0,  \qquad \forall z \in \XX.
	\end{align*}
	By rearranging the terms, we can rewrite the above inequality in terms of Bregman divergences as follows and derive the first inequality~\eqref{eq:mirror descent new decision}:
	\begin{align*}
	\lr{   g   }{y - z} &\leq  \lr{ \nabla R(x) -  \nabla R(y)  }{y - z} \\
	&=  B_R(z||x) - B_R(z||y) + \lr{ \nabla R(x) -  \nabla R(y)  }{y} -  \lr{\nabla R(x)}{x} + \lr{\nabla R(y)}{y}  + R(x)  - R(y)  \\
	&=  B_R(z||x) - B_R(z||y) + \lr{ \nabla R(x)  }{y - x}   + R(x)  - R(y)  \\
	&=  B_R(z||x) - B_R(z||y) - B_R(y||x),
	\end{align*}
	which concludes the lemma.
\end{proof}

Let $x'\in\XX$ be arbitrary.
Applying this lemma to the $n$th iteration of mirror descent in \eqref{eq:mirror descent}, we get
\begin{align*} 
\lr{g_n}{x_{n+1} - x'} &\leq \frac{1}{\eta} \left( B_R(x'||x_n) - B_R(x'||x_{n+1}) - B_R(x_{n+1}||x_n)  \right)
\end{align*}
By a telescoping sum, we then have
\begin{align*}
\sum_{n=1}^{N} \lr{g_n}{x_{n} - x'} &\leq \frac{1}{\eta} B_R(x'||x_1) + \sum_{n=1}^{N} \lr{g_n}{x_{n+1} - x_n} - \frac{1}{\eta}  B_R(x_{n+1}||x_n).
\end{align*}

We bound the right-hand side as follows.
Recall that based on the geometry of $\XX = \VV \times \MM$, we considered a natural Bregman divergence of the form:
\begin{align*}
B_R(x'||x) =  \frac{1}{2|\SS|} \norm{\vb' - \vb}_2^2 + KL(\mub'||\mub) 
\end{align*}
Let $x_1 = (\vb_1, \mub_1)$ where $\mub_1$ is uniform.
By this choice, we have:
\begin{align*}
\frac{1}{\eta}  B_R(x'||x_1) \leq \frac{1}{\eta}  \max_{x\in\XX}B_R(x||x_1) \leq \frac{1}{\eta}\left( \frac{1}{2} + \log(K) \right).
\end{align*}

We now decompose each item in the above sum as:
\begin{align*}
\lr{g_n}{x_{n+1} - x_n} - \frac{1}{\eta}  B_R(x_{n+1}||x_n)
=& \left( \gb_{n,v}^\t (\vb_{n+1} - \vb_n) - \frac{1}{2\eta  |\SS|} \norm{\vb_n - \vb_{n+1}}_2^2 \right) \\
&+ \left( \gb_{n,\mu}^\t(\mub_{n+1} - \mub_n) - \frac{1}{\eta}KL(\mub_{n+1}||\mub_n)  \right) 
\end{align*} 
and we upper bound them using the two lemmas below (recall $\gb_{n,\mu} \geq 0 $ due to the added $\kappa \oneb$ term).
\begin{restatable}{lemma}{GradientDescentBound} \label{lm:gradient descent bound}
For any vector $x, y, g$ and scalar $\eta >0$, it holds
$\lr{g}{x-y} - \frac{1}{2\eta}\norm{x-y}_2^2 \leq \frac{\eta \norm{g}_2^2}{2}$.
\end{restatable}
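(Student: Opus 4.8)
The plan is to reduce the claim to the elementary ``completing the square'' (Fenchel--Young) inequality, since no structural assumptions on $g$, $x$, $y$ are needed. First I would introduce the shorthand $z \coloneqq x-y$, so that the left-hand side is $\lr{g}{z} - \frac{1}{2\eta}\norm{z}_2^2$ and the assertion is equivalent to $\lr{g}{z} \leq \frac{\eta}{2}\norm{g}_2^2 + \frac{1}{2\eta}\norm{z}_2^2$.

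Next I would observe the nonnegativity $0 \leq \frac{\eta}{2}\norm{g - \eta^{-1}z}_2^2$, and expand the right-hand side using bilinearity of the inner product and $\frac{\eta}{2}\cdot\eta^{-2} = \frac{1}{2\eta}$, obtaining $0 \leq \frac{\eta}{2}\norm{g}_2^2 - \lr{g}{z} + \frac{1}{2\eta}\norm{z}_2^2$. Rearranging this inequality yields exactly $\lr{g}{z} - \frac{1}{2\eta}\norm{z}_2^2 \leq \frac{\eta}{2}\norm{g}_2^2$, which is the desired bound after substituting back $z = x-y$. (Equivalently, one may chain Cauchy--Schwarz, $\lr{g}{z} \leq \norm{g}_2\norm{z}_2$, with the AM--GM estimate $\norm{g}_2\norm{z}_2 \leq \frac{\eta}{2}\norm{g}_2^2 + \frac{1}{2\eta}\norm{z}_2^2$; I would present whichever is shorter.)

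I do not anticipate any real obstacle: the statement is a one-line consequence of completing the square, valid for all $\eta > 0$. The only point requiring mild care is the bookkeeping of the $\eta$ factors so that the cross term $\lr{g}{z}$ cancels exactly in the expansion of $\norm{g - \eta^{-1}z}_2^2$.
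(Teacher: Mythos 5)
Your proposal is correct, and it is essentially the same one-line argument as the paper's: the paper chains Cauchy--Schwarz with the scalar bound $\norm{g}_2\norm{x-y}_2 - \frac{1}{2\eta}\norm{x-y}_2^2 \leq \frac{\eta}{2}\norm{g}_2^2$, which is exactly the completing-the-square step you perform (and which you also list explicitly as your alternative route). No gap.
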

\begin{proof}
By Cauchy-Swartz inequality, 
$
\lr{g}{x-y} - \frac{1}{2\eta}\norm{x-y}_2^2 \leq \norm{g}_2\norm{x - y}_2 - \frac{1}{2\eta}\norm{x-y}_2^2 \leq \frac{ \eta \norm{g}_2^2}{2}
$.
\end{proof}

\begin{restatable}{lemma}{EXPBound}
\label{lm:exp3 bound}
Suppose $B_R(x||y) = KL(x||y)$ and $x,y$ are probability distributions, and $g \geq 0$ element-wise. Then, for $\eta >  0$,
\begin{align*}
-\frac{1}{\eta} B_R(y||x)  + \lr{ g }{x - y}
\leq  \frac{\eta}{2}  \sum_i x_i( g_i)^2  = \frac{\eta}{2} \norm{g}_{x}^2.
\end{align*}
\end{restatable}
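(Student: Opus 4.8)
The plan is to reduce the claimed inequality to an estimate on the log-partition-function of an exponential-weights distribution, by maximizing the left-hand side over $y$. Fix $x$ (a probability vector) and $g\ge 0$, and regard the left-hand side as a function $\Phi(y) \coloneqq \lr{g}{x-y} - \frac1\eta KL(y||x)$ with $y$ ranging over the probability simplex; coordinates where $x_i=0$ force $y_i=0$ (else $KL(y||x)=\infty$ and there is nothing to prove), so we may restrict to the support of $x$. Since $y\mapsto KL(y||x)$ is convex and the other term is affine, $\Phi$ is concave, and its maximizer over the simplex is the stationary point of the associated Lagrangian. A short KKT computation gives $y^\star_i \propto x_i e^{-\eta g_i}$, which lies in the relative interior of the simplex (strictly positive on the support of $x$), so no boundary cases arise.

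Substituting $y^\star$ back, the linear term $-\lr{g}{y^\star}$ cancels against part of the entropy term, leaving the closed form
\[
\max_y \Phi(y) \;=\; \lr{g}{x} + \tfrac1\eta \log Z, \qquad Z \coloneqq \sum_i x_i e^{-\eta g_i}.
\]
Hence it remains to show $\log Z \le -\eta\lr{g}{x} + \tfrac{\eta^2}{2}\sum_i x_i g_i^2$. Here I would invoke the elementary scalar inequality $e^{-t} \le 1 - t + \tfrac{t^2}{2}$, valid for all $t\ge 0$ (the difference vanishes together with its first derivative at $t=0$ and has nonnegative second derivative). This is exactly where the hypothesis $g\ge 0$ enters: it guarantees $\eta g_i \ge 0$, so the inequality applies coordinatewise. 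Applying it inside the sum gives $Z \le 1 - \eta\lr{g}{x} + \tfrac{\eta^2}{2}\sum_i x_i g_i^2$, and then $\log(1+u)\le u$ (with $1+u = Z > 0$) yields the desired bound on $\log Z$.

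Combining the two displays gives, for every admissible $y$,
\[
-\tfrac1\eta KL(y||x) + \lr{g}{x-y} \;=\; \Phi(y) \;\le\; \max_y \Phi(y) \;\le\; \tfrac{\eta}{2}\sum_i x_i g_i^2 \;=\; \tfrac{\eta}{2}\norm{g}_x^2,
\]
which is the claim. The only real subtlety is the maximization step — checking concavity so that the KKT point is the global maximum, and handling zero coordinates of $x$; after that the argument collapses to the two one-line scalar inequalities $e^{-t}\le 1-t+t^2/2$ ($t\ge 0$) and $\log(1+u)\le u$. I expect the bookkeeping in the KKT back-substitution (verifying the cancellation that produces the clean $\lr{g}{x} + \frac1\eta\log Z$ form) to be the most error-prone part, but it is entirely routine.
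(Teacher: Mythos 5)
Your proposal is correct and follows essentially the same route as the paper's proof: the paper also bounds the left-hand side by maximizing over $y$, identifies the maximum with the log-partition form $\lr{\eta g}{x} + \log\sum_i x_i e^{-\eta g_i}$ (citing the convex conjugate of the KL divergence where you carry out the KKT computation explicitly), and then applies the same two scalar inequalities $e^{-t}\le 1-t+\tfrac{t^2}{2}$ for $t\ge 0$ and $\log z \le z-1$. The only difference is presentational; the mathematical content is identical.
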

\begin{proof}
Let $\Delta$ denotes the unit simplex.
\begin{align*}
- B_R(y||x)  + \lr{ \eta  g }{x - y} 
&\leq \lr{ \eta  g }{x }  + \max_{y' \in \Delta}  \lr{ - \eta  g }{y} - B_R(y'||x)\\
&=  \lr{ \eta  g }{x } + \log\left( \sum_i x_i \exp(-\eta g_i)   \right) &\since{convex conjugate of KL divergence}\\
&\leq  \lr{ \eta  g }{x } + \log\left( \sum_i x_i \left( 1 - \eta g_i + \frac{1}{2} (\eta g_i)^2 \right)  \right) &\since{$e^{x} \leq 1 + x + \frac{1}{2}x^2$ for $x\leq 0$}\\
&=  \lr{ \eta  g }{x } + \log\left( 1+ \sum_i x_i \left( -\eta g_i + \frac{1}{2} (\eta g_i)^2 \right)  \right) \\
&\leq   \lr{ \eta  g }{x } + \sum_i x_i \left( -\eta g_i + \frac{1}{2} (\eta g_i)^2 \right)  &\since{$\log(x) \leq x -1$} \\
&=  \frac{1}{2}  \sum_i x_i(\eta g_i)^2  = \frac{\eta^2}{2} \norm{g}_{x}^2.
\end{align*}
Finally, dividing both sides by $\eta$, we get the desired result.
\end{proof}
Thus, we have the upper bound 
$
\lr{g_n}{x_{n+1} - x_n} -\frac{1}{\eta}   B_R(x_{n+1}||x_n)
= \frac{\eta|\SS| \norm{\gb_{n,v}}_2^2}{2} + \frac{\eta \norm{\gb_{n,\mu}}_{\mub_n}^2}{2}
$. Together with the upper bound on $\frac{1}{\eta}  B_R(x'||x_1)$, it implies that
\begin{align} \label{eq:initial telesum of md}
\sum_{n=1}^{N} \lr{g_n}{x_{n} - x'} &\leq  \frac{1}{\eta}  B_R(x'||x_1) + \sum_{n=1}^{N} \lr{g_n}{x_{n+1} - x_n} - \frac{1}{\eta} B_R(x_{n+1}||x_n)  \nonumber \\
&\leq \frac{1}{\eta}\left( \frac{1}{2} + \log(K) \right) + \frac{\eta}{2} \sum_{n=1}^{N} |\SS|\norm{\gb_{n,v}}_2^2 + \norm{\gb_{n,\mu}}_{\mub_n}^2.
\end{align}

We can expect, with high probability, $\sum_{n=1}^{N} |\SS|\norm{\gb_{n,v}}_2^2 + \norm{\gb_{n,\mu}}_{\mub_n}^2$ concentrates toward its expectation, i.e.
\begin{align*}
\sum_{n=1}^{N} |\SS| \norm{\gb_{n,v}}_2^2 + \norm{\gb_{n,\mu}}_{\mub_n}^2 \leq \sum_{n=1}^{N} \E[|\SS| \norm{\gb_{n,v}}_2^2 + \norm{\gb_{n,\mu}}_{\mub_n}^2] + o(N).
\end{align*}
Below we quantify this relationship using martingale concentration. First we bound the expectation.
\begin{lemma} \label{lm:size of sampled grad}
$
\E[\norm{\gb_{n,v}}_2^2] \leq \frac{4}{(1-\gamma)^2} $ and $\E[\norm{\gb_{n,\mu}}_{\mub_n}^2] \leq \frac{4K}{(1-\gamma)^2}$.
\end{lemma}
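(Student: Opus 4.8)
The plan is to bound $\norm{\gb_{n,v}}_2^2$ deterministically (surely) and $\norm{\gb_{n,\mu}}_{\mub_n}^2$ in conditional expectation, exploiting that both single-sample estimators are supported on very few coordinates. The $v$-block bound will follow by a direct $\ell_1$ estimate, and the $\mu$-block bound by computing the $\mub_n$-weighted norm coordinate-wise and then taking expectation over the uniform sampling.

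For $\gb_{n,v} = \tilde{\pb}_n + \frac{1}{1-\gamma}(\gamma\tilde{\Pb}_n - \Eb_n)^\t\tilde{\mub}_n$, I would note that this vector is assembled from point masses: $\tilde{\pb}_n$ is the indicator of a single sampled state, $\tilde{\mub}_n$ the indicator of a single sampled state-action pair, and $\tilde{\Pb}_n^\t\tilde{\mub}_n$ the indicator of a single sampled successor state. Hence $\norm{\tilde{\pb}_n}_1 = 1$ and $\norm{(\gamma\tilde{\Pb}_n - \Eb_n)^\t\tilde{\mub}_n}_1 \le \gamma + 1$ (the difference of two probability vectors scaled by $\gamma$ and $1$), so by the triangle inequality $\norm{\gb_{n,v}}_1 \le 1 + \frac{1+\gamma}{1-\gamma} = \frac{2}{1-\gamma}$. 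Since $\norm{\cdot}_2 \le \norm{\cdot}_1$, this gives $\norm{\gb_{n,v}}_2^2 \le \frac{4}{(1-\gamma)^2}$ surely, hence also in expectation. This is the single-sample analogue of the estimate $\norm{\bb_{\mub}}_1 \le \frac{2}{1-\gamma}$ from \cref{lm:size of a and b}, of which $\gb_{n,v}$ is an unbiased estimate.

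For $\gb_{n,\mu}$, I would use that it is supported on the single, \emph{uniformly} drawn index $i_n \in \{1,\dots,K\}$, with $(\gb_{n,\mu})_{i_n} = K\bigl(\kappa - \hat{r}_{i_n} - \tfrac{1}{1-\gamma}(\gamma v_{n,j_n} - v_{n,i_n})\bigr)$, where $j_n$ indexes the sampled successor state and the $v_{n,\cdot}$ are entries of $\vb_n \in \VV$. Exactly as in the proof of \cref{lm:martingale difference for mu}, from $\hat{r}_{i_n}, v_{n,i_n}, v_{n,j_n} \in [0,1]$ and $\kappa = \frac{1}{1-\gamma}$ we get $\bigl|\kappa - \hat{r}_{i_n} - \tfrac{1}{1-\gamma}(\gamma v_{n,j_n} - v_{n,i_n})\bigr| \le \frac{2}{1-\gamma}$, so $|(\gb_{n,\mu})_{i_n}| \le \frac{2K}{1-\gamma}$. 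Since only coordinate $i_n$ is nonzero, $\norm{\gb_{n,\mu}}_{\mub_n}^2 = (\mu_n)_{i_n}(g_{n,\mu})_{i_n}^2 \le (\mu_n)_{i_n}\frac{4K^2}{(1-\gamma)^2}$; taking expectation over the uniform draw of $i_n$ (and of $j_n$) conditioned on $F_{n-1}$, for which $\mub_n \in \MM$ is fixed,
\begin{align*}
\E\bigl[\norm{\gb_{n,\mu}}_{\mub_n}^2 \mid F_{n-1}\bigr] \le \sum_{i=1}^K \frac{1}{K}\,(\mu_n)_i\,\frac{4K^2}{(1-\gamma)^2} = \frac{4K}{(1-\gamma)^2}\norm{\mub_n}_1 = \frac{4K}{(1-\gamma)^2},
\end{align*}
and averaging over $F_{n-1}$ yields the stated bound.

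There is no serious difficulty here; the two points that need care are (i) tracking the importance-weighting factor $K$ that appears because $i_n$ is drawn uniformly rather than from $\mub_n$, and (ii) measuring $\gb_{n,\mu}$ in the $\mub_n$-weighted norm, which is precisely what converts the $K^2$ coming from the estimator back to $K$ after taking expectation (the $\ell_2$ or $\ell_1$ norm of $\gb_{n,\mu}$ would be too large). Once the support structure of the single-sample estimators is spelled out, both bounds are immediate.
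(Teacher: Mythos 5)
Your proposal is correct and follows essentially the same route as the paper's proof: the $v$-block is bounded deterministically via $\norm{\cdot}_2 \le \norm{\cdot}_1$ and the $\ell_1$ bound $\frac{2}{1-\gamma}$ on a difference of (scaled) probability vectors, and the $\mu$-block is computed coordinate-wise in the $\mub_n$-weighted norm, with the uniform-sampling factor $\frac{1}{K}$ cancelling one factor of $K^2$ and $\norm{\mub_n}_1 = 1$ finishing the bound. The only cosmetic difference is that you make the single-support structure of the estimators and the exact cancellation explicit, whereas the paper compresses this into one displayed computation.
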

\begin{proof}
For the first statement, using the fact that $\norm{\cdot}_2 \leq \norm{\cdot}_1$ and \cref{lm:size of a and b}, we can write 
\begin{align*}
\E[\norm{\gb_{n,v}}_2^2] 
\leq \E[\norm{\gb_{n,v}}_1^2] 
= \E[\norm{\tilde{\pb}_n + \frac{1}{1-\gamma}(\gamma\tilde{\Pb}_n -\Eb_n)^\t \tilde{\mub}_n }_1^2] \leq \frac{4}{(1-\gamma)^2}.
\end{align*}

For the second statement, let $i_n$ be the index of the sampled state-action pair and $j_n$ be the index of the transited-to state sampled at the pair given by $i_n$. With abuse of notation, we will use $i_n$ to index both $\SS\times\AA$ and $\SS$. 
\begin{align*}
\E [ \norm{\gb_{n,\mu}}_{\mub_n}^2] 
&= \E\left[ \sum_{i_n} \frac{1}{K}\E_{j_n|i_n} \left[ K^2 \mu_{i_n} \left(\kappa  - r_{i_n} - \frac{\gamma v_{n,j_n} - v_{n,i_n}}{1-\gamma}\right)^2 \right] \right]\\
&\leq  \frac{4 K }{(1-\gamma)^2} \E\left[ \sum_{i_n} \mu_{i_n} \right]  \leq \frac{4 K }{(1-\gamma)^2}. \qedhere
\end{align*}
\end{proof}

To bound the tail, we resort to the H\"offding-Azuma inequality of martingale~\citep[Theorem 3.14]{mcdiarmid1998concentration}. 
\begin{lemma}[Azuma-Hoeffding] \label{lm:martgingale (Hoeffding)}
Let $M_0, \dots, M_N$ be a martingale and let $F_0\subseteq F_1 \subseteq \dots \subseteq F_n$ be the filtration such that $M_n = \E_{|F_{n}}[M_{n+1}]$. Suppose there exists $b < \infty$ such that for all $n$, given $F_{n-1}$, $|M_n - M_{n-1}|\leq b$. Then for any $\epsilon\geq0$,	
\begin{align*}
P(M_N - M_0 \geq \epsilon) \leq \exp\left(\frac{-2\epsilon^2}{N b^2 }  \right)
\end{align*}
\end{lemma}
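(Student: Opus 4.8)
The plan is to prove this classical Azuma--Hoeffding bound by the Chernoff (exponential Markov) method combined with an iterated conditioning argument. First I would pass to the martingale differences $D_n := M_n - M_{n-1}$, which satisfy $\E_{|F_{n-1}}[D_n] = 0$ by the martingale property and are conditionally bounded by the hypothesis. For any $\lambda > 0$, applying Markov's inequality to the nonnegative variable $e^{\lambda(M_N - M_0)}$ gives
\[
P(M_N - M_0 \ge \epsilon) \le e^{-\lambda\epsilon}\, \E\!\left[ e^{\lambda \sum_{n=1}^N D_n} \right],
\]
so the problem reduces to controlling the moment generating function of $M_N - M_0$ and then optimizing the free parameter $\lambda$ at the very end.

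Next I would bound this MGF by peeling off one increment at a time via the tower property. Since $e^{\lambda \sum_{n < N} D_n}$ is $F_{N-1}$-measurable,
\[
\E\!\left[ e^{\lambda \sum_{n=1}^N D_n} \right] = \E\!\left[ e^{\lambda \sum_{n=1}^{N-1} D_n}\, \E_{|F_{N-1}}\!\left[ e^{\lambda D_N} \right] \right].
\]
The heart of the argument is the conditional subgaussianity of each increment: because $D_n$ is conditionally mean zero and lies in an interval of length $b$, Hoeffding's lemma gives $\E_{|F_{n-1}}[e^{\lambda D_n}] \le e^{\lambda^2 b^2/8}$ almost surely. Substituting this deterministic bound and iterating the conditioning down to $F_0$ yields $\E[e^{\lambda(M_N - M_0)}] \le e^{N \lambda^2 b^2/8}$.

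Combining the two displays gives $P(M_N - M_0 \ge \epsilon) \le \exp(-\lambda\epsilon + N\lambda^2 b^2/8)$ for every $\lambda > 0$, and minimizing the exponent at $\lambda^* = 4\epsilon/(Nb^2)$ produces the claimed tail $\exp(-2\epsilon^2/(Nb^2))$. The one genuinely technical ingredient --- and the main obstacle --- is Hoeffding's lemma itself, which I would establish by writing $\psi(\lambda) := \log \E_{|F_{n-1}}[e^{\lambda D_n}]$ and checking $\psi(0) = 0$, $\psi'(0) = \E_{|F_{n-1}}[D_n] = 0$, and $\psi''(\lambda) \le b^2/4$; the last bound is Popoviciu's inequality, since $\psi''(\lambda)$ is the variance of $D_n$ under the exponentially tilted conditional law, whose support still sits in an interval of length $b$. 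A second-order Taylor expansion then gives $\psi(\lambda) \le \lambda^2 b^2/8$. I note that the constant in the final exponent tracks the conditional \emph{range} of the increments rather than their absolute size: the sharp $\exp(-2\epsilon^2/(Nb^2))$ corresponds to the range-based boundedness used in~\citep{mcdiarmid1998concentration}, whereas the cruder assumption $|D_n| \le b$ (range $2b$) would instead yield $\exp(-\epsilon^2/(2Nb^2))$.
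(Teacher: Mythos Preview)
The paper does not prove this lemma at all; it simply quotes it from \citep[Theorem~3.14]{mcdiarmid1998concentration} and uses it as a black box. Your Chernoff-method argument (Markov on $e^{\lambda(M_N-M_0)}$, peel off increments via the tower property, apply Hoeffding's lemma conditionally, optimize over $\lambda$) is the standard textbook proof and is structurally correct.

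You are also right to flag the constant, and this is the one place the write-up should be tightened. The hypothesis as stated, $|M_n - M_{n-1}| \le b$, puts each increment in an interval of length $2b$, so Hoeffding's lemma gives $\E_{|F_{n-1}}[e^{\lambda D_n}] \le e^{\lambda^2 (2b)^2/8} = e^{\lambda^2 b^2/2}$, and the optimized tail is only $\exp(-\epsilon^2/(2Nb^2))$. The sharper $\exp(-2\epsilon^2/(Nb^2))$ matches McDiarmid's \emph{range} formulation, not the two-sided bound written here. In the body of your argument you assert that $D_n$ ``lies in an interval of length $b$,'' which is what the claimed constant needs but is \emph{not} what the lemma hypothesizes; your closing paragraph catches this, but the discrepancy belongs up front rather than as an afterthought. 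For the paper's purposes the missing factor of $4$ in the exponent is immaterial --- the lemma feeds only into $\tilde{O}$ bounds --- but strictly speaking the constant in the lemma is not derivable from its own hypothesis, and your proof should say so plainly rather than silently using the stronger range assumption mid-argument.
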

To apply \cref{lm:martgingale (Hoeffding)}, we consider the martingale 
\begin{align*}
M_N = \sum_{n=1}^{N}|\SS| \norm{\gb_{n,v}}_2^2 + \norm{\gb_{n,\mu}}_{\mub_n}^2 - \left(\sum_{n=1}^{N}\E[|\SS| \norm{\gb_{n,v}}_2^2 + \norm{\gb_{n,\mu}}_{\mub_n}^2] \right)
\end{align*}

To bound the change of the size of martingale difference $|M_n - M_{n-1}|$, we follow similar steps to \cref{lm:size of sampled grad}. 
\begin{lemma}
$\norm{\gb_{n,v}}_2^2 \leq \frac{4}{(1-\gamma)^2}$ and $\norm{\gb_{n,\mu}}_{\mub_n}^2 \leq \frac{4K^2}{(1-\gamma)^2}$.
\end{lemma}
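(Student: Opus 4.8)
The plan is to prove these two bounds as deterministic (worst-case, not expectation) analogues of \cref{lm:size of sampled grad}, reusing the algebraic observations already used in the proofs of \cref{lm:size of a and b} and \cref{lm:martingale difference for mu}.

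First I would treat $\gb_{n,v} = \tilde{\pb}_n + \frac{1}{1-\gamma}(\gamma\tilde{\Pb}_n - \Eb_n)^\t \tilde{\mub}_n$. The key observation is that the single-sample empirical objects $\tilde{\pb}_n$ and $\tilde{\mub}_n$ are themselves (degenerate) probability distributions, namely point masses on the sampled state and on the sampled state-action pair, and $\tilde{\Pb}_n, \Eb_n$ are (degenerate) stochastic matrices. Hence $\gb_{n,v}$ has exactly the same structure as the balance function $\bb_\mub$ in \eqref{eq:consistency function}, and the bound from the proof of \cref{lm:size of a and b} applies verbatim: $\norm{\gb_{n,v}}_1 = \frac{1}{1-\gamma}\norm{(1-\gamma)\tilde{\pb}_n + \gamma\tilde{\Pb}_n^\t\tilde{\mub}_n - \Eb_n^\t\tilde{\mub}_n}_1 \le \frac{2}{1-\gamma}$, since it is the scaled difference of two probability vectors. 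Because $\norm{\cdot}_2 \le \norm{\cdot}_1$, this yields $\norm{\gb_{n,v}}_2^2 \le \norm{\gb_{n,v}}_1^2 \le \frac{4}{(1-\gamma)^2}$.

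Next I would handle $\gb_{n,\mu}$. By construction it is supported on the single uniformly sampled state-action index $i_n$: writing $j_n$ for the sampled next state and abusing notation to index both $\SS\times\AA$ and $\SS$ by $i_n$ as before, $\gb_{n,\mu} = K\,c_n\,\eb_{i_n}$ with $c_n = \kappa - r_{i_n} - \frac{\gamma v_{n,j_n} - v_{n,i_n}}{1-\gamma}$. Since $r_{i_n}\in[0,1]$ and $v_{n,i_n}, v_{n,j_n}\in[0,1]$ (because $\vb_n\in\VV$), exactly the estimate in the proof of \cref{lm:martingale difference for mu} gives $0 \le c_n \le \frac{2}{1-\gamma}$. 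Therefore $\norm{\gb_{n,\mu}}_{\mub_n}^2 = \sum_i \mu_{n,i}(\gb_{n,\mu})_i^2 = \mu_{n,i_n}(K c_n)^2 \le K^2\left(\frac{2}{1-\gamma}\right)^2 = \frac{4K^2}{(1-\gamma)^2}$, where we used $\mu_{n,i_n}\le 1$ because $\mub_n$ is a probability distribution.

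There is no real obstacle here; the only point requiring care is that this sure bound carries a factor $K^2$, whereas the expectation bound in \cref{lm:size of sampled grad} carries only $K$. The discrepancy comes from the expectation over the uniform index $i_n$, which supplies a $\frac1K\sum_i$ and absorbs one factor of $K$; in the worst-case bound no such averaging is available. With that distinction kept in mind, the computation above is routine.
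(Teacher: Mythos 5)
Your proposal is correct and follows essentially the same route the paper intends: the paper gives no explicit proof body for this lemma, only the remark that one should ``follow similar steps'' to the expectation bounds of \cref{lm:size of sampled grad}, and your argument is precisely that deterministic adaptation --- the $\ell_1\ge\ell_2$ bound on $\gb_{n,v}$ as a scaled difference of probability vectors, and the single-support observation $\norm{\gb_{n,\mu}}_{\mub_n}^2=\mu_{n,i_n}(Kc_n)^2$ with $|c_n|\le\frac{2}{1-\gamma}$. Your closing remark correctly identifies where the extra factor of $K$ arises relative to the expectation bound, which is exactly the point the paper flags immediately after stating the lemma.
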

Note $\norm{\gb_{n,\mu}}_\mub^2$ is $K$-factor larger than $\E[\norm{\gb_{n,\mu}}_\mub^2]$) and $K\geq1$.
Therefore, we have
\begin{align*}
|M_n - M_{n-1}| \leq|\SS| \norm{\gb_{n,v}}_2^2 + \norm{\gb_{n,\mu}}_{\mub_n}^2 +|\SS| \E[\norm{\gb_{n,v}}_2^2] +\E[ \norm{\gb_{n,\mu}}_{\mub_n}^2] \leq \frac{8(|\SS|+K^2)}{(1-\gamma)^2}
\end{align*}
Combining these results, we have, with probability as least $1-\delta$, 
\begin{align*}
\sum_{n=1}^{N} |\SS|\norm{\gb_{n,v}}_2^2 +  \norm{\gb_{n,\mu}}_{\mub_n}^2 
&\leq \sum_{n=1}^{N}  \E[|\SS|\norm{\gb_{n,v}}_2^2 + \norm{\gb_{n,\mu}}_{\mub_n}^2] + \frac{4\sqrt{2}(|\SS|+K^2)}{(1-\gamma)^2} \sqrt{N \log\left(\frac{1}{\delta}\right)}\\
&\leq  \frac{4(K+|\SS|)}{(1-\gamma)^2} N + \frac{4\sqrt{2}(|\SS|+K^2)}{(1-\gamma)^2} \sqrt{N \log\left(\frac{1}{\delta}\right)}
\end{align*}

Now we suppose we set $\eta = \frac{1-\gamma}{\sqrt{KN}}$. From \eqref{eq:initial telesum of md}, we then have 
\begin{align*}
\sum_{n=1}^{N} \lr{g_n}{x_{n} - x'} 
&\leq \frac{1}{\eta}\left( \frac{1}{2} + \log(K) \right) + \frac{\eta}{2} \sum_{n=1}^{N}|\SS| \norm{\gb_{n,v}}_2^2 + \norm{\gb_{n,\mu}}_{\mub_n}^2 \\
&\leq  \frac{\sqrt{KN}}{1-\gamma} \left( \frac{1}{2} + \log(K) \right)
+
\frac{1-\gamma}{\sqrt{KN}} \left(  \frac{2(K+|\SS|)}{(1-\gamma)^2} N + \frac{2\sqrt{2}(|\SS|+K^2)}{(1-\gamma)^2} \sqrt{N \log\left(\frac{1}{\delta}\right)}\right)\\
&\leq  \tilde{O}\left(\frac{\sqrt{KN}}{1-\gamma} + \frac{\sqrt{K^3 \log\frac{1}{\delta}}}{1-\gamma} \right).
\end{align*}


\subsection{Union Bound} \label{app:union bound}

Lastly, we provide an upper bound on the last component:
\begin{align*}
\sum_{n=1}^N   (g_n - \nabla h_n(x_n))^\t y_N^*.
\end{align*}
Because $y_N^*$ depends on $g_n$, this term does not obey martingale concentration like the first component $\sum_{n=1}^{N} (\nabla h_n(x_n)-g_n)^\t x_n$ which we analyzed in \cref{app:martingale concentration} To resolve this issue, we utilize the concept of covering number and derive a union bound.

Recall for a compact set $\ZZ$ in a norm space, the covering number $\NN(\ZZ,\epsilon)$ with $\epsilon>0$ is the minimal number of $\epsilon$-balls that covers $\ZZ$. That is, there is a set $ \{ z_i \in \ZZ \}_{i=1}^{\NN(\ZZ,\epsilon)}$ such that $\max_{z\in\ZZ} \min_{z'\in B(\ZZ,\epsilon)} \norm{z-z'} \leq \epsilon$. Usually the covering number $\NN(\ZZ,\epsilon)$ is polynomial in $\epsilon$ and perhaps exponential in the ambient dimension of $\ZZ$.

The idea of covering number can be used to provide a union bound of concentration over compact sets, which we summarize as a lemma below.
\begin{lemma} \label{lm:union bound of Lipshitz functions}
Let $f,g$ be two random $L$-Lipschitz functions. Suppose for some $a>0$ and some fixed $z\in\ZZ$  selected independently of $f,g$, it holds 
\begin{align*}
P\left( \left| f(z) - g(z) \right| > \epsilon \right) \leq \exp\left( -a\epsilon^2\right)
\end{align*}
Then it holds that
\begin{align*}
P\left( \sup_{z\in\ZZ} \left| f(z) - g(z) \right| > \epsilon \right) \leq \NN\left(\ZZ,\frac{\epsilon}{4L}\right)  \exp \left( \frac{-a\epsilon^2}{4}\right)
\end{align*}
\end{lemma}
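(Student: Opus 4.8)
The plan is to run a standard covering-number union bound. First I would invoke the definition of the covering number: set $\delta = \epsilon/(4L)$ and fix a $\delta$-net $\{z_i\}_{i=1}^{\NN(\ZZ,\delta)} \subseteq \ZZ$, so that every $z \in \ZZ$ lies within $\delta$ of some net point $z_i$. The key point is that this net is a deterministic subset of $\ZZ$, chosen before $f$ and $g$ are drawn; hence each $z_i$ is ``selected independently of $f,g$'' in the sense required by the hypothesis, and the pointwise tail bound $P(|f(z_i)-g(z_i)| > t) \leq \exp(-at^2)$ applies at every net point.

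Second, I would transfer the supremum to the net. Writing $h := f - g$, the fact that $f$ and $g$ are each $L$-Lipschitz makes $h$ $2L$-Lipschitz, so for any $z$ and its nearest net point $z_i$ we have $|h(z) - h(z_i)| \leq 2L\delta = \epsilon/2$. Consequently, on the event $\{\sup_{z\in\ZZ}|h(z)| > \epsilon\}$ there must exist a net point with $|h(z_i)| > \epsilon - \epsilon/2 = \epsilon/2$; that is, $\{\sup_{z\in\ZZ} |h(z)| > \epsilon\} \subseteq \bigcup_{i} \{|h(z_i)| > \epsilon/2\}$.

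Third, I would apply the union bound together with the per-point hypothesis:
\[
P\Big(\sup_{z\in\ZZ} |h(z)| > \epsilon\Big) \leq \sum_{i=1}^{\NN(\ZZ,\delta)} P\big(|h(z_i)| > \epsilon/2\big) \leq \NN(\ZZ,\delta)\,\exp\!\big(-a(\epsilon/2)^2\big) = \NN\!\left(\ZZ,\tfrac{\epsilon}{4L}\right)\exp\!\left(\tfrac{-a\epsilon^2}{4}\right),
\]
which is exactly the claimed bound.

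I do not expect a genuine obstacle here; the argument is routine. The only points that need care are (i) tracking the Lipschitz constant of the difference $f-g$ correctly — the factor $2L$ is what forces the net radius $\epsilon/(4L)$, so that the discretization error is exactly $\epsilon/2$ and leaves $\epsilon/2$ of slack for the per-point concentration — and (ii) stating explicitly that the net is fixed before the randomness, so the per-point assumption is legitimately applicable to each $z_i$. A minor technicality is measurability of $\sup_{z\in\ZZ}|h(z)|$, which is harmless: by Lipschitz continuity the supremum over $\ZZ$ equals the supremum over any countable dense subset, so the event is measurable.
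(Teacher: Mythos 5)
Your argument is correct and is essentially identical to the paper's: both fix an $\frac{\epsilon}{4L}$-net in advance, use the $2L$-Lipschitzness of $f-g$ to reduce the supremum to a maximum over net points with an $\epsilon/2$ discretization loss, and finish with a union bound over the $\NN(\ZZ,\frac{\epsilon}{4L})$ net points at level $\epsilon/2$. The paper writes the Lipschitz step as a three-term triangle inequality rather than naming $h=f-g$, but the content is the same.
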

\begin{proof}
Let $\CC$ denote a set of covers of size $\NN(\ZZ,\frac{\epsilon}{4L})$
Then, for any $z \in \ZZ$ which could depend on $f,g$,
\begin{align*}
\left| f(z) - g(z) \right| 
&\leq \min_{z' \in \CC} \left| f(z) - f(z') \right| + \left| f(z') - g(z') \right| + \left| g(z') - g(z) \right|\\
&\leq \min_{z' \in \CC} 2L\norm{z-z'}+ \left| f(z') - g(z')] \right| \\
&\leq  \frac{\epsilon}{2} + \max_{z'\in \CC}\left| f(z') - g(z') \right| 
\end{align*}
Thus, 
$
 \sup_{z\in\ZZ} \left| f(z) - g(z) \right| > \epsilon \implies \max_{z'\in \CC}\left| f(z') - g(z') \right|  > \frac{\epsilon}{2}
$. Therefore, we have the union bound.
\begin{align*}
P\left( \sup_{z\in\ZZ} \left| f(z) - \E[f(z)] \right| > \epsilon \right) &\leq \NN \left(\ZZ,\frac{\epsilon}{4L}\right)  \exp \left( \frac{-a\epsilon^2}{4}\right). \qedhere
\end{align*}
\end{proof}

We now use \cref{lm:union bound of Lipshitz functions} to bound the component $\sum_{n=1}^N   (g_n - \nabla h_n(x_n))^\t y_N^*$. 
We recall by definition, for $x = (\vb, \mub)$, 
\begin{align*}
(\nabla h_n(x_n) - g_n)^\t x = (\kappa \oneb - \ab_{\vb_n} - \gb_{n,\mu})^\t \mub + (\bb_{\mub_n}-\gb_{n,v})^\t \vb
\end{align*}
Because  $y_N^* = (\vb^{\hat{\pi}_N}, \mub^*)$, we can write the sum of interest as
\begin{align*}
\sum_{n=1}^N   (g_n - \nabla h_n(x_n))^\t y_N^* 
&= \sum_{n=1}^N  (\gb_{n,\mu} - \kappa \oneb + \ab_{\vb_n})^\t \mub^* + \sum_{n=1}^N  (\gb_{n,v}- \bb_{\mub_n})^\t \vb^{\hat{\pi}_N}
\end{align*}
For the first term, because $\mub^*$ is set beforehand by the MDP definition and does not depend on the randomness during learning, it is a martingale and we can apply the steps in \cref{app:martingale concentration} to show,
\begin{align*}
\sum_{n=1}^N  (\gb_{n,\mu} - \kappa \oneb + \ab_{\vb_n})^\t \mub^* =\tilde{O}\left( \frac{\sqrt{NK\log(\frac{1}{\delta})}}{1-\gamma} \right) 
\end{align*}
For the second term, because $\vb^{\hat{\pi}_N}$ depends on the randomness in the learning process, we need to use a union bound. Following the steps in \cref{app:martingale concentration}, we see that for some fixed $\vb \in \VV$, it holds
\begin{align*}
P\left( \left| \sum_{n=1}^N  (\gb_{n,v}- \bb_{\mub_n})^\t \vb \right| > \epsilon \right) \leq \exp\left( - \frac{(1 - \gamma)^2}{N}\epsilon^2 \right)
\end{align*}
where some constants were ignored for the sake of conciseness. Note also that it does not have the $\sqrt{K}$ factor because of 
\cref{lm:martingale difference for v}. To apply \cref{lm:union bound of Lipshitz functions}, we need to know the order of covering number of $\VV$. Since $\VV$ is an $|\SS|$-dimensional unit cube in the positive orthant, it is straightforward to show
$
\NN\left(\VV,\epsilon\right) \leq \max(1, (1/\epsilon)^{|\SS|})
$ (by simply discretizing evenly in each dimension). Moreover, the functions $\sum_{n=1}^N \gb_{n,v}^\t \vb$ and $\sum_{n=1}^N  \bb_{\mub_n}^\t \vb$ are $\frac{N}{1-\gamma}$-Lipschitz in $\norm{\cdot}_\infty$.

Applying \cref{lm:union bound of Lipshitz functions} then gives us:
\begin{align*}
P\left( \sup_{v\in\VV} \left| \sum_{n=1}^N  (\gb_{n,v}- \bb_{\mub_n})^\t \vb \right| > \epsilon \right) \leq \NN \left(\VV,\frac{\epsilon (1 - \gamma)}{4N}\right) \exp\left( - \frac{(1 - \gamma)^2}{4N} \epsilon^2\right).
\end{align*}


For a given $\delta$, we thus want to find the smallest $\epsilon$ such that:
\begin{align*}
    \delta &\geq \NN \left(\VV,\frac{\epsilon (1 - \gamma)}{4N}\right) \exp\left( - \frac{(1 - \gamma)^2}{4N} \epsilon^2\right).
\end{align*}
That is:
\begin{align*}
    \log(\frac{1}{\delta}) &\leq \frac{(1 - \gamma)^2}{4N} \epsilon^2 + |\SS| \min(0, \log(\frac{\epsilon (1 - \gamma)}{4N})).
\end{align*}
Picking $\epsilon = O\left(\log(N) \frac{\sqrt{N\log(\frac{1}{\delta})}}{1-\gamma} \right) = \tilde{O}\left( \frac{\sqrt{N\log(\frac{1}{\delta})}}{1-\gamma} \right) $ guarantees that the inequality is verified asymptotically.

Combining these two steps, we have shown overall, with probability at least $1-\delta$,
\begin{align*}
\sum_{n=1}^N   (g_n - \nabla h_n(x_n))^\t y_N^*  =\tilde{O}\left( \frac{\sqrt{NK\log(\frac{1}{\delta})}}{1-\gamma} \right).
\end{align*}

\subsection{Summary}

In the previous subsections, we have provided high probability upper bounds for each term in the decomposition
\begin{align*} 
\regret_N (y_N^*)\leq 
\left(\sum_{n=1}^{N} (\nabla h_n(x_n)-g_n)^\t x_n\right)  
+ \left( \max_{x\in\XX} \sum_{n=1}^N g_n^\t (x_n -x) \right)+ \left(\sum_{n=1}^N   (g_n - \nabla h_n(x_n))^\t y_N^* \right)
\end{align*}
implying with probability at least $1-\delta$, 
\begin{align*}
\regret_N (y_N^*) \leq \tilde{O}\left( \frac{\sqrt{NK\log(\frac{1}{\delta})}}{1-\gamma} \right) +   \tilde{O}\left(\frac{\sqrt{KN}}{1-\gamma} + \frac{\sqrt{K^3 \log\frac{1}{\delta}}}{1-\gamma} \right)  
=  \tilde{O}\left( \frac{\sqrt{N|\SS||\AA|\log(\frac{1}{\delta})}}{1-\gamma} \right) 
\end{align*}
By \cref{th:reduction of RL}, this would imply with probability at least $1-\delta$, 
\begin{align*}
 V^{\hat{\pi}_N}(p)  \geq V^*(p)  -  \frac{\regret_N(y_N^*)}{N} \geq  V^*(p)  -  \tilde{O}\left( \frac{\sqrt{|\SS||\AA|\log(\frac{1}{\delta})}}{(1-\gamma)\sqrt{N}} \right)
\end{align*}
In other words, the sample complexity of mirror descent to obtain an $\epsilon$ approximately optimal policy (i.e. $ V^*(p) - V^{\hat{\pi}_N}(p) \leq \epsilon$) is at most 
$
\tilde{O}\left(\frac{|\SS||\AA|\log(\frac{1}{\delta})}{(1-\gamma)^2 \epsilon^2} \right)
$.

\section{Sample Complexity of Mirror Descent with Basis Functions} \label{app:sample complexity using function approximators}

Here we provide further discussions on the sample complexity of running \cref{alg:md for RL} with linearly parameterized function approximators and the proof of \cref{th:sample complexity of mirror descent with basis}.
\SampleComplexityOfMirrorDescentWithBasis*

%

\subsection{Setup}

We suppose that the decision variable is parameterized in the form $x_\theta = (\Phib\thetab_v, \Psib\thetab_\mu)$, where $\Phib,\Psib$ are given nonlinear basis functions and $(\thetab_v,\thetab_\mu) \in \Theta$ are the parameters to learn.
For modeling the value function, we suppose each column in $\Phib$ is a vector (i.e. function) such that its  $\norm{\cdot}_\infty$ is less than one. For modeling the state-action distribution, we suppose each column in $\Psib$ is a state-action distribution from which we can draw samples. 
This choice implies that every column of $\Phib$ belongs to $\VV$, and every column of $\Psib$ belongs to $\MM$.

Considering the geometry of $\Phib$ and $\Psib$, we consider a compact and convex parameter set
\begin{align*}
\Theta = \{ \theta = (\thetab_v, \thetab_\mu): \norm{\thetab_v}_2 \leq \frac{C_v}{\sqrt{dim(\thetab_v)}}, \thetab_\mu \geq 0, \norm{\thetab_\mu}_1 = 1  \}
\end{align*}

where $C_v < \infty$. 
The constant $C_v$ acts as a regularization in learning: if it is too small, the bias (captured as $\epsilon_{\Theta,N}$ in \cref{cr:reduction for funcapp} restated below) becomes larger; if it is too large, the learning becomes slower.

This choice of $\Theta$ makes sure, for $\theta = (\thetab_v, \thetab_\mu) \in \Theta$,  $\Psib\thetab_\mu \in \MM$ and $\norm{\Phib\thetab_v}_\infty \leq \norm{\thetab_v}_1 \leq C_v$.
Therefore, by the above construction, we can verify that the requirement in \cref{cr:reduction for funcapp} is satisfied, i.e. for $\theta = (\thetab_v, \thetab_\mu) \in \Theta$, we have
$(\Phib\thetab_v, \Psib \thetab_\mu) \in \XX_\Theta
$.
\ReductionForFunctionApporximator*

\subsection{Online Loss and Sampled Gradient}

Let $\theta = (\thetab_v,\thetab_\mu) \in \Theta$.
In view of the parameterization above, we can identify the  online loss in \eqref{eq:modified per-round loss} in the parameter space as 
\begin{align} \label{eq:modified per-round loss (parameter space)}
\textstyle
h_n(\theta) \coloneqq \bb_{\mub_n}^\t \Phib \thetab_v  + \thetab_\mu^\t \Psib^\t  (\frac{1}{1-\gamma} \oneb - \ab_{\vb_n})
\end{align}
where we have the natural identification $x_n = (\vb_n, \mub_n) = (\Phib\thetab_{v,n}, \Psib\thetab_{\mu,n})$ and $\theta_n  = (\thetab_{v,n},\thetab_{\mu,n}) \in \Theta$ is the decision made by the online learner in the $n$th round. Note that because this extension of \cref{alg:md for RL} makes sure $\norm{\thetab_{\mu,n}}_1 = 1$ for every iteration, we can still use $h_n$.
For writing convenience, we will continue to overload $h_n$ as a function of parameter $\theta$ in the following analyses.

Mirror descent requires gradient estimates of $\nabla h_n(\theta_n)$. Here we construct an unbiased stochastic estimate of $\nabla h_n(\theta_n)$ as
\begin{align} \label{eq:stochastic gradient estimate (parameter space)}
\hspace{-2mm} g_n
= \begin{bmatrix}
\gb_{n,v}\\
\gb_{n,\mu}
\end{bmatrix}
= \begin{bmatrix}
\Phib^\t (\tilde{\pb}_n + \frac{1}{1-\gamma}(\gamma\tilde{\Pb}_n -\Eb_n)^\t \tilde{\mub}_n) \\
dim(\thetab_\mu)\hat{\Psib}_n^\t (\frac{1}{1-\gamma
} \hat{\oneb}_n - \hat{\rb}_n - \frac{1}{1-\gamma}(\gamma\hat{\Pb}_n-\hat{\Eb}_n)\vb_n)
\end{bmatrix}
\end{align}
using two calls of the generative model (again we overload the symbol $g_n$ for the analyses in this section):
\begin{itemize}
\item The upper part $\gb_{n,v}$ is constructed similarly as before in \eqref{eq:stochastic gradient estimate}: First we sample the initial state from the initial distribution, 
the state-action pair using the learned state-action distribution, and then the transited-to state at the sampled state-action pair. We evaluate $\Phib$'s values at those samples to construct  $\gb_{n,v}$. Thus,  $\gb_{n,v}$ would generally be a dense vector of size $dim(\thetab_v)$ (unless the columns of $\Phib$ are sparse to begin with).

\item The lower part $\gb_{n,\mu}$ is constructed slightly differently. Recall for the tabular version in \eqref{eq:stochastic gradient estimate}, we uniformly sample over the state and action spaces. Here instead we first sample uniformly a column (i.e. a  basis function) in $\Psib$ and then sample a state-action pair according to the sampled column, which is a distribution by design. Therefore, the multiplier due to uniform sampling in the second row of \eqref{eq:stochastic gradient estimate (parameter space)} is now $dim(\thetab_\mu)$ rather than $|\SS||\AA|$ in \eqref{eq:stochastic gradient estimate}. 
The matrix $\hat{\Psib}_n$ is extremely sparse, where only the single sampled entry (the column and the state-action pair) is one and the others are zero.
In fact, one can think of the tabular version as simply using basis functions $\Psib = \Ib$, i.e. each column is a delta distribution. Under this identification, the expression in \eqref{eq:stochastic gradient estimate (parameter space)} matches the one in \eqref{eq:stochastic gradient estimate}.
\end{itemize}
It is straightforward to verify that $\E[g_n] = \nabla h_n(\theta_n)$ for $g_n$ in \eqref{eq:stochastic gradient estimate (parameter space)}.

\subsection{Proof of \cref{th:sample complexity of mirror descent with basis}}

We follow the same steps of the analysis of the tabular version. We will highlight the differences/improvement due to using function approximations.

First, we use \cref{cr:reduction for funcapp} in place of \cref{th:reduction of RL}. To properly handle the randomness, we revisit its derivation to slightly tighten the statement, which was simplified for the sake of cleaner exposition.
Define 
\begin{align*}
y_{N,\theta}^* = (\vb_{N,\theta}^*, \mub_\theta^*) \coloneqq \argmax_{x_\theta\in\XX_\theta} r_{ep}(\hat{x}_N; x_\theta).
\end{align*}
For writing convenience, let us also denote $\theta_N^*  = (\thetab_{v,N}^*,\thetab_{\mu}^*) \in \Theta$ as the corresponding parameter of $y_{N,\theta}^*$.
We remark that $\mub_\theta^*$ (i.e. $\thetab_{\mu}^*$), which tries to approximate $\mub^*$, is fixed before the learning process, whereas $\vb_{N,\theta}^*$ (i.e. $\thetab_{v,N}^*$) could depend on the stochasticity in the learning. 
Using this new notation and the steps in the proof of \cref{cr:reduction for funcapp}, we can write
\begin{align*}
&V^*(p) - V^{\hat{\pi}_N}(p) 
= r_{ep}(\hat{x}_N;y_N^*)\\
&= \epsilon_{\Theta,N} + r_{ep}(\hat{x}_N;y_{N,\theta}^*) \leq  \epsilon_{\Theta,N} + \frac{\regret_N(y_{N,\theta}^*)}{N}
\end{align*}
where the first equality is \cref{pr:clever residual bound}, the last inequality follows the proof of \cref{th:reduction of RL}, and we recall the definition
$
\epsilon_{\Theta,N} = r_{ep}(\hat{x}_N;y_N^*) - r_{ep}(\hat{x}_N;y_{N,\theta}^*)
$.

The rest of the proof is very similar to that of \cref{th:reduction of RL}, because linear parameterization does not change the convexity of the loss sequence. Let $y_N^* = (\vb^{\hat{\pi}_N}, \mub^*)$. We bound the regret by the following rearrangement. 
\begin{align} \label{eq:regret rearrangement (parameter space)}
\regret_N (y_{N,\theta}^*)
&= \sum_{n=1}^{N} l_n(x_n) - \sum_{n=1}^N l_n(y_{N,\theta}^*) \nonumber \\
&= \sum_{n=1}^{N} h_n(\theta_n) - \sum_{n=1}^N h_n(\theta_N^* ) \nonumber \\
&= \sum_{n=1}^{N} \nabla h_n(\theta_n)^\t (\theta_n-\theta_N^*) \nonumber \\
&= \left(\sum_{n=1}^{N} (\nabla h_n(\theta_n)-g_n)^\t \theta_n\right)  + \left(\sum_{n=1}^N g_n^\t (\theta_n - \theta_N^*) \right) + \left(\sum_{n=1}^N   (g_n - \nabla h_n(\theta_n))^\t \theta_N^* \right) \nonumber\\
&\leq 
\left(\sum_{n=1}^{N} (\nabla h_n(\theta_n)-g_n)^\t \theta_n\right)  
+ \left( \max_{\theta\in\Theta} \sum_{n=1}^N g_n^\t (\theta_n -\theta) \right)+ \left(\sum_{n=1}^N   (g_n - \nabla h_n(\theta_n))^\t \theta_N^* \right)
\end{align}
where the second equality is due to the identifcation in \eqref{eq:modified per-round loss (parameter space)}.

We will solve this online learning problem with mirror descent 
\begin{align} \label{eq:mirror descent (parameter space)}
\theta_{n+1} = \argmin_{\theta\in\Theta} \lr{g_n}{\theta} + \frac{1}{\eta} B_{R}(\theta||\theta_n)
\end{align}
with step size $\eta>0$ and  a Bregman divergence that is a straightforward extension of \eqref{eq:Bregman divergence choice}
\begin{align} \label{eq:Bregman divergence choice (paramter space)}
\textstyle
B_R(\theta'||\theta) =  \frac{1}{2} \frac{dim(\theta_v)}{C_v^2} \norm{\thetab_v' - \thetab_v}_2^2 + KL(\thetab_\mu'||\thetab_\mu) 
\end{align}
where the constant $\frac{dim(\theta_v)}{C_v^2}$ is chosen to make the size of Bregman divergence dimension-free (at least up to $\log$ factors).
Below we analyze the size of the three terms in \eqref{eq:regret rearrangement (parameter space)} like what we did for \cref{th:sample complexity of mirror descent}.

\subsection{The First Term: Martingale Concentration} 

The first term is a martingale. We will use this part to highlight the different properties due to using basis functions. 
The proof follows the steps in \cref{app:martingale concentration}, but now the martingale difference of interest is instead 
\begin{align*}
M_n - M_{n-1} 
&= (\nabla h_n(\theta_n)-g_n)^\t \theta_n \\
&= ( \Psib^\t(\kappa \oneb - \ab_{\vb_n}) - \gb_{n,\mu})^\t \thetab_{\mu,n} 
+ (\Phib^\t \bb_{\mub_n} -\gb_{n,v})^\t \thetab_{v,n}
\end{align*}
They now have nicer properties due to the way $\gb_{n,\mu}$ is sampled.

For the first term $(\Psib^\t(\kappa \oneb - \ab_{\vb_n}) - \gb_{n,\mu})^\t \thetab_{\mu,n}$, we use the lemma below, where we recall the filtration $F_n$ is naturally defined as $\{\theta_1, \dots, \theta_n\}$.
\begin{lemma} \label{lm:martingale difference for mu (parameter space)}
	Let $\theta = (\thetab_v, \thetab_\mu) \in \Theta$ be arbitrary that is chosen independent of the randomness of $\gb_{n,\mu}$ when $F_{n-1}$ is given. Then it holds
	$ |(\kappa \oneb - \ab_{\vb_n} - \gb_{n,\mu})^\t \thetab|  \leq \frac{2(1+dim(\thetab_\mub))}{1-\gamma}$ 
	and $\Vbb_{|F_{n-1}}[(\kappa \oneb - \ab_{\vb_n} - \gb_{n,\mu})^\t \thetab_n] \leq \frac{4dim(\thetab_\mub)}{(1-\gamma)^2}$.
\end{lemma}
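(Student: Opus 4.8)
The plan is to transcribe the proof of the tabular analog, \cref{lm:martingale difference for mu}, replacing the uniform sampling over $\SS\times\AA$ by the two-stage sampling over the columns of $\Psib$, so that the role of $K=|\SS||\AA|$ is played by $dim(\thetab_\mu)$. First I would fix notation: by the martingale-difference expression preceding the lemma, the quantity in question is the $\mub$-block $(\Psib^\t(\kappa\oneb-\ab_{\vb_n})-\gb_{n,\mu})^\t\thetab_\mu$, where $\thetab=(\thetab_v,\thetab_\mu)\in\Theta$ (for the variance bound we plug in $\thetab=\thetab_n$, which is $F_{n-1}$-measurable, hence independent of $\gb_{n,\mu}$ given $F_{n-1}$). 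Apply the triangle inequality to split this into the deterministic part $(\Psib^\t(\kappa\oneb-\ab_{\vb_n}))^\t\thetab_\mu$ and the stochastic part $\gb_{n,\mu}^\t\thetab_\mu$, and bound each separately.

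For the deterministic part, the key structural fact is that every column of $\Psib$ is a state-action distribution, so $\Psib\thetab_\mu\in\MM$ whenever $\thetab_\mu$ lies in the simplex; in particular $\norm{\Psib\thetab_\mu}_1=1$. Rewriting $(\Psib^\t(\kappa\oneb-\ab_{\vb_n}))^\t\thetab_\mu=(\kappa\oneb-\ab_{\vb_n})^\t(\Psib\thetab_\mu)$ and applying H\"older's inequality together with $\norm{\ab_{\vb_n}}_\infty\le\frac{1}{1-\gamma}$ from \cref{lm:size of a and b}, this has magnitude at most $\kappa+\norm{\ab_{\vb_n}}_\infty\le\frac{2}{1-\gamma}$.

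For the stochastic part I would use the sparsity of $\hat{\Psib}_n$. The sampling draws a column index $k_n$ uniformly from $\{1,\dots,dim(\thetab_\mu)\}$, then a state-action index $i_n$ from column $k_n$ of $\Psib$, then a transition target $j_n$ from $i_n$; hence $\hat{\Psib}_n$ has a single nonzero entry (equal to $1$) at position $(i_n,k_n)$, and $\gb_{n,\mu}$ is supported on coordinate $k_n$ with value $dim(\thetab_\mu)\big(\kappa-r_{i_n}-\frac{\gamma v_{n,j_n}-v_{n,i_n}}{1-\gamma}\big)$. Therefore $\gb_{n,\mu}^\t\thetab_\mu=dim(\thetab_\mu)\,\theta_{\mu,k_n}\big(\kappa-r_{i_n}-\frac{\gamma v_{n,j_n}-v_{n,i_n}}{1-\gamma}\big)$ is a single scalar; since $r_{i_n},v_{n,i_n},v_{n,j_n}\in[0,1]$ the bracketed factor has magnitude at most $\frac{2}{1-\gamma}$, and $\theta_{\mu,k_n}\le\norm{\thetab_\mu}_1=1$, giving $|\gb_{n,\mu}^\t\thetab_\mu|\le\frac{2\,dim(\thetab_\mu)}{1-\gamma}$. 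Adding the two parts yields the first claim, $\frac{2(1+dim(\thetab_\mu))}{1-\gamma}$. For the conditional variance, the deterministic part is constant given $F_{n-1}$, so $\Vbb_{|F_{n-1}}[\cdot]=\Vbb_{|F_{n-1}}[\gb_{n,\mu}^\t\thetab_\mu]\le\E_{|F_{n-1}}[(\gb_{n,\mu}^\t\thetab_\mu)^2]$; expanding over the uniform choice of $k_n$ gives $\sum_k\frac{1}{dim(\thetab_\mu)}\,dim(\thetab_\mu)^2\theta_{\mu,k}^2\,\E[(\cdots)^2\mid k_n=k]\le dim(\thetab_\mu)\frac{4}{(1-\gamma)^2}\sum_k\theta_{\mu,k}^2\le dim(\thetab_\mu)\frac{4}{(1-\gamma)^2}\big(\sum_k\theta_{\mu,k}\big)^2=\frac{4\,dim(\thetab_\mu)}{(1-\gamma)^2}$, using $\norm{\thetab_\mu}_2\le\norm{\thetab_\mu}_1=1$.

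This argument is essentially routine; the only thing requiring care — rather than a genuine obstacle — is the bookkeeping of the two-stage sampling, i.e. checking that $\hat{\Psib}_n^\t$ applied to the reward/transition vector collapses onto the single coordinate $k_n$ and that the $dim(\thetab_\mu)$ prefactor in \eqref{eq:stochastic gradient estimate (parameter space)} is exactly the inverse sampling probability of a column, so that the per-coordinate bounds transfer verbatim from the tabular case with $K$ replaced by $dim(\thetab_\mu)$. A secondary point to state explicitly is the independence used in the variance step: $\thetab_\mu$ (and in particular $\thetab_{\mu,n}$) is $F_{n-1}$-measurable, so it may be treated as a constant inside $\Vbb_{|F_{n-1}}$ and inside the conditional expectations over $(k_n,i_n,j_n)$.
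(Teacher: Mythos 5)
Your proposal is correct and follows essentially the same route as the paper's proof: the same triangle-inequality split into a deterministic part bounded via H\"older and $\norm{\Psib\thetab_\mu}_1=1$, the same single-coordinate computation for $\gb_{n,\mu}^\t\thetab_\mu$ exploiting the sparsity of $\hat{\Psib}_n$ and the $\dim(\thetab_\mu)$ inverse-sampling-probability prefactor, and the same variance expansion over the uniform column choice ending with $\sum_k\theta_{\mu,k}^2\le(\sum_k\theta_{\mu,k})^2=1$. No gaps.
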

\begin{proof}
	By triangular inequality, 
	\begin{align*}
	|(\Psib^\t(\kappa \oneb - \ab_{\vb_n}) - \gb_{n,\mu})^\t \thetab_\mu|  \leq  |(\kappa \oneb - \ab_{\vb_n})^\t \Psib \thetab_\mu |+ | \gb_{n,\mu}^\t \thetab_\mu| 
	\end{align*}
	For the deterministic part, using \cref{lm:size of a and b} and H\"older's inequality,
	\begin{align*}
	|(\kappa \oneb - \ab_{\vb_n})^\t \Psib \thetab_\mu| 
	\leq \kappa + \norm{\ab_{\vb_n}}_\infty\norm{ \Psib \thetab_\mu}_1 \leq \frac{2}{1-\gamma}
	\end{align*}
	For the stochastic part, let $k_n$ denote the sampled column index, 
	$i_n$ be index of the sampled state-action pair using the column of $k_n$, and $j_n$ be the index of the transited state sampled at the pair given by $i_n$. With abuse of notation, we will use $i_n$ to index both $\SS\times\AA$ and $\SS$. 
	Let $\mub = \Psib \thetab_\mu$.	With this notation, we may derive
	\begin{align*}
	|\gb_{n,\mu}^\t \thetab_\mu |
	&= |dim(\thetab_\mu) \thetab_\mub^\t \hat{\Psib}_n^\t (\kappa \hat{\oneb}_n - \hat{\rb}_n - \frac{1}{1-\gamma}(\gamma\hat{\Pb}_n-\hat{\Eb}_n)\vb_n)|\\
	&=dim(\thetab_\mu) \theta_{\mu, k_n}|\kappa  - r_{i_n} - \frac{\gamma v_{n,j_n} - v_{n,i_n}}{1-\gamma} |\\
	&\leq \frac{2 dim(\thetab_\mu) \theta_{\mu, k_n}}{1-\gamma} \leq  \frac{2 dim(\thetab_\mu)}{1-\gamma}
	\end{align*}
	where we use the facts $r_{i_n},  v_{n,j_n}, v_{n,i_n} \in [0,1]$ and $\theta_{\mu, k_n} \leq 1$.

	Let $\psib_{\mu}^{(k)} $ denote the $k$th column of $\Psib$.
	For $\Vbb_{|F_{n-1}}[(\kappa \oneb - \ab_{\vb_n} - \gb_{n,\mu})^\t \thetab_n]$, we can write it as 
	\begin{align*}
	\Vbb_{|F_{n-1}}[(\Psib^\t(\kappa \oneb - \ab_{\vb_n}) - \gb_{n,\mu})^\t \thetab_\mu] 
	&= \Vbb_{|F_{n-1}}[ \gb_{n,\mu}^\t \thetab_n] \\
	&\leq \E_{|F_{n-1}}[ |\gb_{n,\mu}^\t \thetab_n|^2] \\
	&= \sum_{k_n} \frac{1}{dim(\thetab_\mu)} \sum_{i_n} \psi_{\mu,i_n}^{(k_n)} \E_{j_n|i_n} \left[ dim(\thetab_\mu)^2 \theta_{\mu, k_n}^2 \left(\kappa  - r_{i_n} - \frac{\gamma v_{n,j_n} - v_{n,i_n}}{1-\gamma}\right)^2 \right]\\
	&\leq  \frac{4 dim(\thetab_\mu) }{(1-\gamma)^2} \sum_{k_n} \theta_{\mu,k_n}^2 \sum_{i_n} \psi_{\mu,i_n}^{(k_n)} \\
	&\leq  \frac{4 dim(\thetab_\mu) }{(1-\gamma)^2} \left( \sum_{k_n} \theta_{\mu,k_n}\right)^2   \leq \frac{4 dim(\thetab_\mu) }{(1-\gamma)^2}
	\end{align*}
	where in the second inequality we use the fact that $|\kappa  - r_{i_n} - \frac{\gamma v_{n,j_n} - v_{n,i_n}}{1-\gamma} | \leq \frac{2}{1-\gamma}$.
\end{proof}

For the second term $ (\Phib^\t \bb_{\mub_n} -\gb_{n,v})^\t \thetab_{v,n}$, we use this lemma.
\begin{lemma} \label{lm:martingale difference for v (paramter space)}
	Let $\thetab \in \VV$ be arbitrary that is chosen independent of the randomness of $\gb_{n,v}$ when $F_{n-1}$ is given. Then it holds
	$ |(\Phib^\t \bb_{\mub_n} -\gb_{n,v})^\t \thetab|  \leq \frac{4C_v}{1-\gamma}$ 
	and $\Vbb_{|F_{n-1}}[(\Phib^\t \bb_{\mub_n} -\gb_{n,v})^\t \thetab] \leq \frac{4C_v^2}{(1-\gamma)^2}$.
\end{lemma}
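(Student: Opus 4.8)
The plan is to reduce everything to the tabular computation in \cref{lm:martingale difference for v}, the only new ingredient being the behaviour of the value-function parameters under the basis $\Phib$. Write $\thetab$ for the $\thetab_v$-block, which by construction of $\Theta$ satisfies $\norm{\thetab}_2 \le C_v/\sqrt{dim(\thetab)}$; by Cauchy--Schwarz this gives $\norm{\thetab}_1 \le \sqrt{dim(\thetab)}\,\norm{\thetab}_2 \le C_v$. Since each column of $\Phib$ lies in $\VV$ and hence has $\norm{\cdot}_\infty$ at most $1$, it follows that $\norm{\Phib\thetab}_\infty \le \norm{\thetab}_1 \le C_v$. So $\Phib\thetab$ plays exactly the role that an element of $\VV$ played in the tabular proof, except with $\ell_\infty$-radius $C_v$ in place of $1$; this is the sole source of the extra $C_v$ (resp.\ $C_v^2$) in the two bounds.

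For the uniform bound I split $(\Phib^\t \bb_{\mub_n} - \gb_{n,v})^\t\thetab = \bb_{\mub_n}^\t(\Phib\thetab) - \gb_{n,v}^\t\thetab$. The first piece is deterministic given $F_{n-1}$, since it depends only on $\mub_n = \Psib\thetab_{\mu,n}$, which is $F_{n-1}$-measurable and lies in $\MM$; hence H\"older's inequality and $\norm{\bb_{\mub_n}}_1 \le \frac{2}{1-\gamma}$ from \cref{lm:size of a and b} give $|\bb_{\mub_n}^\t \Phib\thetab| \le \frac{2C_v}{1-\gamma}$. For the second piece, note $\gb_{n,v} = \Phib^\t \hat{\bb}_n$ with $\hat{\bb}_n \coloneqq \tilde{\pb}_n + \frac{1}{1-\gamma}(\gamma\tilde{\Pb}_n - \Eb_n)^\t\tilde{\mub}_n$ the single-sample estimate of $\bb_{\mub_n}$; as in the proof of \cref{lm:size of sampled grad}, $\tilde{\pb}_n$ is an indicator of the sampled initial state and $(\gamma\tilde{\Pb}_n - \Eb_n)^\t\tilde{\mub}_n$ is the difference of a $\gamma$-scaled indicator of the sampled next state and an indicator of the current state, so $\norm{\hat{\bb}_n}_1 \le 1 + \frac{1+\gamma}{1-\gamma} = \frac{2}{1-\gamma}$ deterministically, and therefore $|\gb_{n,v}^\t\thetab| = |\hat{\bb}_n^\t \Phib\thetab| \le \frac{2C_v}{1-\gamma}$. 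The triangle inequality then yields $|(\Phib^\t \bb_{\mub_n} - \gb_{n,v})^\t\thetab| \le \frac{4C_v}{1-\gamma}$.

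For the conditional variance, since $\Phib^\t\bb_{\mub_n}$ is $F_{n-1}$-measurable we have $\Vbb_{|F_{n-1}}[(\Phib^\t\bb_{\mub_n} - \gb_{n,v})^\t\thetab] = \Vbb_{|F_{n-1}}[\gb_{n,v}^\t\thetab] \le \Ebb_{|F_{n-1}}[(\gb_{n,v}^\t\thetab)^2]$, and the pointwise bound $|\gb_{n,v}^\t\thetab| \le \frac{2C_v}{1-\gamma}$ just established gives $\Ebb_{|F_{n-1}}[(\gb_{n,v}^\t\thetab)^2] \le \frac{4C_v^2}{(1-\gamma)^2}$, which is the second claim.

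Nothing here is genuinely hard; the one step to get right is the interlock between the two normalizations---$\norm{\thetab}_2 \le C_v/\sqrt{dim(\thetab)}$ inside $\Theta$ versus ``each column of $\Phib$ in $\VV$''---which is exactly what forces $\norm{\Phib\thetab}_\infty \le C_v$ with no hidden dimension factor, the property needed so that the downstream regret and sample-complexity estimates remain dimension-free up to logarithmic terms. Everything else is the tabular argument of \cref{lm:martingale difference for v} with $\VV$ replaced by its rescaled image $\{\Phib\thetab_v : \theta \in \Theta\} \subseteq C_v\cdot\VV$.
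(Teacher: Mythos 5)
Your proof is correct and follows essentially the same route as the paper's: both bound $\norm{\bb_{\mub_n}}_1$ and $\norm{\tilde{\pb}_n + \frac{1}{1-\gamma}(\gamma\tilde{\Pb}_n -\Eb_n)^\t \tilde{\mub}_n}_1$ by $\frac{2}{1-\gamma}$ via \cref{lm:size of a and b}, pair these with $\norm{\Phib\thetab_v}_\infty \leq \norm{\thetab_v}_1 \leq C_v$ through H\"older, and bound the conditional variance by the second moment using the pointwise bound on $\gb_{n,v}^\t\thetab$. Your explicit Cauchy--Schwarz step $\norm{\thetab_v}_1 \leq \sqrt{dim(\thetab_v)}\norm{\thetab_v}_2 \leq C_v$ is in fact a cleaner justification than the paper's terse aside, but it is the same argument.
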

\begin{proof}
	We appeal to \cref{lm:size of a and b}, which shows $\norm{\bb_{\mub_n}}_1 \leq \frac{2}{1-\gamma}$ and
	\begin{align*}
	\norm{\tilde{\pb}_n + \frac{1}{1-\gamma}(\gamma\tilde{\Pb}_n -\Eb_n)^\t \tilde{\mub}_n}_1 \leq \frac{2}{1-\gamma}
	\end{align*}
	Therefore, overall we can derive
	\begin{align*}
	|(\Phib^\t \bb_{\mub_n} -\gb_{n,v})^\t \thetab| &\leq \left( \norm{\bb_{\mub_n}}_1\ + \norm{\tilde{\pb}_n + \frac{1}{1-\gamma}(\gamma\tilde{\Pb}_n -\Eb_n)^\t \tilde{\mub}_n}_1  \right) \norm{\Phib\thetab_v}_\infty \leq \frac{4 C_v}{1-\gamma}
	\end{align*}
	where we use again each column in $\Phib$ has $\norm{\cdot}_\infty$ less than one, and $\norm{\cdot}_\infty \leq \norm{\cdot}_2$.
	Similarly, for the variance, we can write
	\begin{align*}
	\Vbb_{|F_{n-1}}[(\Phib^\t \bb_{\mub_n} -\gb_{n,v})^\t \thetab]
	&= \Vbb_{|F_{n-1}}[\gb_{n,v}^\t \thetab]\leq \Ebb_{|F_{n-1}}[(\gb_{n,v}^\t \thetab)^2] \leq \frac{4C_v^2}{(1-\gamma)^2} \qedhere
	\end{align*}
\end{proof}

From the above two lemmas, we see the main difference from the what we had in \cref{app:martingale concentration} for the tabular case is that, the martingale difference now scales in $O\left(\frac{C_v + dim(\thetab_\mub)}{1-\gamma}\right)$ instead of $O\left(\frac{|\SS||\AA|}{1-\gamma}\right)$, and its variance scales in $O\left(\frac{C_v^2 + dim(\thetab_\mub)}{(1-\gamma)^2}\right)$ instead of $O\left(\frac{|\SS||\AA|}{(1-\gamma)^2}\right)$. We note the constant $C_v$ is universal, independent of the problem size.

Following the similar steps in \cref{app:martingale concentration}, these new results imply that
\begin{align*}
P\left( \sum_{n=1}^{N} (\nabla h_n(\theta_n)-g_n)^\t \theta_n > \epsilon \right) \leq \exp\left(\frac{- \epsilon^2}{2 N \sigma^2 (1+\frac{b\epsilon}{3N\sigma^2})}   \right)
\end{align*}
with $b = O\left(\frac{C_v + dim(\thetab_\mub)}{1-\gamma}\right)$ and $O\left(\frac{C_v^2 + dim(\thetab_\mub)}{(1-\gamma)^2}\right)$. This implies that, with probability at least $1-\delta$, it hold
\begin{align*}
\sum_{n=1}^{N} (\nabla h_n(\theta_n)-g_n)^\t \theta_n
= \tilde{O}\left( \frac{\sqrt{N( C_v^2+dim(\thetab_\mub))\log(\frac{1}{\delta})}}{1-\gamma} \right)
\end{align*}

\subsection{Static Regret of Mirror Descent} 

Again the steps here are very similar to those in \cref{app:static regret of mirror descent}.
We concern bounding the static regret.
\begin{align*}
\max_{\theta\in\Theta} \sum_{n=1}^N g_n^\t (\theta_n -\theta)
\end{align*}
From \cref{app:static regret of mirror descent}, we recall this can be achieved by the mirror descent's optimality condition. The below inequality is true, for any $\theta'\in\Theta$:
\begin{align*}
\sum_{n=1}^{N} \lr{g_n}{\theta_{n} - \theta'} &\leq \frac{1}{\eta} B_R(\theta'||\theta_1) + \sum_{n=1}^{N} \lr{g_n}{\theta_{n+1} - \theta_n} - \frac{1}{\eta}  B_R(\theta_{n+1}||\theta_n)  
\end{align*} 
Based on our choice of Bregman divergence given in \eqref{eq:Bregman divergence choice (paramter space)}, i.e.
\begin{align} \tag{\ref{eq:Bregman divergence choice (paramter space)}}
\textstyle
B_R(\theta'||\theta) =  \frac{1}{2} \frac{dim(\theta_v)}{C_v^2} \norm{\thetab_v' - \thetab_v}_2^2 + KL(\thetab_\mu'||\thetab_\mu),
\end{align}
we have
$
\frac{1}{\eta}  B_R(\theta'||\theta_1)\leq \frac{\tilde{O}(1)}{\eta}
$.
For each $\lr{g_n}{\theta_{n+1} - \theta_n} - \frac{1}{\eta}  B_R(\theta_{n+1}||\theta_n)$, we will use again the two basic lemmas we proved in \cref{app:static regret of mirror descent}.
\GradientDescentBound*
\EXPBound*
Thus, we have the upper bound 
\begin{align*}
\lr{g_n}{\theta_{n+1} - \theta_n} - \frac{1}{\eta}  B_R(\theta_{n+1}||\theta_n)
= \frac{C_v^2}{dim(\theta_v)} \frac{\eta \norm{\gb_{n,v}}_2^2}{2} + \frac{\eta \norm{\gb_{n,\mu}}_{\thetab_{\mu,n}}^2}{2}
\end{align*}
Together with the upper bound on $\frac{1}{\eta}  B_R(x'||x_1)$, it implies that
\begin{align} \label{eq:initial telesum of md}
\sum_{n=1}^{N} \lr{g_n}{x_{n} - x'} &\leq  \frac{1}{\eta}  B_R(x'||x_1) + \sum_{n=1}^{N} \lr{g_n}{x_{n+1} - x_n} - \frac{1}{\eta} B_R(x_{n+1}||x_n)  \nonumber \\
&\leq \frac{\tilde{O}(1)}{\eta} + \frac{\eta}{2} \sum_{n=1}^{N} \frac{C_v^2}{dim(\theta_v)}  \norm{\gb_{n,v}}_2^2 + \norm{\gb_{n,\mu}}_{\thetab_{\mu,n}}^2
\end{align}

We can expect, with high probability, $\sum_{n=1}^{N} \frac{C_v^2}{dim(\theta_v)} \norm{\gb_{n,v}}_2^2 + \norm{\gb_{n,\mu}}_{\thetab_{\mu,n}}^2$ concentrates toward its expectation, i.e.
\begin{align*}
\sum_{n=1}^{N} \frac{C_v^2}{dim(\theta_v)} \norm{\gb_{n,v}}_2^2 + \norm{\gb_{n,\mu}}_{\thetab_{\mu,n}}^2 \leq \sum_{n=1}^{N} \E\left[\frac{C_v^2}{dim(\theta_v)}\norm{\gb_{n,v}}_2^2 + \norm{\gb_{n,\mu}}_{\thetab_{\mu,n}}^2\right] + o(N)
\end{align*}
To bound the right-hand side, we will use the upper bounds below, which largely follow the proof of \cref{lm:martingale difference for mu (parameter space)} and \cref{lm:martingale difference for v (paramter space)}.
\begin{lemma} \label{lm:size of sampled grad (parameter space)}
	$
	\E[\norm{\gb_{n,v}}_2^2] \leq \frac{4dim(\thetab_v)}{(1-\gamma)^2} $ and $\E[\norm{\gb_{n,\mu}}_{\thetab_{\mu,n}}^2] \leq \frac{4 dim(\thetab_\mu)}{(1-\gamma)^2}$.
\end{lemma}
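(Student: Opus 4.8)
The plan is to reuse, almost verbatim, the tabular computation behind \cref{lm:size of sampled grad}, changing only what is needed for the two quantities to be bounded in parameter space and for the two-stage sampling that now builds $\gb_{n,\mu}$. Concretely, I would treat the two inequalities separately: the $\vb$-bound follows from a deterministic H\"older estimate, while the $\mub$-bound uses the sparsity of the sampled basis matrix together with the fact that each column of $\Psib$ is a distribution. Throughout I would write $\kappa=\frac{1}{1-\gamma}$ and reuse the deterministic norm bounds already established inside the proofs of \cref{lm:martingale difference for v (paramter space)} and \cref{lm:martingale difference for mu (parameter space)}.

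For the $\vb$-component, I would set $\zb_n \coloneqq \tilde{\pb}_n + \frac{1}{1-\gamma}(\gamma\tilde{\Pb}_n - \Eb_n)^\t \tilde{\mub}_n$, so that $\gb_{n,v} = \Phib^\t \zb_n$; as noted in the proof of \cref{lm:martingale difference for v (paramter space)}, $\norm{\zb_n}_1 \leq \frac{2}{1-\gamma}$ holds surely (the vector is supported on at most three coordinates with weights $1$, $\frac{\gamma}{1-\gamma}$, $\frac{1}{1-\gamma}$). Since each column $\phib^{(i)}$ of $\Phib$ satisfies $\norm{\phib^{(i)}}_\infty \leq 1$ by the construction of $\XX_\Theta$, H\"older's inequality gives $|(\gb_{n,v})_i| = |\lr{\phib^{(i)}}{\zb_n}| \leq \frac{2}{1-\gamma}$ for every coordinate $i$, hence $\norm{\gb_{n,v}}_2^2 = \sum_{i=1}^{dim(\thetab_v)} \lr{\phib^{(i)}}{\zb_n}^2 \leq \frac{4\, dim(\thetab_v)}{(1-\gamma)^2}$. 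This bound is deterministic, so it holds in expectation as well.

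For the $\mub$-component, I would write $k_n$ for the uniformly sampled column index, $i_n$ for the state-action pair drawn from the $k_n$-th column distribution $\psib_\mu^{(k_n)}$ of $\Psib$, and $j_n$ for the sampled successor state. The sparsity of $\hat\Psib_n$ collapses the estimate onto a single coordinate, $\gb_{n,\mu} = dim(\thetab_\mu)\bigl(\kappa - r_{i_n} - \frac{\gamma v_{n,j_n} - v_{n,i_n}}{1-\gamma}\bigr)\eb_{k_n}$, so that $\norm{\gb_{n,\mu}}_{\thetab_{\mu,n}}^2 = \theta_{\mu,n,k_n}\, dim(\thetab_\mu)^2\bigl(\kappa - r_{i_n} - \frac{\gamma v_{n,j_n} - v_{n,i_n}}{1-\gamma}\bigr)^2$, where $\theta_{\mu,n,k_n}$ is the $k_n$-th entry of the iterate $\thetab_{\mu,n}$. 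Bounding the squared factor by $\frac{4}{(1-\gamma)^2}$ via $r_{i_n},v_{n,i_n},v_{n,j_n}\in[0,1]$ and $\kappa=\frac{1}{1-\gamma}$, and taking the conditional expectation as in the variance computation of \cref{lm:martingale difference for mu (parameter space)} — $k_n$ uniform over $dim(\thetab_\mu)$ values, $\thetab_{\mu,n}$ being $F_{n-1}$-measurable, and $\sum_{i_n}\psi^{(k_n)}_{\mu,i_n}=1$ since each column is a distribution — one power of $dim(\thetab_\mu)$ is absorbed by the $\frac{1}{dim(\thetab_\mu)}$ column probability and the remaining dependence disappears through $\sum_{k_n}\theta_{\mu,n,k_n}=\norm{\thetab_{\mu,n}}_1=1$, yielding $\E[\norm{\gb_{n,\mu}}_{\thetab_{\mu,n}}^2]\leq\frac{4\,dim(\thetab_\mu)}{(1-\gamma)^2}$. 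I do not expect a genuine obstacle; the only point needing care is the bookkeeping of the two factors of $dim(\thetab_\mu)$ together with the measurability of the iterate relative to the fresh sample, which is exactly what makes the second stage of sampling over $\psib_\mu^{(k_n)}$ harmless.
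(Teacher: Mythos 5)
Your proof is correct and follows essentially the same route the paper intends: the paper states this lemma without a separate proof, deferring to the computations in \cref{lm:martingale difference for v (paramter space)}, \cref{lm:martingale difference for mu (parameter space)}, and the tabular \cref{lm:size of sampled grad}, which is exactly the coordinatewise H\"older bound $|\lr{\phib^{(i)}}{\zb_n}|\leq \norm{\phib^{(i)}}_\infty\norm{\zb_n}_1\leq \frac{2}{1-\gamma}$ for the $\vb$-part and the single-coordinate expansion of $\gb_{n,\mu}$ with the $\frac{1}{dim(\thetab_\mu)}$ column probability and $\sum_{k}\theta_{\mu,n,k}=1$ absorbing the dimension factors for the $\mub$-part. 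Your bookkeeping of the two factors of $dim(\thetab_\mu)$ and the measurability of $\thetab_{\mu,n}$ with respect to $F_{n-1}$ is exactly right.
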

\begin{lemma}
	$\norm{\gb_{n,v}}_2^2 \leq \frac{4 dim(\thetab_v)}{(1-\gamma)^2}$ and $\norm{\gb_{n,\mu}}_{\thetab_{\mu,n}}^2 \leq \frac{4 dim(\thetab_\mu)^2}{(1-\gamma)^2}$.
\end{lemma}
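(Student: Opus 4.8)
The plan is to establish the two inequalities as the realisation-wise counterparts of \cref{lm:size of sampled grad (parameter space)} (the in-expectation estimate stated immediately above): I would start from the closed form of $g_n$ in \eqref{eq:stochastic gradient estimate (parameter space)} and simply replace the conditional expectation by a deterministic worst-case bound, using the structural assumptions that every column of $\Phib$ has $\norm{\cdot}_\infty\le 1$, every column of $\Psib$ is a probability distribution, and $\thetab_\mu\ge 0$ with $\norm{\thetab_\mu}_1=1$ for $\theta=(\thetab_v,\thetab_\mu)\in\Theta$ (hence every coordinate of $\thetab_{\mu,n}$ lies in $[0,1]$).

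For the first bound I would write $\gb_{n,v}=\Phib^\t\uv_n$ with $\uv_n\coloneqq\tilde\pb_n+\tfrac{1}{1-\gamma}(\gamma\tilde\Pb_n-\Eb_n)^\t\tilde\mub_n=\tfrac{1}{1-\gamma}\bigl((1-\gamma)\tilde\pb_n+\gamma\tilde\Pb_n^\t\tilde\mub_n-\Eb_n^\t\tilde\mub_n\bigr)$. Since $\tilde\pb_n$ and $\tilde\Pb_n^\t\tilde\mub_n$ are single-sample probability vectors, their $\gamma$-mixture is again a probability vector, and $\Eb_n^\t\tilde\mub_n$ is the indicator of the sampled state; so pointwise $\norm{\uv_n}_1\le\frac{2}{1-\gamma}$, exactly as in \cref{lm:size of a and b}. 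Each coordinate then satisfies $|[\gb_{n,v}]_i|=|\langle\phib^{(i)},\uv_n\rangle|\le\norm{\phib^{(i)}}_\infty\norm{\uv_n}_1\le\frac{2}{1-\gamma}$ by H\"older, and summing the $dim(\thetab_v)$ squared coordinates gives $\norm{\gb_{n,v}}_2^2\le\frac{4\,dim(\thetab_v)}{(1-\gamma)^2}$.

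For the second bound I would use that $\hat\Psib_n$ has a single nonzero entry --- column $k_n$, state--action pair $i_n$ --- so $\gb_{n,\mu}$ is a scalar multiple of the basis vector $\eb_{k_n}$, namely $\gb_{n,\mu}=dim(\thetab_\mu)\bigl(\kappa-r_{i_n}-\tfrac{\gamma v_{n,j_n}-v_{n,i_n}}{1-\gamma}\bigr)\eb_{k_n}$ with $\kappa=\tfrac{1}{1-\gamma}$, which is exactly the quantity already computed in the proof of \cref{lm:martingale difference for mu (parameter space)}. Using $r_{i_n}\in[0,1]$ and the value estimates lying in $[0,1]$ (as used there; otherwise a universal factor $C_v^2$ is carried along), one has $\bigl|\kappa-r_{i_n}-\tfrac{\gamma v_{n,j_n}-v_{n,i_n}}{1-\gamma}\bigr|\le\frac{2}{1-\gamma}$, whence $\norm{\gb_{n,\mu}}_{\thetab_{\mu,n}}^2=\theta_{\mu,n,k_n}\,dim(\thetab_\mu)^2\bigl(\kappa-r_{i_n}-\tfrac{\gamma v_{n,j_n}-v_{n,i_n}}{1-\gamma}\bigr)^2\le dim(\thetab_\mu)^2\cdot\frac{4}{(1-\gamma)^2}$ since $\theta_{\mu,n,k_n}\le 1$.

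I do not anticipate a genuine obstacle: the statement is the almost-sure analogue of an estimate already carried out in expectation, and the only point deserving a remark is the extra factor $dim(\thetab_\mu)$ relative to \cref{lm:size of sampled grad (parameter space)}. It appears because, after averaging, the $\mu$-block contributes $\sum_k\theta_{\mu,n,k}=1$, whereas pointwise the uniform-over-columns estimator forces the whole multiplier $dim(\thetab_\mu)$ onto the single sampled coordinate, and $\sum_k\theta_{\mu,n,k}^2\le\bigl(\sum_k\theta_{\mu,n,k}\bigr)^2=1$ is all that is available. This is precisely the $K$ versus $K^2$ discrepancy between the in-expectation and almost-sure gradient-norm bounds in the tabular analysis of \cref{app:static regret of mirror descent}.
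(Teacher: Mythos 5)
Your proposal is correct and follows essentially the same route the paper intends (the lemma is stated there without proof, as a pointwise analogue of \cref{lm:size of sampled grad (parameter space)} obtained by the computations in \cref{lm:martingale difference for v (paramter space)} and \cref{lm:martingale difference for mu (parameter space)}): the $\ell_1$ bound $\norm{\uv_n}_1\le\frac{2}{1-\gamma}$ plus H\"older per coordinate for the $v$-block, and the single-nonzero-entry structure of $\hat\Psib_n$ with $\theta_{\mu,n,k_n}\le1$ for the $\mu$-block. Your side remark about the hidden $C_v$ factor when $\norm{\vb_n}_\infty\le C_v$ rather than $1$ accurately reflects a constant the paper itself suppresses, and your explanation of the $dim(\thetab_\mu)$ versus $dim(\thetab_\mu)^2$ discrepancy matches the $K$ versus $K^2$ phenomenon in the tabular case.
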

By Azuma-Hoeffding's inequality in \cref{lm:martgingale (Hoeffding)},
\begin{align*}
\sum_{n=1}^{N} \frac{C_v^2}{dim(\theta_v)}  \norm{\gb_{n,v}}_2^2 + \norm{\gb_{n,\mu}}_{\thetab_{\mub,n}}^2 
&\leq   \sum_{n=1}^{N} \E\left[ \frac{C_v^2}{dim(\theta_v)} \norm{\gb_{n,v}}_2^2 +  \norm{\gb_{n,\mu}}_{\thetab_{\mub,n}}^2 \right] +
O\left(\frac{C_v^2+dim(\thetab_\mu)^2}{(1-\gamma)^2} \sqrt{N \log\left(\frac{1}{\delta}\right)}  \right) \\
&\leq  O\left( \frac{C_v^2+dim(\thetab_\mu)}{(1-\gamma)^2} N \right)+ O\left(\frac{C_v^2+dim(\thetab_\mu)^2}{(1-\gamma)^2} \sqrt{N \log\left(\frac{1}{\delta}\right)}  \right)
\end{align*}

Now we suppose we set $\eta = O\left(  \frac{1-\gamma}{\sqrt{N(C_v^2+dim(\thetab_\mu))}} \right)$. We have
\begin{align*}
\sum_{n=1}^{N} \lr{g_n}{\theta_{n} - \theta'} 
&\leq 
\frac{\tilde{O}(1)}{\eta} + \frac{\eta}{2} \sum_{n=1}^{N} \frac{C_v^2}{dim(\theta_v)}  \norm{\gb_{n,v}}_2^2 + \norm{\gb_{n,\mu}}_{\thetab_{\mu,n}}^2
 \leq  \tilde{O}\left(\frac{\sqrt{(C_v^2+dim(\thetab_\mu))N}}{1-\gamma}  \right)  
\end{align*}

\subsubsection{Union Bound}
Lastly we use an union bound to handle the term 
\begin{align*}
\sum_{n=1}^N   (g_n - \nabla h_n(\theta_n))^\t \theta_N^* 
\end{align*}
We follow the steps in \cref{app:union bound}: we will use again the fact that $\theta_N^*  = (\thetab_{v,N}^*,\thetab_{\mu}^*) \in \Theta$, so we can handle the part with $\thetab_{\mu}^*$ using the standard martingale concentration, and the part with $\thetab_{v,N}^*$ using the union bound.

Using the previous analyses, we see can first show that the martingale due to the part $\thetab_{\mu}^*$ concentrates in 
$
\tilde{O}\left( \frac{\sqrt{Ndim(\thetab_\mub) \log(\frac{1}{\delta})}}{1-\gamma} \right)
$.
Likewise, using the union bound, we can show 
 the martingale due to the part $\thetab_{v,N}^*$ concentrates in 
$
 \tilde{O}\left( \frac{\sqrt{N C_v^2\log(\frac{\NN }{\delta})}}{1-\gamma} \right)
$
where $\NN$ some proper the covering number of the set $\left\{\thetab_v :  \norm{\thetab_v}_2 \leq \frac{C_v}{\sqrt{dim(\thetab_v)}} \right\}$. 
Because $\log\NN = O(\dim(\thetab_v))$ for an Euclidean ball. We can combine the two bounds and show together
\begin{align*}
\sum_{n=1}^N   (g_n - \nabla h_n(\theta_n))^\t \theta_N^* = \tilde{O}\left( \frac{\sqrt{N (C_v^2\dim(\thetab_v) +dim(\thetab_\mub)) \log(\frac{1}{\delta})}}{1-\gamma} \right)
\end{align*}

\subsubsection{Summary}
Combining the results of the three parts above, we have, with probability $1-\delta$,
\begin{align*}
&\regret_N (y_{N,\theta}^*)\\
&\leq 
\left(\sum_{n=1}^{N} (\nabla h_n(\theta_n)-g_n)^\t \theta_n\right)  
+ \left( \max_{\theta\in\Theta} \sum_{n=1}^N g_n^\t (\theta_n -\theta) \right)+ \left(\sum_{n=1}^N   (g_n - \nabla h_n(\theta_n))^\t \theta_N^* \right)\\
&=  \tilde{O}\left( \frac{\sqrt{N(dim(\thetab_\mub) + C_v^2)\log(\frac{1}{\delta})}}{1-\gamma} \right) + \tilde{O}\left(\frac{\sqrt{(C_v^2+dim(\thetab_\mu))N}}{1-\gamma}  \right)  +  \tilde{O}\left( \frac{\sqrt{N (C_v^2\dim(\thetab_v) +dim(\thetab_\mub)) \log(\frac{1}{\delta})}}{1-\gamma} \right)\\
&=  \tilde{O}\left( \frac{\sqrt{N dim(\Theta) \log(\frac{1}{\delta})}}{1-\gamma} \right)
\end{align*}
where the last step is due to $C_v$ is a universal constant. Or equivalently, the above bounds means a sample complexity in $\tilde{O}\left( \frac{ dim(\Theta) \log(\frac{1}{\delta})}{(1-\gamma)^2 \epsilon^2} \right)$. Finally, we recall the policy performance has a bias $\epsilon_{\Theta,N}$ in \cref{cr:reduction for funcapp} due to using function approximators. Considering this effect, we have the final statement.
\clearpage

\end{document}